%% file: theory_paper_scorer_agnostic_revision.tex
\documentclass[10pt,twocolumn]{article}

\usepackage[margin=0.75in]{geometry}
\usepackage{amsmath,amssymb,amsthm,mathtools}
\usepackage{bm}
\usepackage{enumitem}
\usepackage{microtype}
\usepackage{booktabs}
\usepackage{multirow}
\usepackage{graphicx}
\usepackage{xcolor}
\usepackage[hidelinks]{hyperref}
\usepackage{algorithm}
\usepackage{algpseudocode}
\usepackage{tabularx}
\usepackage{caption}
\usepackage[numbers,sort&compress]{natbib}

\newtheorem{theorem}{Theorem}
\newtheorem{lemma}[theorem]{Lemma}
\newtheorem{proposition}[theorem]{Proposition}
\newtheorem{corollary}[theorem]{Corollary}
\theoremstyle{definition}
\newtheorem{assumption}{Assumption}

\theoremstyle{remark}
\newtheorem{remark}[theorem]{Remark}

\newcommand{\R}{\mathbb{R}}
\newcommand{\E}{\mathbb{E}}
\newcommand{\Prob}{\mathbb{P}}
\newcommand{\cX}{\mathcal{X}}
\newcommand{\KRR}{\mathrm{KRR}}
\newcommand{\FAR}{\mathrm{FAR}}
\newcommand{\pcap}{p_{\mathrm{cap}}}

\title{\textbf{Beyond the Simplex: Balanced Prototype Geometry for Scorer-Agnostic Open-Set Recognition}}

\author{
\parbox[t]{0.42\textwidth}{\centering
Mayank Sharma \\
\textit{Indian Institute of Technology Jodhpur} \\
\texttt{b23es1023@iitj.ac.in}
}
\hfill
\parbox[t]{0.42\textwidth}{\centering
Rohit Kumar Mourya \\
\textit{Indian Institute of Technology Jodhpur} \\
\texttt{b23es1029@iitj.ac.in}
}
}

\date{}

\begin{document}
\maketitle

%% ============================================================
%% ABSTRACT
%% ============================================================
\begin{abstract}
Open-set recognition (OSR) requires a classifier to reject inputs from unseen
classes which is essential in safety-critical settings such as medical imaging.
Simplex based methods, which fix class prototypes at the vertices of a regular
simplex and then reject via a distance-ratio score, perform well empirically but lack
theoretical justification, and existing analysis applies only when the embedding
dimension \(d\) is at least \(C-1\), which is the regime in which a regular simplex
exists.

We give a theoretical account of simplex-ratio OSR that holds in every embedding
dimension, including \(d < C-1\). Our analysis centers on balanced equal-norm codes: prototype configurations
with equal lengths and zero sum, which exist for all \(d \ge 2\) and include
the regular simplex as a special case. For these codes
we show that an auxiliary squared ratio score has sublevel sets that are exact
unions of Euclidean balls, which in turn bracket the acceptance region of the
operational score; and we prove a sharp dichotomy: the prototypes attain
one-distance symmetry, behaving like a regular simplex, if and only if
\(d \ge C-1\), with controlled degradation governed by an explicit defect
parameter below that threshold. We further show the false-acceptance rate decays
exponentially in \(d\) under natural isotropy assumptions, and that the
operational score is globally Lipschitz with compact acceptance regions.

Empirically, we study balanced prototype geometry as both an analytic tool and
a representation-learning prior, rather than as a stand-alone state-of-the-art
detector. Across CIFAR and MedMNIST open-set splits, the geometry provides
useful structure, but OSR performance remains strongly dependent on the scoring
rule: raw ratio scores typically underperform nearest-neighbor and logit-based
alternatives.
\end{abstract}

\noindent\textbf{Keywords:} Open-set recognition, simplex ETF, neural collapse, uncertainty quantification, theoretical guarantees, medical imaging, balanced equal-norm codes, low-dimensional embeddings.

%% ============================================================
%% 1. INTRODUCTION
%% ============================================================
\section{Introduction}\label{sec:intro}

Open-set recognition (OSR) formalizes the practical scenario where a classifier must simultaneously identify samples from known training classes and reject inputs from previously unseen classes at test time \cite{scheirer2012toward,geng2021recent}.
Unlike closed-set classification, which assumes the label space is shared between training and testing, OSR acknowledges the open-world nature of real deployments and requires explicit mechanisms for novelty detection \cite{sun2023comprehensive}.

This capability is especially critical in medical imaging, where rare diseases, imaging artifacts, and previously unidentified pathologies can arise at any time \cite{wang2023uncertainty}.
A model that confidently assigns a known-class label to a truly novel input can lead to misdiagnosis with severe clinical consequences.
Consequently, there is growing interest in OSR methods that are both accurate and theoretically well-founded for safety-critical applications.

\textbf{Neural collapse and simplex classifiers.}
Recent theoretical insights reveal that deep neural classifiers, during the terminal phase of training, exhibit a striking geometric convergence known as \emph{neural collapse} (NC) \cite{papyan2020prevalence}: last-layer features collapse to their class means, and these means align with the vertices of a simplex equiangular tight frame (ETF).
This observation has motivated a family of classifiers that fix the class prototypes at simplex ETF vertices from the outset, simultaneously maximizing both Euclidean and angular inter-class margins \cite{cevikalp2024dsc}.

The Deep Simplex Classifier (DSC) \cite{cevikalp2024dsc} trains features to cluster around fixed simplex vertices using an intra-class loss, with a triplet-based outlier-exposure term for background rejection.
The Uncertainty-aware Deep Simplex Classifier (UCDSC) \cite{aditya2025ucdsc} extends DSC by adding an uncertainty-aware regularization term $L_u$, inspired by the distance-ratio scoring of Cao et al.~\cite{cao2021openset}, that penalizes the open space between class prototypes.
UCDSC reports competitive performance on multiple MedMNIST benchmarks \cite{yang2021medmnist} and the Augmented Skin Conditions dataset \cite{naqvi2023skin}.

\textbf{The theory gap.}
Despite these empirical results, neither DSC nor UCDSC provides formal theoretical analysis.
Fundamental questions remain unanswered:
\emph{Why does the simplex-ratio score $U$ work for open-set detection?
Under what conditions is the acceptance region compact?
How does the false acceptance rate (FAR) scale with embedding dimension?
Why does the $L_u$ term help even without background data?
How should the threshold $\theta$ be chosen to guarantee a target known-class recall?}
Moreover, existing simplex-based methods assume $d\ge C-1$ in order to realize a regular simplex, yet practical embedding dimensions can be small relative to the number of classes.
\emph{Does the rejection geometry collapse when $d<C-1$, or do the core guarantees survive?}

\textbf{Our contributions.}
We develop a unified theoretical framework for simplex-ratio OSR that works for \emph{all} prototype configurations, not only the regular simplex, and that covers \emph{all} embedding dimensions including $d<C-1$.
Our main contributions are:

\begin{enumerate}[leftmargin=1.6em,itemsep=2pt]
\item \textbf{General Lipschitz analysis and ball control (Section~\ref{sec:concentration}).}
For \emph{arbitrary distinct} prototypes (no simplex assumption), we establish global Lipschitz continuity of $U$ with constant $4/\Delta$ where $\Delta$ is the minimum inter-prototype distance, and prove that $U$-acceptance regions are compact sets controlled by explicit prototype-centered inner and outer ball unions (Theorem~\ref{thm:generic-ball}).
For the regular-simplex special case this specializes to $4/D$.
Under assumption A5, we derive class-conditional quantile bounds enabling population recall control (Theorem~\ref{thm:far-krr}(i)).

\item \textbf{Exact $U_2$ geometry for balanced equal-norm codes (Section~\ref{sec:geometry}).}
We introduce \emph{balanced equal-norm codes} ($\|s_j\|=R$, $\sum_j s_j=0$), which exist in every dimension $d\ge 2$ and include regular simplices as a special case.
For these codes, we characterize $U_2$-sublevel sets as exact unions of at most $C$ closed Euclidean balls with explicit centers and radii, and show that $U$-acceptance regions lie sandwiched between corresponding ball unions (Theorem~\ref{thm:geometry-U2}).

\item \textbf{Sharp dichotomy and defect parameter (Section~\ref{sec:dichotomy}).}
We introduce the simplex-defect parameter $\lambda_j=\sqrt{B_j}/A_j\ge 1$, where $A_j$ and $B_j$ are the mean and mean-squared rival distances from class $j$.
We prove a sharp dichotomy: $\lambda_j=1$ for all $j$ if and only if the prototypes form a regular simplex, which requires $d\ge C-1$ (Theorem~\ref{thm:dichotomy}).
Below this threshold, we establish a local equivalence $U\asymp\sqrt{U_2}$ with explicit constants depending on $\lambda_j$ (Theorem~\ref{thm:local-comparison}).

\item \textbf{Concrete low-dimensional construction (Section~\ref{sec:polygon}).}
We provide a universal construction---the regular $C$-gon embedded in $\R^d$ for any $d\ge 2$---with fully explicit formulas for all geometric statistics, showing that the framework has substantive content in every low-dimensional regime.

\item \textbf{Conditional FAR bounds (Section~\ref{sec:far}).}
Under normalized unknown directions, we prove a FAR upper bound of the form $C\exp(-c(\theta)d)+C\,\delta_{\mathrm{cap}}$ and, under exact isotropy, a conservative spherical-cap lower bound.
Under the radial-isotropic unknown-feature model, we prove an exponential-in-$d$ FAR upper bound (Theorem~\ref{thm:far-krr}(iii)--(iv)).

\item \textbf{UCDSC loss analysis (Section~\ref{sec:loss-analysis}).}
We show that the three UCDSC loss components play complementary but different roles: Proposition~\ref{prop:intra} controls the average training score, Theorem~\ref{thm:combined} gives a conditional empirical separation guarantee for training knowns versus auxiliary background samples, and Proposition~\ref{prop:grad} together with Corollary~\ref{cor:lu} provide a local negative-gradient interpretation for $L_u$.

\item \textbf{Practical corollaries (Section~\ref{sec:corollaries}).}
We derive closed-form guidance for threshold selection, expand-factor effects, and a sufficient embedding dimension for a target FAR.
We also provide a concrete, validation-based proxy for estimating the rescaled score-tail parameter $\sigma$ from data (Algorithm~\ref{alg:sigma}), making the population threshold rule operational up to the usual finite-sample caveats.

\item \textbf{Scorer-agnostic empirical validation (Section~\ref{sec:experiments}).}
We implement a validation-calibrated representation-by-scorer protocol. Each representation is trained once, then evaluated with \(U\)-unsquared, \(U_2\)-squared, min-distance, KNN50, MSP, MaxLogit, Energy, ODIN, ViM, ReAct, and OpenMax where applicable. The completed slices support presenting balanced prototype geometry as an analytic and representation-learning prior rather than as a claim that the raw ratio score dominates standard post-hoc detectors.
\end{enumerate}

\paragraph{Summary of theoretical results.}
The analysis establishes five facts used throughout the paper.
First, for any distinct prototypes, the operational score $U(z)=d_{\min}(z)/\mu(z)$ is globally Lipschitz with constant $4/\Delta$, and its acceptance regions are compact sets bounded by explicit prototype-centred ball unions (Theorem~\ref{thm:generic-ball}).
Second, for balanced equal-norm codes, the auxiliary score $U_2$ has exact sublevel sets given by a union of at most $C$ Euclidean balls, with matching inner and outer ball controls for $U$ (Theorem~\ref{thm:geometry-U2}).
Third, within the balanced equal-norm class, the condition $\lambda_j=1$ for all $j$ is equivalent to regular-simplex geometry and therefore requires $d\ge C-1$ (Theorem~\ref{thm:dichotomy}).
Fourth, under the normalized-route assumptions A6/A6$'$, the false-acceptance probability admits a dimension-dependent upper bound involving spherical caps and the cap-discrepancy term $\delta_{\mathrm{cap}}$ (Theorem~\ref{thm:far-krr}).
Finally, the loss analysis gives deterministic training/background score separation under explicit closeness conditions and a local negative-gradient interpretation of the $L_u$ term (Proposition~\ref{prop:intra}, Theorem~\ref{thm:combined}, Proposition~\ref{prop:grad}, and Corollary~\ref{cor:lu}).

\textbf{Paper structure.}
Section~\ref{sec:related} reviews related work.
Section~\ref{sec:prelim} states the formal setup, introducing the hierarchy of prototype assumptions (A1/A1$'$/A1$''$) and connecting the two score variants ($U$ and $U_2$).
Section~\ref{sec:concentration} establishes Lipschitz analysis and ball control for general prototypes.
Section~\ref{sec:geometry} develops exact $U_2$ geometry under balanced equal-norm codes, the sharp dichotomy theorem, local comparison bounds, and the $C$-gon construction.
Sections~\ref{sec:far}--\ref{sec:corollaries} develop FAR bounds, loss analysis, and practical corollaries.
Section~\ref{sec:experiments} reports scorer-agnostic validation experiments and failure analysis for the raw ratio scores.
Section~\ref{sec:conclusion} concludes.
Full proofs appear in the Appendix.

%% ============================================================
%% 2. RELATED WORK
%% ============================================================
\section{Related Work}\label{sec:related}

\textbf{Open-set recognition.}
The foundations of OSR were laid by Scheirer et al.~\cite{scheirer2012toward}, who formalized open-space risk and later proposed compact abating probability (CAP) models \cite{scheirer2014probability}.
Subsequent work explored diverse rejection mechanisms: OpenMax \cite{bendale2016towards} adapts extreme-value calibration to softmax outputs, DOC \cite{shu2017doc} uses per-class sigmoid thresholds, and reconstruction-based approaches like CROSR \cite{yoshihashi2019classification} and C2AE \cite{oza2019c2ae} leverage reconstruction error as a novelty signal.
Distance-based methods represent another strand, including NNO \cite{bendale2015towards}, OSNN \cite{mendes2017nearest}, EVM \cite{rudd2018extreme}, and class anchor clustering \cite{miller2021class}.

\textbf{Prototype and simplex methods.}
More recent work has turned toward prototype-based approaches for their interpretability and geometric structure. GCPL \cite{yang2018robust} learns class prototypes with an explicit loss, while RPL \cite{chen2020learning} introduces reciprocal points to explicitly model extra-class space. This idea was refined by ARPL \cite{chen2022adversarial}, which adds adversarial margin constraints. In parallel, methods like DIAS \cite{moon2022difficulty} and OMCL \cite{liu2023learning} use feature generation and angular margins to control open space. More directly relevant to this work, DSC \cite{cevikalp2024dsc} fixes prototypes at simplex ETF vertices and employs intra-class clustering with triplet-based outlier exposure \cite{hendrycks2019deep}. UCDSC \cite{aditya2025ucdsc} extends this by adding an uncertainty-aware regularization that penalizes high inter-prototype distances. Most recently, Vishal et al.~\cite{vishal2026dmdsc} extend this line of work with class-specific dynamic margins that scale with label frequency, addressing extreme class imbalance in medical OSR; our theoretical framework targets the balanced equal-norm geometry itself and is orthogonal to such margin scheduling.

\textbf{Neural collapse and simplex geometry.}
The simplex structure underlying DSC and UCDSC connects to the broader phenomenon of neural collapse \cite{papyan2020prevalence}, which describes how features and classifiers converge to simplex ETF geometry during training. Subsequent theoretical work \cite{zhu2021geometric,yaras2022neural,thrampoulidis2022imbalance,gill2024engineering} has characterized neural collapse under various conditions. Our analysis differs in scope: rather than explaining why training produces simplex geometry, we provide guarantees for the \emph{downstream open-set recognition} task that simplex-aligned embeddings enable.

\textbf{Medical imaging OSR.}
OSR applications in medical imaging have grown in importance, with work exploring MedMNIST benchmarks \cite{yang2021medmnist}, evidential uncertainty quantification \cite{wang2023uncertainty}, and distillation-based improvements \cite{jia2024revealing}. However, no prior work establishes theoretical guarantees specific to simplex-based OSR in medical or clinical settings.

%% ============================================================
%% 3. PRELIMINARIES AND PROBLEM SETUP
%% ============================================================
\section{Preliminaries}\label{sec:prelim}

We state the formal assumptions underpinning our framework and fix notation used throughout.

\noindent\textbf{Notation and score variants.}
We write $[C]:=\{1,\dots,C\}$, $[x]_+:=\max(x,0)$, and use $\Prob$ and $\E$ for probability and expectation. Both UCDSC training and the reject rule operate on the unsquared distance-ratio score $U(z)=d_{\min}(z)/\mu(z)$. We show that $U$ is globally Lipschitz for \emph{any} distinct prototypes (Section~\ref{sec:concentration}). For geometric tractability, we additionally introduce a squared variant $U_2(z)$ (Assumption~\ref{ass:scores}) whose sublevel sets admit closed-form descriptions under the balanced equal-norm condition A1$'$ (Section~\ref{sec:geometry}). Proposition~\ref{prop:global-comparison} establishes the global lower bound $\sqrt{U_2(z)}\le U(z)$. Under the balanced equal-norm condition A1$'$, Theorem~\ref{thm:geometry-U2}(iv) yields the global sandwich $\{U_2\le \theta^2/2\}\subseteq\{U\le\theta\}\subseteq\{U_2\le\theta^2\}$; under the simplex (A1$''$), Lemma~\ref{lem:rms-am} improves the inner constant to $3\theta^2/4$. Near each prototype, Theorem~\ref{thm:local-comparison} further sharpens the comparison with explicit dependence on the simplex-defect parameter $\lambda_j$.

\begin{assumption}[A0: Basic objects]\label{ass:basic}
Let $C\ge 2$ be the number of known classes and $d\ge 2$ the embedding dimension.
Let $f:\cX\to\R^d$ be the learned embedding.
The known distribution is $P_K$ on $(X,Y)\in\cX\times[C]$, with $Z:=f(X)$.
The unknown distribution is $P_U$ on $X\in\cX$, with $Z_U:=f(X)$.
The auxiliary background distribution is $P_B$ on $X_B\in\cX$, with $Z_B:=f(X_B)$.
\end{assumption}

We introduce a hierarchy of prototype assumptions of increasing strength.
The weakest (A1) requires only distinct prototypes and suffices for Lipschitz regularity and compact acceptance regions.
The intermediate (A1$'$) adds balanced equal-norm structure and enables exact $U_2$ ball geometry.
The strongest (A1$''$) recovers the regular simplex as a special case.

\begin{assumption}[A1: Distinct prototypes]\label{ass:distinct}
Prototypes $\{s_1,\dots,s_C\}\subset\R^d$ are distinct. Define the geometric statistics:
\begin{align}
A_j&:=\frac{1}{C-1}\sum_{k\neq j}|s_j-s_k|, & B_j&:=\frac{1}{C-1}\sum_{k\neq j}|s_j-s_k|^2,\label{eq:Aj-Bj}\\
\Delta_j&:=\min_{k\neq j}|s_j-s_k|, & \lambda_j&:=\frac{\sqrt{B_j}}{A_j}\ge 1.\label{eq:Deltaj-lambdaj}
\end{align}
Set $A_{\min}:=\min_j A_j$, $A_{\max}:=\max_j A_j$, $\Delta:=\min_j\Delta_j$.
We call $\lambda_j$ the \emph{simplex-defect parameter} of class $j$: it equals $1$ if and only if all rival distances from $s_j$ are equal.
\end{assumption}

\begin{assumption}[A1$'$: Balanced equal-norm code]\label{ass:balanced}
The prototypes satisfy A1 and additionally:
\begin{equation}\label{eq:balanced}
\|s_j\|=R\quad\forall j,\qquad \sum_{j=1}^C s_j=0,
\end{equation}
for some $R>0$.
Such codes exist in every dimension $d\ge 2$ for every $C\ge 3$: a regular $C$-gon in a $2$-plane embedded in $\R^d$ suffices (see Section~\ref{sec:polygon}).
\end{assumption}

\begin{assumption}[A1$''$: Simplex-ETF prototypes]\label{ass:simplex}
The prototypes satisfy A1$'$ and additionally form a regular simplex:
\begin{equation}\label{eq:simplex}
\langle s_i,s_j\rangle=-\frac{R^2}{C-1}\quad(i\neq j).
\end{equation}
The common edge length is $D:=\|s_i-s_j\|=R\sqrt{2C/(C-1)}$.
This requires $d\ge C-1$.
Under A1$''$, $A_j=D$, $B_j=D^2$, $\Delta_j=D$, and $\lambda_j=1$ for all $j$.
\end{assumption}

\begin{assumption}[A2: Scores]\label{ass:scores}
For $z\in\R^d$, let $d_j(z):=\|z-s_j\|$, $d_{\min}(z):=\min_j d_j(z)$, $j^*(z):=\arg\min_j d_j(z)$ (with ties broken by choosing the smallest index), and
\begin{subequations}\label{eq:scores}
\begin{align}
\mu(z)&:=\frac{1}{C-1}\sum_{k\neq j^*(z)} d_k(z),\qquad
U(z):=\frac{d_{\min}(z)}{\mu(z)},\label{eq:mu}\\
\nu(z)&:=\frac{1}{C-1}\!\left(\sum_j\|z-s_j\|^2-\min_j\|z-s_j\|^2\right),\label{eq:nu}\\
U_2(z)&:=\frac{\min_j\|z-s_j\|^2}{\nu(z)},\qquad
\alpha(z):=\max_j\langle z,s_j\rangle.\label{eq:U2-alpha}
\end{align}
\end{subequations}
Note that $U_2$ is a distinct score, not the square of $U$.
\end{assumption}

\noindent\textbf{Tie-breaking.}
When the minimizer is not unique, we define $j^*(z):=\min\arg\min_{j\in[C]} d_j(z)$ (smallest index among minimizers).
\emph{Remark:} ties occur only on a finite union of bisector hyperplanes and hence on a Lebesgue-null set; this convention does not affect any probabilistic statements under absolutely continuous distributions (which includes all models considered in this paper).

\begin{assumption}[A3: Reject rule]\label{ass:reject}
Accept as known iff $U(z)\le\theta$ for threshold $\theta\in[0,1]$.
The closed-set label is $\hat y(z):=j^*(z)$.
\end{assumption}

\begin{assumption}[A4: UCDSC loss]\label{ass:loss}
Given training data $\{(x_i,y_i)\}_{i=1}^n$ with $y_i\in[C]$ and background samples $\{x_k^{\mathrm{bg}}\}_{k=1}^K$, define $f_i:=f(x_i)$ and $f_k^{\mathrm{bg}}:=f(x_k^{\mathrm{bg}})$.
Let $m>0$ be a margin parameter and $\lambda_o,\lambda_u\ge 0$ be loss weights.
The total loss is $L_{\mathrm{total}}=L_{\mathrm{intra}}+\lambda_o L_o+\lambda_u L_u$ where:
\begin{align}
L_{\mathrm{intra}}&:=\frac{1}{n}\sum_{i=1}^n\|f_i-s_{y_i}\|^2, \label{eq:Lintra}\\
L_o&:=\sum_{i=1}^n\sum_{k=1}^K\max\!\bigl(0,m+\|f_i-s_{y_i}\|^2-\|f_k^{\mathrm{bg}}-s_{y_i}\|^2\bigr),\label{eq:Lo}\\
L_u&:=\frac{1}{n}\sum_{i=1}^n U(f_i).\label{eq:Lu}
\end{align}
\textbf{Norm convention.}
Note that $U$ in~\eqref{eq:mu} (and hence $L_u$) uses \emph{unsquared} Euclidean norms $\|z-s_j\|$, matching the UCDSC definition \cite{aditya2025ucdsc}, whereas $L_{\mathrm{intra}}$~\eqref{eq:Lintra} and $L_o$~\eqref{eq:Lo} use squared norms $\|z-s_j\|^2$.
\end{assumption}

\begin{assumption}[A5: One-sided score-tail control]\label{ass:conc}
For each class $y$, the uncertainty score $U(Z)\mid Y\!=\!y$ satisfies a one-sided sub-Gaussian upper-tail inequality about its mean with scale $\tau_y$: for all $t\ge 0$,
\begin{equation}\label{eq:concentration}
\Prob\!\left(U(Z)-\E[U(Z)\mid Y\!=\!y]\ge t\mid Y\!=\!y\right)
\le
\exp\!\left(-\frac{t^2}{2\tau_y^2}\right),
\end{equation}
with $\tau:=\max_y\tau_y$.
We allow the limiting case $\tau_y=0$, interpreted as zero upper-tail mass above the mean.
We define $\sigma_y:=\tau_y/L_U$ where $L_U$ is the Lipschitz constant of $U$: under A1 alone one may take $L_U=4/\Delta$; under A1$''$ (simplex), $\Delta=D$ and hence $L_U=4/D$.
Set $\sigma:=\max_y\sigma_y$, so that $\tau=L_U\sigma$.
Since $U(Z)\in[0,1]$, Hoeffding's lemma yields~\eqref{eq:concentration} with the universal choice $\tau_y=1/2$.
In the results below, A5 is used only through whatever sharper class-specific values of $\tau_y$ are separately assumed or validated from data.
\end{assumption}

\begin{remark}[Scope of A5]\label{rem:a5-scope}
A5 is a condition on the \emph{scalar} score $U(Z)\mid Y\!=\!y$.
The paper does not claim that simplex geometry or UCDSC training alone implies A5.
If one has an independent feature-space concentration theorem for $Z\mid Y\!=\!y$ saying that every Euclidean $L$-Lipschitz observable $g$ satisfies
\[
\Prob(g(Z)-\E[g(Z)\mid Y\!=\!y]\ge t\mid Y\!=\!y)
\le \exp\!\left(-\frac{t^2}{2L^2\rho_y^2}\right),
\]
for a class-conditional feature-space scale $\rho_y$, then applying it to $g=U$ (with $L=L_U$) yields~\eqref{eq:concentration} with $\tau_y=L_U\rho_y$.
This external scale $\rho_y$ is distinct from the rescaled score-tail parameter $\sigma_y:=\tau_y/L_U$ unless such a feature-space theorem is actually being invoked.
We do not rely on any specific such theorem here.
\end{remark}

\begin{assumption}[A6: Unknown features]\label{ass:unknown}
Two analysis routes, each specifying a pre-processing map $\Phi$:
\textbf{(U-Norm):} $\Phi(z)=Rz/\|z\|$ for $z\neq 0$, and $\Phi(0):=0$ by convention (this is immaterial for the FAR bounds below, which are stated for $\theta<1$, because then $U(0)=1$ and the origin is rejected); assume that for all unit vectors $v\in S^{d-1}$ and all $t\in[0,1]$, $|\Prob(\langle\Phi(Z_U)/R,v\rangle\ge t\mid Z_U\neq 0)-\pcap(t;d)|\le\delta_{\mathrm{cap}}$, where $\delta_{\mathrm{cap}}\ge 0$ is a fixed constant.
\textbf{(U-RadIso):} $\Phi=\mathrm{id}$ (no normalization); write $Z_U=R_UV_U$ with $R_U:=\|Z_U\|$ and $V_U\mid R_U$ uniform on $S^{d-1}$.
\end{assumption}

\begin{assumption}[A6$'$: Cap tail]\label{ass:cap}
$\pcap(t;d):=\Prob(\langle W,v\rangle\ge t)\le\exp(-(d-1)t^2/2)$ for any fixed $v\in S^{d-1}$, $t\in[0,1]$, and $W\sim\mathrm{Unif}(S^{d-1})$.
The quantity is independent of $v$ by rotational symmetry.
\end{assumption}

We define the operating metrics (where $\Phi$ is specified by the chosen model in A6):
\begin{equation}\label{eq:metrics}
\FAR(\theta):=\Prob(U(\Phi(Z_U))\le\theta),\quad
\KRR(\theta):=\Prob(U(Z)>\theta).
\end{equation}

%% ============================================================
%% 4. LIPSCHITZ ANALYSIS AND CONCENTRATION
%% ============================================================
\section{Lipschitz Analysis, Ball Control, and Concentration}\label{sec:concentration}

We begin by establishing results for \emph{arbitrary distinct} prototypes (A1 only), requiring no simplex or balanced-code structure. We then record the simplex (A1$''$) specialization, where $\Delta=D$.

\begin{theorem}[Generic ball control and Lipschitz stability]\label{thm:generic-ball}
Assume A1 (distinct prototypes) and A2, with $C\ge 2$. Then for every $\theta\in(0,1)$:
\[
\bigcup_{j=1}^C B\!\left(s_j,\; r_j^-\right)\subseteq \{U\le\theta\} \subseteq \bigcup_{j=1}^C \overline B\!\left(s_j,\; r_j^+\right),
\]
where
\[
r_j^-:=\min\!\left\{\frac{\Delta_j}{2},\;\frac{\theta A_j}{1+\theta}\right\},\qquad
r_j^+:=\frac{\theta A_j}{1-\theta}.
\]
In particular $\{U\le\theta\}$ is compact.
Moreover, $U$ is globally Lipschitz:
\begin{equation}\label{eq:lip-general}
|U(z)-U(w)|\le \frac{4}{\Delta}\|z-w\|,\qquad \Delta:=\min_{i\neq j}|s_i-s_j|.
\end{equation}
\end{theorem}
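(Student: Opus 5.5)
The plan is to handle the three claims (inner inclusion, outer inclusion with compactness, and the Lipschitz bound) separately. All three rest on one elementary tool, the triangle inequality in the form
\[
\bigl|\,d_k(z)-\|s_j-s_k\|\,\bigr|\le d_j(z),
\]
together with a tie-breaking-free rewriting of $\mu$.

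\textbf{Rewriting $\mu$.} I will first note that $(C-1)\mu(z)=\sum_k d_k(z)-d_{\min}(z)$. This is the sum of the $C-1$ largest distances, so it does not depend on how ties in $j^*$ are broken. Consequently $U=d_{\min}/\mu$ is a well-defined function of the distance vector $(d_1,\dots,d_C)$, and since $d_{\min}\le\mu$ we have $U\le 1$. I will also record a lower bound on $\mu$. For $k\neq j^*$ we have $d_k\ge d_{j^*}$, and the triangle inequality gives $d_k+d_{j^*}\ge\|s_k-s_{j^*}\|\ge\Delta$. Hence $2d_k\ge\Delta$ for every $k\ne j^*$, and averaging yields $\mu(z)\ge\Delta/2$ for all $z$.

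\textbf{Inner inclusion.} Take $z$ with $r:=\|z-s_j\|<r_j^-$.
\begin{itemize}[leftmargin=1.4em,itemsep=1pt]
\item Since $r<\Delta_j/2$, every $k\ne j$ satisfies $d_k(z)\ge\Delta_j-r>r$. So $j^*(z)=j$ strictly, with no tie, and $d_{\min}=r$.
\item Averaging $d_k\ge\|s_j-s_k\|-r$ over $k\neq j$ gives $\mu(z)\ge A_j-r>0$.
\item Therefore $U(z)\le r/(A_j-r)$. This is at most $\theta$ exactly when $r\le\theta A_j/(1+\theta)$, which holds.
\end{itemize}

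\textbf{Outer inclusion and compactness.} Suppose $U(z)\le\theta$ and let $j=j^*(z)$.
\begin{itemize}[leftmargin=1.4em,itemsep=1pt]
\item The bound $d_k\le\|s_j-s_k\|+d_j$ averages to $\mu\le A_j+d_j$.
\item Then $d_j\le\theta\mu\le\theta(A_j+d_j)$, and since $\theta<1$ this rearranges to $d_j\le\theta A_j/(1-\theta)=r_j^+$.
\item The sublevel set $\{U\le\theta\}$ is closed because $U$ is continuous (a consequence of the Lipschitz bound below). It is bounded by the outer ball union, so it is compact.
\end{itemize}

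\textbf{Lipschitz bound.} I expect this to be the main obstacle, because $j^*$ jumps across bisectors and a naive gradient argument would have to handle those nonsmooth points. I will avoid gradients entirely and argue directly with difference quotients. Fix $z,w$ and set $\delta:=\|z-w\|$. Write $a=d_{\min}(z)$, $a'=d_{\min}(w)$, $\mu=\mu(z)$, $\mu'=\mu(w)$.
\begin{itemize}[leftmargin=1.4em,itemsep=1pt]
\item Each $d_k$ is $1$-Lipschitz, so $|d_k(z)-d_k(w)|\le\delta$ for every $k$.
\item The minimum and the sum of the $C-1$ largest entries are $1$- and $(C-1)$-Lipschitz in the sup norm, respectively. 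This gives $|a-a'|\le\delta$ and $|\mu-\mu'|\le\delta$.
\item Then
\[
|U(z)-U(w)|=\frac{|a\mu'-a'\mu|}{\mu\mu'}\le\frac{|a-a'|\,\mu'+a'\,|\mu'-\mu|}{\mu\mu'}\le\frac{\delta}{\mu}\Bigl(1+\frac{a'}{\mu'}\Bigr)\le\frac{2\delta}{\mu}.
\]
The last step uses $a'/\mu'=U(w)\le 1$.
\item Finally, $\mu\ge\Delta/2$ from the first step yields the constant $4/\Delta$.
\end{itemize}

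The only points needing care are the tie-independence of $\mu$ and the strictness in the inner ball that rules out ties. Both are handled by the sorted-distances formulation above.
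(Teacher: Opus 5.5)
Your proposal is correct and follows the paper's proof essentially step for step. The outer ball comes from averaging the triangle inequality to get $\mu\le A_j+d_{\min}$. The inner ball comes from $\mu\ge A_j-r$ once you have identified the nearest prototype strictly. The Lipschitz bound uses the same tie-free identity $(C-1)\mu=\sum_k d_k-d_{\min}$: your ``sum of the $C-1$ largest distances'' is exactly the paper's $\mu=\max_j\mu_j$. It also uses the same bound $\mu\ge\Delta/2$ and the same quotient estimate. The only cosmetic difference is that you divide by $\mu(z)$ and absorb the factor $U(w)\le 1$, whereas the paper divides by $\min\{\mu(z),\mu(w)\}$; both give the constant $4/\Delta$.
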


\begin{proof}
Let $z\in\R^d$, $j=j^*(z)$, $r=d_j(z)=d_{\min}(z)$.

\emph{Outer bound.}
For every $k\neq j$, triangle inequality gives $d_k(z)\le r+|s_j-s_k|$. Averaging over $k\neq j$: $\mu(z)\le r+A_j$. Hence $U(z)=r/\mu(z)\ge r/(r+A_j)$.
So if $U(z)\le\theta$, then $r/(r+A_j)\le\theta$, giving $r\le \theta A_j/(1-\theta)=r_j^+$.

\emph{Inner bound.}
Fix $j$ and suppose $|z-s_j|<\Delta_j/2$. Then for every $k\neq j$, $d_k(z)\ge |s_k-s_j|-|z-s_j|>\Delta_j/2>|z-s_j|$, so $j$ is the unique nearest prototype. Therefore $\mu(z)\ge A_j-|z-s_j|$, hence $U(z)\le |z-s_j|/(A_j-|z-s_j|)$. If $|z-s_j|\le\theta A_j/(1+\theta)$, the right side is at most $\theta$.

\emph{Lipschitz bound.}
$d_{\min}$ is $1$-Lipschitz.
$\mu(z)=\frac{1}{C-1}(\sum_m d_m(z)-d_{\min}(z))$, which is independent of how ties for $j^*(z)$ are broken.
Equivalently, if $\mu_j(z):=(C-1)^{-1}\sum_{k\neq j}d_k(z)$, then $\mu(z)=\max_j\mu_j(z)$ because removing a nearest prototype leaves the largest rival average.
Each $\mu_j$ is $1$-Lipschitz, so $\mu$ is $1$-Lipschitz.
For any $k\neq j^*(z)$: $d_k(z)+d_{\min}(z)\ge |s_k-s_{j^*}|\ge \Delta$, so $d_k(z)\ge \Delta/2$, hence $\mu(z)\ge \Delta/2$.
The quotient rule gives~\eqref{eq:lip-general}.
The outer inclusion gives boundedness of $\{U\le\theta\}$, and Lipschitz continuity gives closedness; hence the acceptance region is compact.
\end{proof}

Under the simplex assumption A1$''$, this becomes the following specialization:

\begin{lemma}[Lipschitz continuity under A1$''$]\label{lem:lipschitz}
Under A1$''$--A2:
(i) $d_k(z)\ge D/2$ for all $k\neq j^*(z)$; in particular, $\mu(z)\ge D/2$;
(ii) $|U(z)-U(z')|\le(4/D)\|z-z'\|$ for all $z,z'$.
\end{lemma}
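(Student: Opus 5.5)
This lemma is a direct specialization of Theorem~\ref{thm:generic-ball}, so the plan is to invoke that theorem and substitute the simplex value of $\Delta$. Under A1$''$ all pairwise distances equal the common edge length $D=R\sqrt{2C/(C-1)}$. Hence $\Delta_j=D$ for every $j$ and $\Delta=D$.

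\emph{Part (i).} Fix $z$, write $j=j^*(z)$, and take any $k\neq j$. The triangle inequality gives
\[
d_k(z)+d_j(z)\ge \|s_k-s_j\|=D.
\]
Since $j$ is a nearest prototype, $d_j(z)=d_{\min}(z)\le d_k(z)$. Combining the two gives $2d_k(z)\ge D$, that is, $d_k(z)\ge D/2$. This is exactly the argument in the Lipschitz part of the proof of Theorem~\ref{thm:generic-ball}, now with $\Delta$ replaced by $D$. Averaging over the $C-1$ indices $k\neq j$ then yields $\mu(z)\ge D/2$. Tie-breaking does not matter here: the bound holds for whichever minimizer is chosen, and $\mu$ is in any case tie-independent, as noted in that proof.

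\emph{Part (ii).} I would reuse the three ingredients from Theorem~\ref{thm:generic-ball}:
\begin{itemize}
\item $d_{\min}$ is $1$-Lipschitz;
\item $\mu=\max_j\mu_j$ is $1$-Lipschitz;
\item $0\le d_{\min}\le\mu$, because the nearest distance is at most every rival distance.
\end{itemize}
Then split the difference as
\[
U(z)-U(z')=\frac{d_{\min}(z)-d_{\min}(z')}{\mu(z)}+d_{\min}(z')\,\frac{\mu(z')-\mu(z)}{\mu(z)\mu(z')}.
\]
The first term is at most $\|z-z'\|/\mu(z)\le (2/D)\|z-z'\|$ by part (i). In the second term, $d_{\min}(z')/\mu(z')=U(z')\le 1$, so it is at most $\|z-z'\|/\mu(z)\le(2/D)\|z-z'\|$. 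Summing the two bounds gives the constant $4/D$. Alternatively, one can simply cite \eqref{eq:lip-general} with $\Delta=D$.

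There is no real obstacle. The only point requiring care is confirming that the "quotient rule" step in the generic proof really gives $4/\Delta$. The decomposition above makes this explicit: the bound $U\le 1$ is what keeps the second term at $2/D$ rather than something larger.
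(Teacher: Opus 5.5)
Your proposal is correct and follows essentially the same route as the paper. Part (i) uses the triangle inequality $D\le d_{\min}(z)+d_k(z)\le 2d_k(z)$ followed by averaging. Part (ii) combines $1$-Lipschitzness of $d_{\min}$ and of $\mu=\max_j\mu_j$ with the splitting $ab'-a'b=(a-a')b'+a'(b'-b)$, where $a'\le b'$ (i.e.\ $U\le 1$) keeps each term at $2/D$. The paper's appendix writes this last step as $|a/b-a'/b'|\le(|a-a'|+|b-b'|)/\min\{b,b'\}$, which is exactly your decomposition.
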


\begin{proof}[Proof sketch]
Under A1$''$, all pairwise distances equal $D$, so part (i) follows from the triangle inequality as above.
The identity $\mu=\max_j\mu_j$ (where $\mu_j:=\frac{1}{C-1}\sum_{k\neq j}d_k$) is tie-independent and holds for arbitrary distinct prototypes; in the simplex case $\Delta=D$, so the generic Lipschitz bound gives $L_U=4/D$.
Full proof in Appendix~\ref{app:lipschitz}.
\end{proof}

\begin{remark}[Tightness]\label{rem:tight}
For $C=2$, the bound $4/D$ is tight.
Let $z(r)=s_1+r(s_2-s_1)/D$ for $0\le r\le D/2$.
Then $U(z(r))=r/(D-r)$, and the one-sided derivative $D/(D-r)^2$ tends to $4/D$ at the midpoint.
Thus the constant is attained in the two-prototype case by a difference quotient approaching the midpoint.
The representation $\mu=\max_j\mu_j$ used above is not simplex-specific; it is a deterministic consequence of removing a nearest prototype from the average.
\end{remark}

\noindent\textbf{Quantile convention.}
For a real-valued random variable $W$, write
\[
q_{1-\varepsilon}(W):=\inf\{t\in\R:\Prob(W\le t)\ge 1-\varepsilon\}.
\]
Conditional quantiles are defined analogously.

\begin{lemma}[Known-class quantile control]\label{lem:quantile}
Under A1--A2 and A5: for each class $y$ and $\varepsilon\in(0,1)$,
\begin{equation}\label{eq:quantile}
q_{1-\varepsilon}(U(Z)\mid Y\!=\!y)\le\E[U(Z)\mid Y\!=\!y]+\tau_y\sqrt{2\log(1/\varepsilon)}.
\end{equation}
In particular, using $\tau_y=L_U\sigma_y$ and $\tau=L_U\sigma$:
\begin{align*}
q_{1-\varepsilon}(U(Z)\mid Y\!=\!y)
&\le\E[U(Z)\mid Y\!=\!y]+L_U\sigma_y\sqrt{2\log(1/\varepsilon)}\\
&\le\E[U(Z)\mid Y\!=\!y]+L_U\sigma\sqrt{2\log(1/\varepsilon)}.
\end{align*}
\end{lemma}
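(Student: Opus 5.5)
The plan is to invert the one-sided tail inequality of A5 at a well-chosen deviation level and then use the definition of the quantile. Fix a class $y$ and $\varepsilon\in(0,1)$. Write $m_y:=\E[U(Z)\mid Y\!=\!y]$, which is finite because $U\in[0,1]$. Set $t_\varepsilon:=\tau_y\sqrt{2\log(1/\varepsilon)}\ge 0$. Substituting $t=t_\varepsilon$ into~\eqref{eq:concentration} gives
\[
\Prob\bigl(U(Z)\ge m_y+t_\varepsilon\mid Y\!=\!y\bigr)\le \exp\!\left(-\frac{t_\varepsilon^2}{2\tau_y^2}\right)=\varepsilon
\]
whenever $\tau_y>0$.

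Next I translate this tail bound into a quantile bound. The event $\{U(Z)>m_y+t_\varepsilon\}$ is contained in $\{U(Z)\ge m_y+t_\varepsilon\}$. Hence
\[
\Prob(U(Z)\le m_y+t_\varepsilon\mid Y\!=\!y)\ge 1-\varepsilon .
\]
By the quantile convention, $q_{1-\varepsilon}$ is the infimum of all $t$ with $\Prob(U\le t\mid Y\!=\!y)\ge 1-\varepsilon$. The point $t=m_y+t_\varepsilon$ belongs to that set, so the infimum is at most $m_y+t_\varepsilon$. This is~\eqref{eq:quantile}.

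The step that needs care is the degenerate case $\tau_y=0$, where the exponent is undefined. Here I use the convention in A5: zero upper-tail mass above the mean means $\Prob(U(Z)>m_y\mid Y\!=\!y)=0$. Then $\Prob(U\le m_y\mid Y\!=\!y)=1\ge 1-\varepsilon$, so $q_{1-\varepsilon}\le m_y$. This matches the bound, since $t_\varepsilon=0$. Using the strict event $>$ rather than $\ge$ is exactly what makes the argument work uniformly in both cases.

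Finally, the displayed specializations follow by substitution. Since $\sigma_y=\tau_y/L_U$, we have $\tau_y=L_U\sigma_y$, which gives the first line. The second line follows from $\sigma_y\le\sigma$ together with $L_U>0$ and $\sqrt{2\log(1/\varepsilon)}\ge0$. Here $L_U=4/\Delta$ is justified by Theorem~\ref{thm:generic-ball}, although the inequality itself uses only the definitional relation between $\sigma_y$ and $\tau_y$.
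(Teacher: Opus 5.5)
Your proposal is correct and follows the paper's proof, which simply inverts the A5 tail bound at level $\varepsilon$. Passing through the strict event $\{U(Z)>m_y+t_\varepsilon\}$ and treating $\tau_y=0$ separately only make explicit details that the paper's one-line proof leaves implicit.
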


\begin{proof}
Invert the A5 tail bound~\eqref{eq:concentration} at level $\varepsilon$.
\end{proof}

%% ============================================================
%% 5. EXACT U2 GEOMETRY AND BALL-SANDWICH BOUNDS
%% ============================================================
\section{Exact $U_2$ Geometry and Bounds for $U$}\label{sec:geometry}

We now show that the auxiliary score $U_2$ has an exact union-of-balls geometry under the balanced equal-norm condition (A1$'$), and that the operational acceptance region $\{z:U(z)\le\theta\}$ lies between corresponding unions of Euclidean balls. We first establish a global comparison between $U$ and $U_2$ that holds for \emph{all} prototype configurations.

\begin{proposition}[Global lower comparison]\label{prop:global-comparison}
Under A1--A2 (any distinct prototypes), for every $z$:
\begin{equation}\label{eq:global-lower}
\sqrt{U_2(z)}\le U(z).
\end{equation}
Hence $\{U\le\theta\}\subseteq\{U_2\le\theta^2\}$ for every $\theta\ge 0$.
\end{proposition}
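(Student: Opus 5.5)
Fix $z$ and let $j=j^*(z)$, $r=d_j(z)=d_{\min}(z)$. Both scores use the same nearest prototype, because the same index minimizes $\|z-s_j\|$ and $\|z-s_j\|^2$. So $\min_k\|z-s_k\|^2=r^2$, and $\nu(z)=\frac{1}{C-1}\sum_{k\neq j}d_k(z)^2$ is the mean of the squared rival distances. By the same reasoning $\mu(z)=\frac{1}{C-1}\sum_{k\neq j}d_k(z)$ is the arithmetic mean of the rival distances. The inequality $\sqrt{U_2(z)}\le U(z)$ is therefore
\[
\frac{r}{\sqrt{\nu(z)}}\le\frac{r}{\mu(z)}.
\]

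I would prove this from the power-mean (RMS--AM) inequality. By Cauchy--Schwarz (or Jensen for $x\mapsto x^2$) applied to the $C-1$ nonnegative numbers $\{d_k(z)\}_{k\neq j}$,
\[
\mu(z)=\frac{1}{C-1}\sum_{k\neq j}d_k(z)\le\Bigl(\frac{1}{C-1}\sum_{k\neq j}d_k(z)^2\Bigr)^{1/2}=\sqrt{\nu(z)}.
\]
Next I need $\mu(z)>0$ so that the divisions are legitimate. Theorem~\ref{thm:generic-ball} gives $\mu(z)\ge\Delta/2>0$ under A1, and hence $\nu(z)\ge\mu(z)^2>0$. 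Dividing the nonnegative number $r$ by the larger positive quantity $\sqrt{\nu(z)}$ gives $r/\sqrt{\nu(z)}\le r/\mu(z)$, which is \eqref{eq:global-lower}.

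For the set inclusion, suppose $U(z)\le\theta$ with $\theta\ge0$. Then $0\le\sqrt{U_2(z)}\le\theta$, and squaring gives $U_2(z)\le\theta^2$.

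The only point needing care is the convention check that $\nu$ and $\mu$ both exclude the same nearest prototype. $\nu$ subtracts $\min_j\|z-s_j\|^2$, which equals $d_{j^*}(z)^2$ whatever tie-breaking rule is used. $\mu$ is tie-independent, as noted in the proof of Theorem~\ref{thm:generic-ball}. Beyond this check, I expect no real obstacle.
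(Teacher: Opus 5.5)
Your proposal is correct and follows essentially the same route as the paper: you identify $\mu(z)$ and $\sqrt{\nu(z)}$ as the arithmetic mean and the root mean square of the same rival distances, then apply Cauchy--Schwarz. The one difference is cosmetic: you make the denominators positive via $\mu(z)\ge\Delta/2$ from Theorem~\ref{thm:generic-ball}, whereas the paper notes that the rival distances cannot all vanish, and your route also lets you skip the separate $d_{\min}(z)=0$ case.
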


\begin{proof}
Let $j=j^*(z)$, $m=d_j(z)$, and $a_k=d_k(z)$ for $k\neq j$.
Since the prototypes are distinct and $C\ge 2$, the rival distances relative to any nearest prototype cannot all vanish; hence $\mathrm{AM}>0$ and $\mathrm{RMS}>0$ below.
If $m=0$, then $\sqrt{U_2(z)}=U(z)=0$, so the claim is immediate.
Write
\[
\begin{aligned}
\mathrm{AM}&:=\frac{1}{C-1}\sum_{k\neq j} a_k=\mu(z),\\
\mathrm{RMS}&:=\sqrt{\frac{1}{C-1}\sum_{k\neq j} a_k^2}=\sqrt{\nu(z)}.
\end{aligned}
\]
Then $U(z)=m/\mathrm{AM}$ and $\sqrt{U_2(z)}=m/\mathrm{RMS}$, while $\mathrm{AM}\le\mathrm{RMS}$ by Cauchy--Schwarz.
The argument is unaffected by ties because both $\mu$ and $\nu$ remove one nearest distance and all nearest distances have the same value.
If $U(z)\le\theta$, then $\sqrt{U_2(z)}\le U(z)\le\theta$, hence $U_2(z)\le\theta^2$.
\end{proof}

Under the simplex (A1$''$), we additionally have an \emph{upper} comparison:

\begin{lemma}[$U$--$U_2$ equivalence under A1$''$]\label{lem:rms-am}
Under A1$''$--A2, for all $z\in\R^d$:
\begin{equation}\label{eq:sandwich-U}
\sqrt{U_2(z)}\le U(z)\le \frac{2}{\sqrt{3}}\sqrt{U_2(z)},
\end{equation}
and hence $\{U_2\le 3\theta^2/4\}\subseteq\{U\le\theta\}\subseteq\{U_2\le\theta^2\}$ for every $\theta\ge 0$.
\end{lemma}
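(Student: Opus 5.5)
The lower inequality $\sqrt{U_2}\le U$ is Proposition~\ref{prop:global-comparison}, so only the upper inequality needs proof, and the set inclusions then follow at once. The inner inclusion is: if $U_2(z)\le 3\theta^2/4$ then $U(z)\le (2/\sqrt3)\sqrt{3\theta^2/4}=\theta$. The outer inclusion is already in Proposition~\ref{prop:global-comparison}. Fix $z$, let $j=j^*(z)$, $m=d_j(z)$, and let $a_k=d_k(z)$ for $k\neq j$. As in that proof, $U=m/\mathrm{AM}$ and $\sqrt{U_2}=m/\mathrm{RMS}$. The claim is therefore equivalent to $\mathrm{RMS}^2\le\frac43\mathrm{AM}^2$, i.e.
\[
\mathrm{Var}(a):=\mathrm{RMS}^2-\mathrm{AM}^2\le \tfrac13\,\mathrm{AM}^2 ,
\]
where the variance is taken over the uniform distribution on the $C-1$ rival distances. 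If $m=0$ both sides of~\eqref{eq:sandwich-U} vanish, so we may assume $m>0$.

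The plan is to confine the rival distances to an interval and then apply a variance bound that uses the mean. Under A1$''$ every edge equals $D$, so the triangle inequality gives two bounds for each $k\neq j$. First, $a_k\le m+D$. Second, $a_k\ge\max(m,D-m)=:\ell$, combining $a_k\ge D-m$ with $a_k\ge m$ (since $j$ is nearest). Thus all $a_k$ lie in $[\ell,\ell+w]$ with $w:=m+D-\ell=\min(2m,D)$. In both cases $\ell\ge w/2$: if $m\le D/2$ then $w=2m$ and $\ell=D-m\ge m$; if $m>D/2$ then $w=D$ and $\ell=m>D/2$. This is the only place the simplex structure enters, through the equality of all edges to $D$.

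The remaining step is a one-dimensional lemma, and it is where the constant $2/\sqrt3$ has to come out, so it is the main obstacle. Crude bounds are not enough: Popoviciu's inequality $\mathrm{Var}\le w^2/4$ combined with $\mathrm{AM}\ge w/2$ only gives the constant $\sqrt2$. Instead I will use the Bhatia--Davis inequality $\mathrm{Var}\le(M-\mathrm{AM})(\mathrm{AM}-\ell)$ with $M=\ell+w$. Writing $\mathrm{AM}=\ell+x$ with $x\in[0,w]$, this gives $\mathrm{Var}\le x(w-x)$. Using $\ell\ge w/2$, we have $\mathrm{AM}^2\ge(w/2+x)^2$, so it suffices to show
\[
3x(w-x)\le (w/2+x)^2 .
\]
This rearranges to $0\le w^2/4-2wx+4x^2=(w/2-2x)^2$, which always holds. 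That proves $\mathrm{RMS}\le\frac{2}{\sqrt3}\mathrm{AM}$, and hence~\eqref{eq:sandwich-U}.

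Ties need no extra treatment: as noted in Proposition~\ref{prop:global-comparison}, $\mu$ and $\nu$ each remove exactly one copy of the common minimal distance, so the list $(a_k)_{k\neq j}$ is the same whichever minimizer is chosen.
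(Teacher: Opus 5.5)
Your argument is correct and is essentially the paper's proof. Both confine the rival distances to an interval whose endpoint ratio is at most $3$: your condition $\ell\ge w/2$ is exactly $(\ell+w)/\ell\le 3$, which the paper obtains from $\min_k a_k\ge D/2$ and $a_k\le d_{\min}(z)+D$. Your Bhatia--Davis step is the same endpoint-extremal principle the paper uses when it reduces to a two-point distribution on $\{1,3\}$, and your completing-the-square $(w/2-2x)^2\ge 0$ plays the role of the paper's maximization of $\sqrt{1+8p}/(1+2p)$ at $p=1/4$.
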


\begin{proof}[Proof sketch]
The lower bound is Proposition~\ref{prop:global-comparison}.
If $d_{\min}(z)=0$, the upper bound is also immediate.
For the upper bound, Lemma~\ref{lem:lipschitz}(i) gives $\min_k a_k\ge D/2$, and triangle inequality gives $a_k\le d_{\min}+D\le 3\min_k a_k$, so $\max/\min\le 3$.
Among vectors in $[1,3]$, $\mathrm{RMS}/\mathrm{AM}\le 2/\sqrt{3}$ (achieved at a two-point distribution on $\{1,3\}$).
Full derivation in Appendix~\ref{app:geometry}.
\end{proof}

\begin{theorem}[Euclidean-ball geometry of $U_2$]\label{thm:geometry-U2}
Under A1$'$--A2 (balanced equal-norm code), for $\rho\in[0,1)$, define $t(\rho):=(1-\rho)/(1+\rho/(C-1))$ and $\gamma_\rho:=1/t(\rho)$, with $t(0)=\gamma_0=1$.
Then:
\begin{enumerate}[label=(\roman*),leftmargin=1.8em,itemsep=1pt]
\item $\displaystyle U_2(z)=\frac{\|z\|^2+R^2-2\alpha(z)}{\|z\|^2+R^2+\frac{2}{C-1}\alpha(z)}$.

\item $\{z:U_2(z)\le\rho\}=\bigcup_{j=1}^C\mathbb B_j(\rho)$ where $\mathbb B_j(\rho)$ is the closed ball
\begin{equation}\label{eq:ball}
\mathbb B_j(\rho):=\!\left\{z:\left\|z-\gamma_\rho s_j\right\|^2\!\le R^2\!\left(\gamma_\rho^2-1\right)\right\}.
\end{equation}
This set is compact for $\rho<1$.

\item On $\|z\|=R$: $U_2(z)\le\rho\iff\exists j:\langle z,s_j\rangle\ge t(\rho)R^2$.

\item For $\theta\in[0,1)$,
\[
\{U_2\le\theta^2/2\}\subseteq\{U\le\theta\}\subseteq\{U_2\le\theta^2\},
\]
i.e., $\bigcup_j\mathbb B_j(\theta^2/2)\subseteq\{U\le\theta\}\subseteq\bigcup_j\mathbb B_j(\theta^2)$.
Under A1$''$ (simplex), the inner set tightens to $\bigcup_j\mathbb B_j(3\theta^2/4)$ via Lemma~\ref{lem:rms-am}.
\end{enumerate}
\end{theorem}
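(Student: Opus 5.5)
The plan is to use the two structural identities of a balanced equal-norm code, $\|s_j\|=R$ and $\sum_j s_j=0$, to write $U_2$ in closed form. Parts (ii) and (iii) then follow by rearranging a single inequality. For (iv), the outer inclusion is Proposition~\ref{prop:global-comparison}, and the inner inclusion needs one extra AM/RMS argument.

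\emph{Step (i).} Set $N:=\|z\|^2+R^2$. Then $\|z-s_j\|^2=N-2\langle z,s_j\rangle$, so $\min_j\|z-s_j\|^2=N-2\alpha(z)$. Summing over $j$ and using $\sum_j s_j=0$ gives $\sum_j\|z-s_j\|^2=CN$. Hence
\[
\nu(z)=\frac{CN-(N-2\alpha)}{C-1}=N+\frac{2\alpha}{C-1},
\]
and (i) follows.

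I also note two facts used below. First, $\alpha(z)\ge 0$, since $\sum_j\langle z,s_j\rangle=0$ forces the maximum to be nonnegative. Second, $\nu(z)>0$, so the denominator is positive (this is also guaranteed by distinctness, as in Proposition~\ref{prop:global-comparison}).

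\emph{Steps (ii)--(iii).} Since $\nu>0$, the condition $U_2\le\rho$ is equivalent to
\[
N-2\alpha\le\rho N+\frac{2\rho\alpha}{C-1}, \qquad\text{i.e.}\qquad (1-\rho)N\le 2\alpha\Bigl(1+\frac{\rho}{C-1}\Bigr).
\]
Dividing by $1+\rho/(C-1)>0$, this becomes $N\le 2\gamma_\rho\alpha$. Because $\alpha$ is a maximum over $j$, this holds iff there is some $j$ with $\|z\|^2+R^2-2\gamma_\rho\langle z,s_j\rangle\le 0$. Completing the square, that condition is $\|z-\gamma_\rho s_j\|^2\le R^2(\gamma_\rho^2-1)$, which is the ball~\eqref{eq:ball}. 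For $\rho<1$ we have $\gamma_\rho\ge1$, so the radius is real, and a finite union of closed balls is compact.

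For (iii), on the sphere $\|z\|=R$ we have $N=2R^2$. The condition $N\le 2\gamma_\rho\alpha$ then reads $\alpha\ge t(\rho)R^2$, which is exactly the stated criterion.

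\emph{Step (iv), the main obstacle.} The outer inclusion $\{U\le\theta\}\subseteq\{U_2\le\theta^2\}$ is Proposition~\ref{prop:global-comparison}. For the inner inclusion, let $j=j^*(z)$, $m=d_{\min}(z)$, and $a_k=d_k(z)$ for $k\neq j$. In general $\mathrm{RMS}/\mathrm{AM}$ of the $a_k$ is not bounded, so I will not compare them directly. Instead I use the bound $a_k\le\|z\|+R$, which holds because every prototype has norm $R$. Since $\sum a_k^2\le(\max_k a_k)\sum a_k$, this gives
\[
\mathrm{AM}\ge\frac{\mathrm{RMS}^2}{\|z\|+R}, \qquad\text{hence}\qquad U(z)\le\frac{m(\|z\|+R)}{\nu(z)}.
\]
Next, $(\|z\|+R)^2\le 2N\le 2N+4\alpha/(C-1)=2\nu$, where the last inequality uses $\alpha\ge0$. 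So $\|z\|+R\le\sqrt{2\nu}$, and therefore $U(z)\le\sqrt2\,m/\sqrt{\nu}=\sqrt{2U_2(z)}$. Consequently $U_2\le\theta^2/2$ implies $U\le\theta$.

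The ball-union form of both inclusions then follows from (ii). Under A1$''$, the sharper inner set $\bigcup_j\mathbb B_j(3\theta^2/4)$ follows in the same way from the upper bound in Lemma~\ref{lem:rms-am}.

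The delicate point is finding this AM lower bound. It is the only place where equal norms (through $a_k\le\|z\|+R$) and zero sum (through $\alpha\ge0$) enter beyond the computation in (i).
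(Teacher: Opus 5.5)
Your proposal is correct and follows the paper's proof essentially step for step. You use the same expansion via $\|s_j\|=R$ and $\sum_j s_j=0$ for (i), the same rearrangement to $\alpha(z)\ge \tfrac{t(\rho)}{2}(\|z\|^2+R^2)$ with completion of the square for (ii)--(iii), and the same two ingredients for the inner inclusion in (iv): $\mathrm{RMS}^2\le(\|z\|+R)\,\mathrm{AM}$ from $a_k\le\|z\|+R$, and $\nu(z)\ge\|z\|^2+R^2$ from $\alpha(z)\ge 0$, which together give $U\le\sqrt{2}\sqrt{U_2}$ (your uniform treatment of $\rho=0$ via $\gamma_0=1$ is a cosmetic difference from the paper's separate case).
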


\begin{proof}[Proof sketch]
Part (i): expand $\|z-s_j\|^2=\|z\|^2+R^2-2\langle z,s_j\rangle$ and use $\sum_j s_j=0$ (A1$'$).
Part (ii): for $\rho=0$, $U_2(z)=0$ iff $z$ equals one of the prototypes, which matches $\mathbb B_j(0)=\{s_j\}$.
For $\rho\in(0,1)$, rearrange $U_2\le\rho$ to $\alpha(z)\ge\frac{t(\rho)}{2}(\|z\|^2+R^2)$; complete the square for each $j$.
Part (iii): substitute $\|z\|=R$.
Part (iv): the outer inclusion is Proposition~\ref{prop:global-comparison}. For the balanced-code inner inclusion, write $r=\|z\|$ and let $a_k=d_k(z)$ for $k\neq j^*(z)$. Then $a_k\le r+R$, so $\mathrm{RMS}^2\le(r+R)\mathrm{AM}$, while part~(i) gives $\mathrm{RMS}^2=\nu(z)\ge r^2+R^2$ because $\alpha(z)\ge 0$. Hence $\mathrm{RMS}/\mathrm{AM}\le (r+R)/\sqrt{r^2+R^2}\le\sqrt{2}$, i.e.\ $U\le \sqrt{2}\sqrt{U_2}$. The simplex sharpening uses Lemma~\ref{lem:rms-am}.
Full details in Appendix~\ref{app:geometry}.
\end{proof}

\begin{corollary}[Tangent-ball separation diagnostic]\label{cor:tau-sep}
Assume A1$'$--A2 and let
\[
\delta_{\min}:=\min_{i\ne j}\|s_i-s_j\|.
\]
For $\delta_{\min}<2R$, define
\[
\gamma_*:=\frac{1}{\sqrt{1-\delta_{\min}^2/(4R^2)}},
\qquad
\tau_{\mathrm{sep}}:=
\frac{(C-1)(\gamma_*-1)}{1+(C-1)\gamma_*}.
\]
When $\delta_{\min}=2R$, use the limiting convention $\tau_{\mathrm{sep}}=1$.
Then the balls $\mathbb B_j(\rho)$ in Theorem~\ref{thm:geometry-U2}(ii) have disjoint interiors for every $\rho\in[0,\tau_{\mathrm{sep}}]$ and are disjoint as closed sets for every $\rho<\tau_{\mathrm{sep}}$.
If a pair of prototypes attains $\delta_{\min}$, the corresponding two balls are tangent at $\rho=\tau_{\mathrm{sep}}$.
\end{corollary}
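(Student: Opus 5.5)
The plan is a direct two-ball computation using the explicit centers and radii from Theorem~\ref{thm:geometry-U2}(ii), followed by inverting the map $\rho\mapsto\gamma_\rho$.

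\textbf{Step 1 (pairwise criterion).} Fix $i\neq j$ and write $\delta_{ij}:=\|s_i-s_j\|\le 2R$, where the bound holds because $\|s_i\|=\|s_j\|=R$. By \eqref{eq:ball}, $\mathbb B_i(\rho)$ and $\mathbb B_j(\rho)$ are closed balls. Their centers are at distance $\gamma_\rho\delta_{ij}$, and they have the common radius $R\sqrt{\gamma_\rho^2-1}$. Two closed balls of equal radius $r$ with centers at distance $L$ behave as follows:
\begin{itemize}[leftmargin=1.6em,itemsep=1pt]
\item they are disjoint iff $L>2r$;
\item they have disjoint interiors iff $L\ge 2r$;
\item they are tangent (meet in exactly one point) iff $L=2r$, provided $r>0$.
\end{itemize}
Squaring, the condition $\gamma_\rho\delta_{ij}>2R\sqrt{\gamma_\rho^2-1}$ becomes $\gamma_\rho^2(4R^2-\delta_{ij}^2)<4R^2$. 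When $\delta_{ij}<2R$ this reads $\gamma_\rho<\gamma_*(\delta_{ij})$, where $\gamma_*(\delta):=(1-\delta^2/(4R^2))^{-1/2}$. The non-strict and equality versions translate to $\gamma_\rho\le\gamma_*(\delta_{ij})$ and $\gamma_\rho=\gamma_*(\delta_{ij})$ respectively. When $\delta_{ij}=2R$ the inequality holds for every $\rho\in[0,1)$, since $\gamma_\rho<\infty$.

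\textbf{Step 2 (monotonicity and inversion).} I will note that $\gamma_\rho=(1+\rho/(C-1))/(1-\rho)$ is continuous and strictly increasing on $[0,1)$, with $\gamma_0=1$ and $\gamma_\rho\to\infty$ as $\rho\to 1$. Solving $\gamma_\rho=\gamma$ for $\rho$ gives $\gamma(1-\rho)=1+\rho/(C-1)$, hence
\[
\rho=\frac{(C-1)(\gamma-1)}{1+(C-1)\gamma}.
\]
This expression is also increasing in $\gamma$. Therefore $\gamma_\rho<\gamma_*$ iff $\rho<\tau_{\mathrm{sep}}(\delta)$, and likewise for $\le$ and $=$.

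Separately, $\gamma_*(\delta)$ is increasing in $\delta$, so $\tau_{\mathrm{sep}}(\delta_{ij})\ge\tau_{\mathrm{sep}}(\delta_{\min})=\tau_{\mathrm{sep}}$ for every pair. Consequently every pair is disjoint for $\rho<\tau_{\mathrm{sep}}$ and has disjoint interiors for $\rho\le\tau_{\mathrm{sep}}$. A pair attaining $\delta_{\min}$ satisfies the equality $\gamma_{\tau_{\mathrm{sep}}}=\gamma_*$ exactly, so its two balls are tangent at $\rho=\tau_{\mathrm{sep}}$. Here the radius is positive because $\gamma_*>1$ whenever $\delta_{\min}>0$, and $\delta_{\min}>0$ holds by A1.

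\textbf{Step 3 (edge cases).} The case $\delta_{\min}=2R$ forces every pair to be antipodal. By Step 1 the balls are then disjoint for every admissible $\rho<1$, which justifies the limiting convention $\tau_{\mathrm{sep}}=1$. The statement at $\rho=\tau_{\mathrm{sep}}=1$ is vacuous, since balls are only defined for $\rho<1$.

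The case $\rho=0$ gives singleton balls $\{s_j\}$. These are disjoint by A1, consistent with $\tau_{\mathrm{sep}}>0$.

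I expect no genuine obstacle. The only care needed is in keeping the strict versus non-strict inequalities aligned through the squaring step, which is legitimate because both sides are nonnegative, and through the monotone inversion. I will also need to check that tangency requires positive radius, which follows from $\gamma_*>1$.
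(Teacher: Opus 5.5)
Your proposal is correct and follows the paper's own argument. Like the paper, you compare the center distance $\gamma_\rho\|s_i-s_j\|$ with twice the common radius $R\sqrt{\gamma_\rho^2-1}$, solve the contact equation to get $\gamma_*$, and invert the increasing map $\rho\mapsto\gamma_\rho$ to obtain $\tau_{\mathrm{sep}}$; your version makes explicit what the paper's ``first contact'' phrasing leaves implicit (strict versus non-strict inequalities, the reduction from all pairs to a pair attaining $\delta_{\min}$, positivity of the radius for tangency, and the $\delta_{\min}=2R$ and $\rho=0$ edge cases).
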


\begin{proof}
For fixed $\rho$, the centers of $\mathbb B_i(\rho)$ and $\mathbb B_j(\rho)$ are separated by $\gamma_\rho\|s_i-s_j\|$ and each ball has radius $R\sqrt{\gamma_\rho^2-1}$.
Thus the first contact occurs when
\[
\gamma_\rho\delta_{\min}=2R\sqrt{\gamma_\rho^2-1}.
\]
Solving gives $\gamma_\rho=\gamma_*$.
Since $\gamma_\rho=(1+\rho/(C-1))/(1-\rho)$ is increasing in $\rho$, inverting this equality yields the displayed expression for $\tau_{\mathrm{sep}}$.
\end{proof}

\textbf{Key point:} Theorem~\ref{thm:geometry-U2} requires only the balanced equal-norm condition A1$'$, \emph{not} the full simplex A1$''$. The proof uses $\|s_j\|=R$ and $\sum_j s_j=0$; equiangularity is never invoked. This means exact $U_2$-ball geometry is available in \emph{every} dimension $d\ge 2$, including $d<C-1$.

This theorem reveals that the $U_2$-sublevel set $\{U_2\le\rho\}$ is a union of $C$ (possibly overlapping) closed Euclidean balls, one per prototype.
By the sandwich (Theorem~\ref{thm:geometry-U2}(iv)), the operational $U$-acceptance region $\{U\le\theta\}$ lies between two compact prototype-aligned unions of balls. In the balanced equal-norm setting the global inner transfer uses $\theta^2/2$, while the simplex special case improves this to $3\theta^2/4$.
The ball centers $\gamma_\rho s_j$ lie along the prototype directions at distance $\gamma_\rho R>R$ from the origin, and the ball radii grow as $\rho\to 1$.
Combining with Theorem~\ref{thm:generic-ball}, this gives fully explicit compactness for $U$-acceptance regions for any balanced equal-norm code:

\begin{corollary}[Compactness for $U$ via $U_2$ transfer]\label{cor:compact-U}
Under A1$'$--A2, for any $\theta\in(0,1)$:
\[
\{U\le\theta\}\subseteq \bigcup_{j=1}^C \overline B\!\left(\gamma_{\theta^2} s_j,\;R\sqrt{\gamma_{\theta^2}^2-1}\right).
\]
Consequently $\{U\le\theta\}$ is compact: it is closed by the Lipschitz continuity of $U$ and bounded by the finite compact outer cover above.
\end{corollary}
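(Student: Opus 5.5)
The corollary follows by combining the outer inclusion of Theorem~\ref{thm:geometry-U2}(iv) with the explicit ball description in Theorem~\ref{thm:geometry-U2}(ii). Fix $\theta\in(0,1)$ and set $\rho:=\theta^2\in[0,1)$.

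First, Proposition~\ref{prop:global-comparison} gives $\sqrt{U_2(z)}\le U(z)$ for every $z$, so $\{U\le\theta\}\subseteq\{U_2\le\theta^2\}$. This step uses only A1--A2. Second, under A1$'$, Theorem~\ref{thm:geometry-U2}(ii) identifies $\{U_2\le\rho\}$ with $\bigcup_j\mathbb B_j(\rho)$. Each $\mathbb B_j(\rho)$ is by definition the closed ball centred at $\gamma_\rho s_j$ with radius $R\sqrt{\gamma_\rho^2-1}$. The radius is real and finite because $t(\rho)\in(0,1]$ for $\rho\in[0,1)$, hence $\gamma_\rho\in[1,\infty)$. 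Substituting $\rho=\theta^2$ gives exactly the displayed inclusion.

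For compactness, the right-hand side is a finite union of closed bounded balls, so $\{U\le\theta\}$ is bounded. Closedness follows because $U$ is continuous: it is globally Lipschitz by Theorem~\ref{thm:generic-ball}, which needs only A1. Hence $\{U\le\theta\}=U^{-1}([0,\theta])$ is closed, and Heine--Borel gives compactness.

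There is no real obstacle. The only point to check is that $\gamma_{\theta^2}$ is finite and at least $1$, so that the radius is well defined. This is where $\theta<1$ enters: $t(\theta^2)>0$ exactly when $\theta^2<1$. For $\theta=0$ the cover degenerates to the prototypes themselves, which is consistent with the statement, since the corollary only asserts the result for $\theta\in(0,1)$.
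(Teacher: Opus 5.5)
Your proposal is correct and follows the paper's own argument. You obtain the inclusion from Proposition~\ref{prop:global-comparison} and the exact ball description of Theorem~\ref{thm:geometry-U2}(ii) at $\rho=\theta^2$, and closedness from the Lipschitz continuity in Theorem~\ref{thm:generic-ball}; your check that $\gamma_{\theta^2}\in[1,\infty)$ for $\theta<1$ is the one detail needed, and you handle it correctly.
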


\begin{figure}[t]
\centering
\includegraphics[width=\columnwidth]{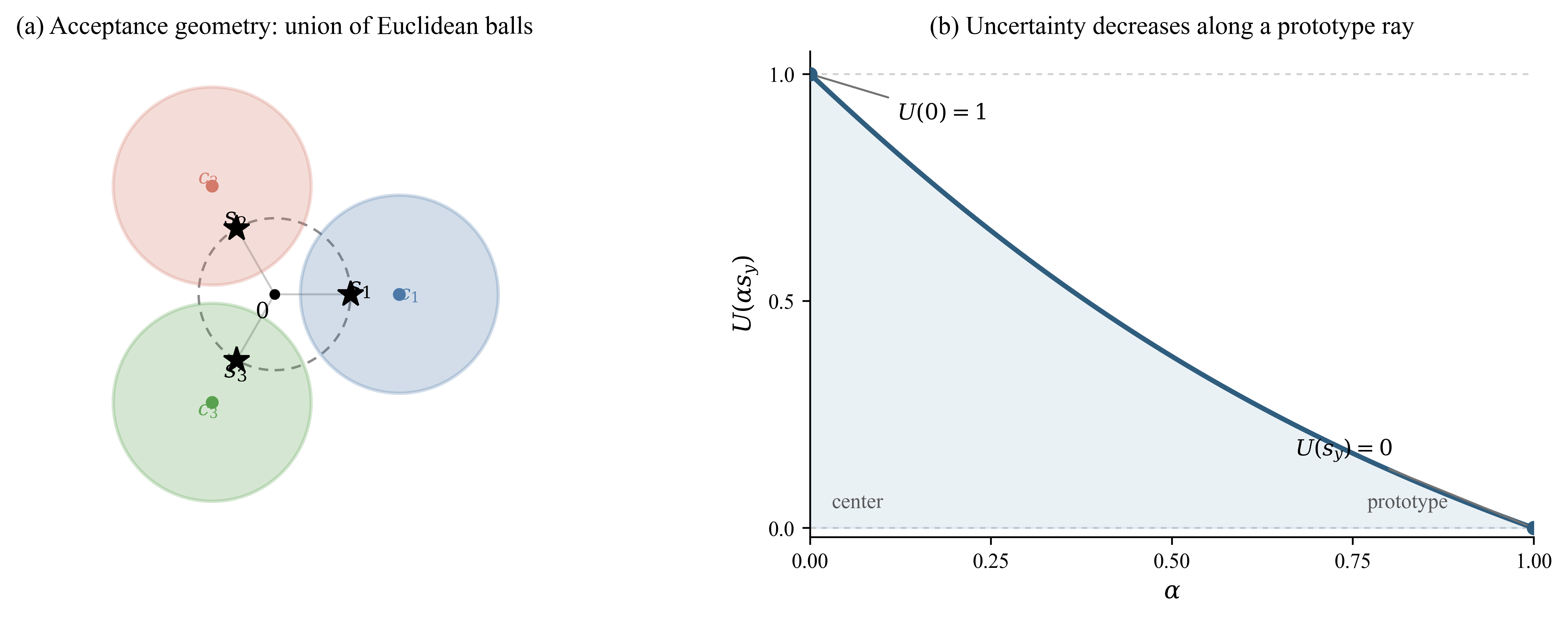}
\caption{Illustration of the geometry for $C=3$. Left: the $U_2$-sublevel set is a union of Euclidean balls $\mathbb B_j(\rho)$ centered along prototype directions, as characterized in Theorem~\ref{thm:geometry-U2}(ii). Right: along a prototype ray $z=\alpha s_y$, the uncertainty decreases monotonically from the simplex center to the class prototype.}
\label{fig:union-balls}
\end{figure}

%% ============================================================
%% 5.1. SHARP DICHOTOMY AND DEFECT PARAMETER
%% ============================================================
\subsection{Sharp Dichotomy and the Simplex-Defect Parameter}\label{sec:dichotomy}

The simplex-defect parameter $\lambda_j=\sqrt{B_j}/A_j$ introduced in A1 quantifies how far each class is from the one-distance simplex geometry. The following theorem shows that $d\ge C-1$ is the phase transition for $\lambda_j=1$, not for the rejection geometry itself.

\begin{theorem}[Sharp dichotomy]\label{thm:dichotomy}
Assume A1$'$ (balanced equal-norm). The following are equivalent:
\begin{enumerate}[label=(\roman*),leftmargin=1.8em,itemsep=1pt]
\item $\lambda_j=1$ for every $j$;
\item all pairwise distances $|s_i-s_j|$ are equal;
\item the prototypes form a regular simplex.
\end{enumerate}
Consequently, if $C>d+1$, then necessarily $\max_j\lambda_j>1$.
\end{theorem}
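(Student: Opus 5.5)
The plan is to prove the cycle (i)$\Rightarrow$(ii)$\Rightarrow$(iii)$\Rightarrow$(i), and then derive the consequence from the dimension count for regular simplices.

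\emph{(i)$\Rightarrow$(ii).} By Cauchy--Schwarz, $A_j\le\sqrt{B_j}$, with equality exactly when all rival distances $|s_j-s_k|$, $k\neq j$, coincide. So $\lambda_j=1$ gives a constant $c_j$ with $|s_j-s_k|=c_j$ for all $k\ne j$. Since distance is symmetric, $c_j=|s_j-s_k|=c_k$ for every pair $j\neq k$, so all $c_j$ equal a common $c$. This step uses only A1, not A1$'$.

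\emph{(ii)$\Rightarrow$(iii).} This is where A1$'$ enters, and it is the only substantive step. Equal norms give $|s_i-s_j|^2=2R^2-2\langle s_i,s_j\rangle$, so equal pairwise distances mean all off-diagonal inner products equal a common value $\beta$. Pairing $\sum_k s_k=0$ with $s_j$ gives $R^2+(C-1)\beta=0$, so $\beta=-R^2/(C-1)$. This is exactly~\eqref{eq:simplex}, i.e.\ A1$''$.

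\emph{(iii)$\Rightarrow$(i).} This is the statement in A1$''$ that $A_j=D$ and $B_j=D^2$, so $\lambda_j=1$.

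\emph{Consequence.} I would show that a regular simplex of $C$ points needs $d\ge C-1$. The Gram matrix $G=R^2\bigl(\tfrac{C}{C-1}I-\tfrac{1}{C-1}\mathbf 1\mathbf 1^\top\bigr)$ has eigenvalue $R^2C/(C-1)>0$ with multiplicity $C-1$ (on $\mathbf 1^\perp$) and eigenvalue $0$ on $\mathbf 1$. Hence $\operatorname{rank}G=C-1$. Since $G=S^\top S$ with $S\in\R^{d\times C}$, we have $\operatorname{rank}G\le d$, so $C-1\le d$. If $C>d+1$, (iii) therefore fails, so (i) fails, which gives $\max_j\lambda_j>1$.

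I do not expect a real obstacle. The only point needing care is the justification of $\operatorname{rank}G=C-1$, which the eigen-decomposition above settles. It is also worth noting that without the zero-sum condition, equal distances alone would not pin down $\beta$.
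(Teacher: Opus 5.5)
Your proposal is correct and follows the paper's proof essentially step for step: the RMS/AM equality case plus symmetry of the distance matrix gives (i)$\Rightarrow$(ii), and equal norms together with $\sum_k s_k=0$ paired against $s_i$ fix the common inner product at $-R^2/(C-1)$ for (ii)$\Rightarrow$(iii). The only difference is in the consequence: the paper just cites that $C$ equidistant points have affine dimension $C-1$, whereas your Gram-matrix computation ($\operatorname{rank}G=C-1\le d$) gives a self-contained proof of that bound for the centred simplex.
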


\begin{proof}
For each fixed $j$, $\lambda_j=\sqrt{B_j}/A_j\ge 1$ is the RMS/AM ratio of the positive rival distances $\{|s_j-s_k|:k\neq j\}$.
Equality holds iff all rival distances from $s_j$ are equal.
Thus (ii) implies (i), and (i) implies that each row of the off-diagonal distance matrix is constant.
By symmetry of the distance matrix, if row $j$ has common value $r_j$, then the entry $|s_i-s_j|$ equals both $r_i$ and $r_j$ for every $i\neq j$; hence all $r_j$ are equal and all pairwise distances are equal, proving (ii).
Under A1$'$, if all pairwise distances are equal to $D$, then
\[
D^2=\|s_i-s_j\|^2=2R^2-2\langle s_i,s_j\rangle
\]
for every $i\neq j$, so all off-diagonal inner products are common.
Since $\sum_j s_j=0$, taking the inner product with $s_i$ gives $R^2+\sum_{j\neq i}\langle s_i,s_j\rangle=0$, hence the common value is $-R^2/(C-1)$.
This is exactly the regular simplex condition A1$''$, proving (iii).
The implication (iii) $\Rightarrow$ (ii) is immediate from A1$''$.
Finally, $C$ equidistant points have affine dimension $C-1$, so such prototypes can exist in $\R^d$ only when $d\ge C-1$; therefore if $C>d+1$, no balanced equal-norm code can have all $\lambda_j=1$.
\end{proof}

\begin{theorem}[Local equivalence of $U$ and $\sqrt{U_2}$]\label{thm:local-comparison}
Assume A1--A2. If $z\in B(s_j,r)$ with $r<\Delta_j/2$, then $j=j^*(z)$ and
\begin{equation}\label{eq:local-equiv}
\sqrt{U_2(z)}\le U(z)\le \frac{\sqrt{B_j}+r}{A_j-r}\sqrt{U_2(z)}.
\end{equation}
In particular, if $r\le \eta A_j$ with $0<\eta<\Delta_j/(2A_j)$:
\[
\sqrt{U_2(z)}\le U(z)\le \frac{\lambda_j+\eta}{1-\eta}\sqrt{U_2(z)}.
\]
\end{theorem}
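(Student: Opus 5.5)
The plan has three steps: first settle the nearest prototype, then write both scores as quotients with the same numerator, and finally bound the AM and RMS of the rival distances separately. The first claim, $j=j^*(z)$, follows exactly as in the inner-bound step of Theorem~\ref{thm:generic-ball}. For $\|z-s_j\|<r<\Delta_j/2$ and any $k\neq j$,
\[
d_k(z)\ge \|s_k-s_j\|-\|z-s_j\|>\Delta_j-\Delta_j/2>\|z-s_j\|,
\]
so $s_j$ is the unique nearest prototype. In particular, no tie-breaking issue arises.

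With $m:=d_j(z)=d_{\min}(z)$ and $a_k:=d_k(z)$ for $k\neq j$, we have $U(z)=m/\mathrm{AM}$ and $\sqrt{U_2(z)}=m/\mathrm{RMS}$, exactly as in the proof of Proposition~\ref{prop:global-comparison}. The lower inequality is therefore that proposition. If $m=0$, both sides vanish and the upper inequality is trivial. Otherwise,
\[
\frac{U(z)}{\sqrt{U_2(z)}}=\frac{\mathrm{RMS}}{\mathrm{AM}},
\]
so it suffices to bound the numerator above and the denominator below.

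For the denominator, the triangle inequality gives $a_k\ge \|s_j-s_k\|-m$, so averaging yields $\mathrm{AM}\ge A_j-m>A_j-r$. This is positive because $r<\Delta_j/2\le A_j/2$. For the numerator, $a_k\le \|s_j-s_k\|+m$. Minkowski's inequality for the $\ell^2$ average over $k\neq j$ then gives
\[
\mathrm{RMS}\le\Bigl(\tfrac{1}{C-1}\sum_{k\neq j}\|s_j-s_k\|^2\Bigr)^{1/2}+m=\sqrt{B_j}+m\le\sqrt{B_j}+r.
\]
Dividing the two bounds gives~\eqref{eq:local-equiv}. Since $m<r$, the inequality is in fact strict, and the stated non-strict form follows.

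For the ``in particular'' clause, take $r=\eta A_j$. The hypothesis $\eta<\Delta_j/(2A_j)$ ensures $r<\Delta_j/2$, so~\eqref{eq:local-equiv} applies. Dividing its numerator and denominator by $A_j$ and using $\sqrt{B_j}/A_j=\lambda_j$ turns the constant into $(\lambda_j+\eta)/(1-\eta)$. The case $r<\eta A_j$ follows by monotonicity, because the constant is increasing in $r$. The only step that needs any care is the Minkowski bound for the RMS, which must be applied in the averaged $\ell^2$ norm. The rest is bookkeeping.
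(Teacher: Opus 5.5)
Your proposal is correct and follows the paper's proof essentially step for step. Both arguments establish unique nearness via the triangle inequality, reduce to $U/\sqrt{U_2}=\mathrm{RMS}/\mathrm{AM}$, bound $\mathrm{AM}\ge A_j-\rho$ by the triangle inequality and $\mathrm{RMS}\le\sqrt{B_j}+\rho$ by Minkowski, and then substitute $r\le\eta A_j$. You also make explicit, via $\Delta_j\le A_j$, the positivity $A_j-r>0$ that the paper leaves implicit in its ``monotonicity in $\rho<A_j$'' step.
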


\begin{proof}
Let $\rho:=\|z-s_j\|\le r$.
Because $r<\Delta_j/2$, $j$ is uniquely nearest (as in Theorem~\ref{thm:generic-ball}).
If $\rho=0$, then $U(z)=\sqrt{U_2(z)}=0$ and the claim is immediate.
For rival distances $a_k:=d_k(z)$, $k\neq j$, define
\[
\mathrm{AM}:=\frac{1}{C-1}\sum_{k\neq j}a_k,\qquad
\mathrm{RMS}:=\sqrt{\frac{1}{C-1}\sum_{k\neq j}a_k^2}.
\]
The triangle inequality gives $\mathrm{AM}\ge A_j-\rho$, while Minkowski gives $\mathrm{RMS}\le\sqrt{B_j}+\rho$.
Since
\[
\frac{U(z)}{\sqrt{U_2(z)}}=\frac{\mathrm{RMS}}{\mathrm{AM}},
\]
we obtain
\[
U(z)\le \frac{\sqrt{B_j}+\rho}{A_j-\rho}\sqrt{U_2(z)}
\le \frac{\sqrt{B_j}+r}{A_j-r}\sqrt{U_2(z)},
\]
where the last step uses monotonicity in $\rho<A_j$.
The final display follows from $r\le \eta A_j$ and $\lambda_j=\sqrt{B_j}/A_j$.
\end{proof}

\textbf{Interpretation.}
Near prototype $s_j$, the scores $U$ and $\sqrt{U_2}$ are equivalent up to a factor that depends on $\lambda_j$. For $\lambda_j=1$ (simplex case), the local comparison factor approaches $1$ as $r\to 0$. For $\lambda_j>1$ (low-dimensional regime), the comparison worsens with explicit constants.

%% ============================================================
%% 5.2. CONCRETE LOW-DIMENSIONAL CONSTRUCTION
%% ============================================================
\subsection{Universal Low-Dimensional Construction: The Regular $C$-gon}\label{sec:polygon}

For every $d\ge 2$ and $C\ge 3$, the regular $C$-gon provides a concrete balanced equal-norm code:
\begin{equation}\label{eq:polygon}
s_j = R\bigl(\cos(2\pi j/C),\;\sin(2\pi j/C),\;0,\dots,0\bigr),\quad j=1,\dots,C.
\end{equation}
Indexing by $j=1,\dots,C$ gives the same roots of unity as indexing by $0,\dots,C-1$.
This satisfies $|s_j|=R$ and $\sum_j s_j=0$. By symmetry, $A_j=A$, $B_j=B$, $\Delta_j=\Delta$, $\lambda_j=\lambda$ are all independent of $j$, with:
\begin{align*}
\Delta &= 2R\sin\frac{\pi}{C},\quad
A = \frac{2R}{C-1}\cot\frac{\pi}{2C},\\
B &= \frac{2CR^2}{C-1},\quad
\lambda = \frac{\sqrt{2C(C-1)}}{2\cot(\pi/(2C))}.
\end{align*}
These formulas use the standard trigonometric sums
\[
\sum_{n=1}^{C-1}\sin(\pi n/C)=\cot(\pi/(2C)),\qquad
\sum_{n=1}^{C-1}\sin^2(\pi n/C)=C/2.
\]
The regular $C$-gon is the simplest exposition of the construction: it uses the first Fourier frequency and then pads by zeros.
Our experiments use the standard finite harmonic-frame generalisation, implemented by stacking the first $\lfloor d/2\rfloor$ cosine--sine Fourier frequency pairs and normalising to radius $R$ (with one zero padding coordinate when $d$ is odd).
This multi-frequency Naimark/DFT-truncation construction preserves the balanced equal-norm conditions of A1$'$ while using the available embedding dimension more effectively; the $C$-gon in \eqref{eq:polygon} is precisely the first-frequency special case.
This is the same finite-harmonic-code viewpoint used in harmonic equiangular tight-frame constructions~\cite{fickus2019harmonic}.

For any fixed $\theta\in(0,1)$, define
\[
\begin{aligned}
r^-_\theta&:=\min\!\left\{R\sin\frac{\pi}{C},\;
\frac{2\theta R}{(1+\theta)(C-1)}\cot\frac{\pi}{2C}\right\},\\
r^+_\theta&:=\frac{2\theta R}{(1-\theta)(C-1)}\cot\frac{\pi}{2C}.
\end{aligned}
\]
Theorem~\ref{thm:generic-ball} gives
\[
\bigcup_{j=1}^C B(s_j,r^-_\theta)
\subseteq \{U\le\theta\}
\subseteq \bigcup_{j=1}^C \overline B(s_j,r^+_\theta).
\]
At the same time, for $\rho\in[0,1)$, the exact $U_2$ ball geometry from Theorem~\ref{thm:geometry-U2} gives $\{U_2\le\rho\}=\bigcup_j\overline B(\gamma_\rho s_j, R\sqrt{\gamma_\rho^2-1})$.

For fixed $d=2$ as $C\to\infty$, $A\sim 4R/\pi$ and $\Delta\sim 2\pi R/C$. For fixed $\theta<1$, the guaranteed confident cores shrink because packing gets tighter, but the acceptance region remains compact with $O_\theta(R)$ outer scale, giving controlled degradation rather than collapse.

%% ============================================================
%% 6. FALSE ACCEPTANCE RATE BOUNDS
%% ============================================================
\section{False Acceptance Rate Bounds}\label{sec:far}

\begin{theorem}[KRR and conditional FAR bounds]\label{thm:far-krr}
Under A1$'$--A2 (balanced equal-norm code), and with additional assumptions as specified in each part:

\noindent\textbf{(i) KRR control.}
Assume A5 and let $\varepsilon\in(0,1)$.
Setting
\[
\theta_\varepsilon:=\min\!\left\{1,\max_y\left(\E[U(Z)\mid Y\!=\!y]+\tau_y\sqrt{2\log(1/\varepsilon)}\right)\right\}
\]
yields $\KRR(\theta_\varepsilon)\le\varepsilon$.
In particular, the conservative common-scale choice $\theta_\varepsilon=\min\!\left\{1,\max_y\left(\E[U(Z)\mid Y\!=\!y]+L_U\sigma\sqrt{2\log(1/\varepsilon)}\right)\right\}$ also works.

\noindent\textbf{(ii) FAR via geometry (for $\theta\in[0,1)$).}
$\FAR(\theta)\le\Prob(\Phi(Z_U)\in\bigcup_j\mathbb B_j(\theta^2))$.
Here $\Phi$ is the preprocessing map from the chosen route in A6, or any fixed measurable preprocessing map for which the displayed probability is being evaluated.

\noindent\textbf{(iii) Under (U-Norm) + A6$'$ (for $\theta\in[0,1)$):}
\begin{equation}\label{eq:far-norm}
\FAR(\theta)\le C\exp\!\left(-\frac{(d-1)t(\theta^2)^2}{2}\right)+C\,\delta_{\mathrm{cap}}.
\end{equation}
If $\delta_{\mathrm{cap}}=0$ and $\Prob(Z_U\neq 0)=1$, the exact-isotropy lower bound
\[
\pcap(t(\theta^2/2);d)\le\FAR(\theta)
\]
also holds. Under the same exact-isotropy and nonzero-origin assumptions, and additionally A1$''$, the disjoint-cap simplex lower bound
\[
C\,\pcap(t((\theta/3)^2);d)\le\FAR(\theta)
\]
holds.

\noindent\textbf{(iv) Under (U-RadIso) + A6$'$ (for $\theta\in[0,1)$):}
\begin{equation}\label{eq:far-radiso}
\FAR(\theta)\le C\,\E\!\left[\exp\!\left(-\frac{d-1}{2}\min\{1,\tau_\theta(R_U)\}^2\right)\right],
\end{equation}
where $\tau_\theta(r):=\frac{t(\theta^2)}{2}\cdot\frac{r^2+R^2}{rR}$ for $r>0$, and we use the endpoint convention $\min\{1,\tau_\theta(0)\}:=1$ because $R_U=0$ is rejected for $\theta<1$.
\end{theorem}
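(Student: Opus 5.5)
I would handle the four parts separately, since each reduces to an earlier result plus a short calculation.

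\textbf{Part (i).} I would apply Lemma~\ref{lem:quantile} class by class. For each $y$, let $q_y:=\E[U(Z)\mid Y\!=\!y]+\tau_y\sqrt{2\log(1/\varepsilon)}$. The A5 tail bound with $t=q_y-\E[\cdot]$ gives $\Prob(U(Z)>q_y\mid Y\!=\!y)\le\varepsilon$; the case $\tau_y=0$ is covered by the stated convention. Since $\theta_\varepsilon\ge q_y$ for every $y$, or $\theta_\varepsilon=1$ (in which case $U\le 1$ makes $\{U>1\}$ empty), we get $\Prob(U(Z)>\theta_\varepsilon\mid Y\!=\!y)\le\varepsilon$. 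Averaging over $Y$ yields $\KRR(\theta_\varepsilon)\le\varepsilon$. The common-scale version follows because $\tau_y=L_U\sigma_y\le L_U\sigma$, and $\KRR$ is nonincreasing in $\theta$.

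\textbf{Part (ii).} This is immediate from the outer inclusion in Theorem~\ref{thm:geometry-U2}(iv), $\{U\le\theta\}\subseteq\bigcup_j\mathbb B_j(\theta^2)$, evaluated at the point $\Phi(Z_U)$.

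\textbf{Part (iii).} Here $\Phi(Z_U)$ lies on the sphere $\|z\|=R$ whenever $Z_U\ne0$. On the event $Z_U=0$ we have $U(0)=1>\theta$, so that point is rejected. By Theorem~\ref{thm:geometry-U2}(iii), acceptance implies $\langle\Phi(Z_U)/R,s_j/R\rangle\ge t(\theta^2)$ for some $j$. A union bound over $j$, the cap-discrepancy assumption A6, and A6$'$ then give \eqref{eq:far-norm}; note $t(\theta^2)\in(0,1]$. For the lower bound, exact isotropy makes $\Phi(Z_U)/R$ uniform on the sphere. The inner inclusion of Theorem~\ref{thm:geometry-U2}(iv), together with part (iii) at $\rho=\theta^2/2$, shows that the single cap around $s_1$ is accepted, which gives $\pcap(t(\theta^2/2);d)\le\FAR(\theta)$.

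For the simplex lower bound I need $C$ disjoint accepted caps. I would take caps of level $t((\theta/3)^2)$. These are contained in $\{U_2\le\theta^2/9\}\subseteq\{U\le\theta\}$, which needs only the inner factor $3\theta^2/4\ge\theta^2/9$. Their disjointness I would argue through Lemma~\ref{lem:rms-am}: a point in the cap around $s_j$ has $U\le\sqrt{4/3}\,\theta/3<1/2$ or thereabouts. Points with $U$ small are closer to $s_j$ than to any other prototype by a margin, so caps with different nearest prototypes cannot overlap except on a null set. \textbf{This disjointness check is the step I expect to be the main obstacle.} If the margin argument fails, I would instead verify directly that $t(\theta^2/9)>$ the cosine of half the inter-vertex angle, using $\langle s_i,s_j\rangle=-R^2/(C-1)$.

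\textbf{Part (iv).} I would condition on $R_U=r>0$ and use (ii). Membership in $\mathbb B_j(\theta^2)$ is equivalent to $\langle z,s_j\rangle\ge\frac{t(\theta^2)}{2}(\|z\|^2+R^2)$, as in the proof of Theorem~\ref{thm:geometry-U2}(ii). With $z=rV$ this becomes $\langle V,s_j/R\rangle\ge\tau_\theta(r)$. If $\tau_\theta(r)>1$ the probability is $0$, which is consistent with the $\min\{1,\cdot\}$. Otherwise A6$'$ bounds it by $\exp(-(d-1)\tau_\theta(r)^2/2)$. A union bound over $j$ and taking the expectation over $R_U$ finish the argument, with the endpoint $r=0$ handled by $U(0)=1$.
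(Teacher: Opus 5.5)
Parts (i), (ii), (iv), the upper bound in (iii), and the single-cap exact-isotropy lower bound follow the paper's proof essentially step for step. The weak point is the disjointness of the simplex caps, and your primary argument there does not work. On a cap at level $t((\theta/3)^2)$ you get $U\le 2\theta/(3\sqrt3)<0.39$. However, a bound on $U$ only controls the distance from $z$ to its \emph{own} nearest prototype, relative to the average rival distance. In the simplex case it yields only $d_{\min}(z)\le UD/(1-U)$, up to about $0.63D$, which is not below $D/2$. So no strict nearest-prototype margin follows. More fundamentally, ``caps with different nearest prototypes cannot overlap'' presupposes that every point of the cap around $s_j$ has $s_j$ as its nearest prototype. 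That is exactly what has to be shown, and it is a property of the cap, not of the value of $U$.

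Your fallback is the correct route, and it is what the paper does. Since $t$ is decreasing and $\theta<1$, we have $t((\theta/3)^2)>t(1/9)=8(C-1)/(9C-8)\ge 4/5$. The inter-vertex angle $\phi$, with $\cos\phi=-1/(C-1)$, satisfies $\cos(\phi/2)=\sqrt{(C-2)/(2(C-1))}<1/\sqrt2<4/5$. Hence the caps are pairwise disjoint outright, with no null-set exception needed. The paper states the same check more crudely: cap half-angle $<\pi/4$ and centre separation $\ge\pi/2$. Your comparison of $t$ with $\cos(\phi/2)$ is the exact criterion.

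If you prefer a distance margin, take it from cap membership directly. On the sphere, $\langle z,s_j\rangle\ge tR^2$ gives $d_j(z)\le R\sqrt{2(1-t)}<R\sqrt{2/5}<R/\sqrt2\le D/2$, so $s_j$ is the unique nearest prototype on its cap.

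Finally, containment of these caps in $\{U\le\theta\}$ already follows from the balanced inner factor $\theta^2/2\ge\theta^2/9$ of Theorem~\ref{thm:geometry-U2}(iv). So A1$''$ is needed only for disjointness.
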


\begin{proof}[Proof sketch]
(i) follows from Lemma~\ref{lem:quantile}, total probability, and clipping at $1$ when the raw threshold exceeds the score range.
(ii) uses Proposition~\ref{prop:global-comparison} and Theorem~\ref{thm:geometry-U2}(ii).
(iii): on the sphere, the upper bound comes from a union of spherical caps together with (U-Norm) and A6$'$. The exact-isotropy lower bound uses the inner inclusion from Theorem~\ref{thm:geometry-U2}(iv); the simplex lower bound additionally uses Lemma~\ref{lem:rms-am} and cap disjointness.
(iv): condition on $R_U$, use isotropy of $V_U$, and apply A6$'$ pointwise.
Full proofs in Appendix~\ref{app:far}.
\end{proof}

\textbf{Interpretation.}
Equation~\eqref{eq:far-norm} shows that, in the normalized-route analysis, the FAR upper bound for $U(\Phi(Z_U))$ is an exponential term plus an additive anisotropy floor $C\,\delta_{\mathrm{cap}}$.
Hence under exact isotropy ($\delta_{\mathrm{cap}}=0$), the normalized-route upper bound decays exponentially in $d$ for any fixed $\theta<1$; under the radial-isotropic model, equation~\eqref{eq:far-radiso} gives an exponential-in-$d$ FAR upper bound for $U(Z_U)$.
The term $t(\theta^2)$ quantifies how much of the sphere is \emph{not} covered by the acceptance caps: smaller $\theta$ yields larger $t$, hence faster decay of the upper bound.

\subsection{Why the FAR bound can be loose in practice}\label{sec:far-loose}

Although Theorem~\ref{thm:far-krr} provides exponential-in-$d$ FAR scaling (the key qualitative insight), several factors contribute to conservatism in the bound, particularly in the prefactors and additive terms.

The proof uses a union bound over $C$ spherical caps, yielding $\FAR(\theta)\le C\exp(-(d-1)t(\theta^2)^2/2)+C\,\delta_{\mathrm{cap}}$. The term $\delta_{\mathrm{cap}}$ captures deviation of the true unknown-feature angular distribution from the ideal uniform sphere; when unknowns concentrate along a few learned directions, this term can be substantial. The union bound itself is conservative: when cap events overlap (which they often do in practice), summing $C$ marginal probabilities can substantially overcount their union.

Beyond the union bound, the proof strategy itself introduces looseness. It assumes isotropy of unknown features in all directions $v\in S^{d-1}$, but learned embeddings can develop directions aligned with simplex geometry since features are trained with respect to those very prototypes. Additionally, the proof uses $\{U\le\theta\}\subseteq\{U_2\le\theta^2\}$ (Proposition~\ref{prop:global-comparison}) to shift the analysis from the operational score $U$ to the auxiliary score $U_2$, which enlarges the acceptance region and thus inflates the FAR bound. The spherical-cap approximation itself, namely $\pcap(t;d)\le\exp(-(d-1)t^2/2)$ from A6$'$, can be crude for moderate $d$ and $t$ close to $1$.

These constants should be interpreted as conservative robustness terms rather than tight numerical predictions. The main qualitative content is the exponential dependence on $d$ in the bound. The result is mainly a tool for comparing dimension trends, not for predicting absolute FAR values without checking the distributional assumptions.

%% ============================================================
%% 7. UCDSC LOSS ANALYSIS
%% ============================================================
\section{UCDSC Loss Analysis}\label{sec:loss-analysis}

We now analyze the three components of the UCDSC loss~\eqref{eq:Lintra}--\eqref{eq:Lu}.
Proposition~\ref{prop:intra} and Theorem~\ref{thm:combined} yield deterministic in-sample score bounds for the realized training known and auxiliary background features, while Proposition~\ref{prop:grad} identifies a local gradient mechanism for the $L_u$ term.

\begin{proposition}[Small $L_{\mathrm{intra}}$ implies a small average training $U$-score]\label{prop:intra}
Under A1 (distinct prototypes): if $L_{\mathrm{intra}}\le\delta^2$ for some $\delta\ge 0$, then $\frac{1}{n}\sum_i U(f_i)\le 2\delta/\Delta$, where $\Delta=\min_{i\neq j}|s_i-s_j|$.
Under A1$''$ (simplex), this specializes to $2\delta/D$.
\end{proposition}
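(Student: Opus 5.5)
The plan is to reduce the statement to a pointwise bound $U(f_i)\le \frac{2}{\Delta}\,d_{\min}(f_i)$. Averaging that bound and applying Cauchy--Schwarz to pass from the squared loss to an average of distances then gives the result.

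\emph{Step 1 (pointwise bound).} Fix $z\in\R^d$. In the proof of Theorem~\ref{thm:generic-ball} we showed $\mu(z)\ge\Delta/2$ for every $z$. The argument was that for $k\ne j^*(z)$ the triangle inequality gives
\[
d_k(z)+d_{\min}(z)\ge\|s_k-s_{j^*}\|\ge\Delta,
\]
while $d_k(z)\ge d_{\min}(z)$; hence $d_k(z)\ge\Delta/2$. Therefore
\[
U(z)=\frac{d_{\min}(z)}{\mu(z)}\le \frac{2}{\Delta}\,d_{\min}(z).
\]
Since $d_{\min}(f_i)\le\|f_i-s_{y_i}\|$, this yields $U(f_i)\le \frac{2}{\Delta}\|f_i-s_{y_i}\|$. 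This pointwise step does not need the true label to be the nearest prototype.

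\emph{Step 2 (averaging).} Averaging over $i$ and applying Cauchy--Schwarz (equivalently, Jensen for the square root) gives
\[
\frac1n\sum_i U(f_i)\le\frac{2}{\Delta}\cdot\frac1n\sum_i\|f_i-s_{y_i}\|\le\frac{2}{\Delta}\Bigl(\frac1n\sum_i\|f_i-s_{y_i}\|^2\Bigr)^{1/2}=\frac{2}{\Delta}\sqrt{L_{\mathrm{intra}}}\le\frac{2\delta}{\Delta}.
\]
For the simplex specialization, Assumption~\ref{ass:simplex} gives $\Delta=D$, and Lemma~\ref{lem:lipschitz}(i) supplies $\mu\ge D/2$ directly.

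There is no real obstacle here. The only point needing care is the lower bound $\mu\ge\Delta/2$, which must hold regardless of ties and of whether $y_i=j^*(f_i)$. It holds because $\mu$ averages only over rivals of a nearest prototype, and each such rival distance is at least $\Delta/2$ by the triangle-inequality argument recalled above. One could also try to improve the constant using $\mu\ge A_{j^*}-d_{\min}$, but that is not needed for the stated bound.
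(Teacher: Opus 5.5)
Your proposal is correct and follows the paper's proof essentially verbatim. Both use the pointwise bound $U(f_i)\le 2d_{\min}(f_i)/\Delta$ obtained from $\mu\ge\Delta/2$ (Theorem~\ref{thm:generic-ball}), together with $d_{\min}(f_i)\le\|f_i-s_{y_i}\|$ and the QM--AM (Cauchy--Schwarz) inequality; the only difference is the cosmetic order in which the averaging and the pointwise bound are applied.
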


\begin{proof}
Since $d_{\min}(f_i)\le\|f_i-s_{y_i}\|$, the QM--AM inequality gives $\frac{1}{n}\sum_i d_{\min}(f_i)\le\delta$.
By Theorem~\ref{thm:generic-ball}, $\mu(f_i)\ge \Delta/2$, so $U(f_i)\le 2d_{\min}(f_i)/\Delta$.
Averaging gives the result. Under A1$''$, $\Delta=D$ and Lemma~\ref{lem:lipschitz}(i) gives $\mu(f_i)\ge D/2$.
\end{proof}

\begin{proposition}[Zero $L_o$ lower-bounds auxiliary background scores]\label{prop:outlier}
Under A1 (distinct prototypes), assume every class $j\in[C]$ appears among $\{y_i\}_{i=1}^n$.
If $L_o=0$, then $\|f_k^{\mathrm{bg}}-s_j\|\ge\sqrt{m}$ for all classes $j$ and all background samples $k$.
Consequently $U(f_k^{\mathrm{bg}})\ge\sqrt{m}/(D_{\max}+\sqrt{m})$ for all background samples $f_k^{\mathrm{bg}}$, where $D_{\max}:=\max_{i\neq j}|s_i-s_j|$.
Under A1$''$ (simplex), $D_{\max}=D$ and we recover $U(f_k^{\mathrm{bg}})\ge\sqrt{m}/(D+\sqrt{m})$.
\end{proposition}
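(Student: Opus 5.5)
Both claims follow by unpacking $L_o=0$ termwise and then applying the outer-bound estimate from the proof of Theorem~\ref{thm:generic-ball} at each background feature.

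\emph{Distance claim.} $L_o$ is a sum of hinge terms $\max(0,\cdot)$, each nonnegative. So $L_o=0$ forces every summand to vanish:
\[
\|f_k^{\mathrm{bg}}-s_{y_i}\|^2\ge m+\|f_i-s_{y_i}\|^2\ge m
\]
for all $i$ and $k$. Every class $j$ appears among the labels, so each $j$ equals $y_i$ for some $i$. Hence $\|f_k^{\mathrm{bg}}-s_j\|\ge\sqrt m$ for all $j$ and $k$, and therefore $d_{\min}(f_k^{\mathrm{bg}})\ge\sqrt m$.

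\emph{Score claim.} Fix a background feature $z=f_k^{\mathrm{bg}}$, let $j=j^*(z)$ and $r=d_{\min}(z)\ge\sqrt m$. As in the outer bound of Theorem~\ref{thm:generic-ball}, the triangle inequality gives $d_\ell(z)\le r+\|s_j-s_\ell\|\le r+D_{\max}$ for each $\ell\ne j$. Averaging over $\ell\neq j$ yields $\mu(z)\le r+D_{\max}$ (one could even use $A_j\le D_{\max}$). Thus
\[
U(z)\ge \frac{r}{r+D_{\max}}.
\]
The map $r\mapsto r/(r+D_{\max})$ is increasing on $[0,\infty)$, so $U(z)\ge\sqrt m/(\sqrt m+D_{\max})$.

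\emph{Simplex case.} Under A1$''$ all pairwise distances equal $D$, so $D_{\max}=D$ and the stated specialization follows.

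No step is a real obstacle. The only points needing care are using the coverage assumption to reach every class $j$, not just the labels that happen to occur, and invoking monotonicity in $r$ so that a lower bound on $r$ transfers to a lower bound on $U$. Ties in $j^*$ do not matter, since $\mu$ is tie-independent as shown in the proof of Theorem~\ref{thm:generic-ball}.
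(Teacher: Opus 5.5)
Your proof is correct and follows the paper's argument essentially step for step. Vanishing hinges plus class coverage give $d_{\min}(f_k^{\mathrm{bg}})\ge\sqrt{m}$, the triangle inequality gives $\mu\le d_{\min}+D_{\max}$, and your explicit monotonicity of $r\mapsto r/(r+D_{\max})$ simply spells out a step the paper leaves implicit.
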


\begin{proof}
All hinge summands in $L_o$ are nonnegative.
If $L_o=0$, each hinge is zero, so for every training index $i$ and background index $k$,
\[
\|f_k^{\mathrm{bg}}-s_{y_i}\|^2\ge m+\|f_i-s_{y_i}\|^2\ge m.
\]
Because every class appears among the labels $y_i$, this gives $\|f_k^{\mathrm{bg}}-s_j\|\ge\sqrt{m}$ for every $j,k$, and hence $d_{\min}(f_k^{\mathrm{bg}})\ge\sqrt{m}$.
The triangle inequality gives $\mu(f_k^{\mathrm{bg}})\le d_{\min}(f_k^{\mathrm{bg}})+D_{\max}$, yielding the bound.
\end{proof}

\begin{proposition}[Gradient structure of $L_u$]\label{prop:grad}
For $f_i\neq s_j$ for all $j$ and assuming the nearest prototype $j^*=j^*(f_i)$ is unique:
\begin{equation}\label{eq:grad}
\begin{aligned}
\nabla_{\!f_i} U(f_i)
&=\frac{1}{\mu}\Bigg(
\frac{f_i-s_{j^*}}{\|f_i-s_{j^*}\|}\\
&\hspace{2.6em}
-\frac{U(f_i)}{C-1}
\sum_{k\neq j^*}\frac{f_i-s_k}{\|f_i-s_k\|}
\Bigg).
\end{aligned}
\end{equation}
Here $\mu=\mu(f_i)$, and $\nabla_{f_i}L_u=(1/n)\nabla_{f_i}U(f_i)$ for the averaged loss.
Under gradient descent on the pointwise term $U(f_i)$, write
\[
\begin{aligned}
-\nabla U(f_i)&=-\frac{1}{\mu}u_{j^*}+\frac{U(f_i)}{\mu}\,q,\\
u_j&:=\frac{f_i-s_j}{\|f_i-s_j\|},\\
q&:=\frac{1}{C-1}\sum_{k\neq j^*}u_k .
\end{aligned}
\]
The first component moves $f_i$ toward $s_{j^*}$ and decreases $d_{\min}$.
The second component is proportional to $\nabla\mu(f_i)$ and, applied alone, increases the average non-nearest distance $\mu$.
The sum is the negative gradient of $U$, hence gives first-order descent for $U$; the two displayed components should not be interpreted as separately decreasing $U$.
\end{proposition}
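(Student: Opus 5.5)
The proof is a direct computation. On the open set where $f_i\neq s_j$ for all $j$ and the nearest prototype is unique, the formula for $U$ becomes a smooth function.

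First I would fix a point $f_i$ satisfying the hypotheses and let $j^*=j^*(f_i)$. Uniqueness gives a strict gap $d_{j^*}(f_i)<d_k(f_i)$ for every $k\neq j^*$. Each $d_k$ is continuous, so this strict gap persists on a neighbourhood $N$ of $f_i$. On $N$ the nearest index stays equal to $j^*$, so there
\[
U(z)=\frac{d_{j^*}(z)}{\mu_{j^*}(z)},\qquad \mu_{j^*}(z)=\frac{1}{C-1}\sum_{k\neq j^*}d_k(z).
\]
Since $z\neq s_k$ for every $k$ on a possibly smaller neighbourhood, each $d_k(z)=\|z-s_k\|$ is smooth there, with gradient $u_k=(z-s_k)/\|z-s_k\|$. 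The denominator satisfies $\mu_{j^*}\ge\Delta/2>0$ by Theorem~\ref{thm:generic-ball}. Hence $U$ is differentiable at $f_i$ and the quotient rule applies:
\[
\nabla U=\frac{u_{j^*}}{\mu}-\frac{d_{j^*}}{\mu^2}\nabla\mu
=\frac{1}{\mu}\Bigl(u_{j^*}-\frac{U}{C-1}\sum_{k\neq j^*}u_k\Bigr).
\]
This is \eqref{eq:grad}. Because $L_u=\frac1n\sum_i U(f_i)$ and only the $i$-th summand depends on $f_i$, we get $\nabla_{f_i}L_u=\frac1n\nabla U(f_i)$.

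Next I would read off the decomposition. By definition of $q$, $\nabla\mu(f_i)=q$, so
\[
-\nabla U=-\mu^{-1}u_{j^*}+(U/\mu)\,q .
\]
The second term is a nonnegative multiple of $\nabla\mu$. Its directional derivative of $\mu$ is $(U/\mu)\|q\|^2\ge0$, so to first order it increases $\mu$.

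For the first term, $-u_{j^*}$ is the unit vector pointing from $f_i$ toward $s_{j^*}$. Its directional derivative of $d_{j^*}=d_{\min}$ (on $N$) is $-\|u_{j^*}\|^2=-1$, so it decreases $d_{\min}$. Moving along $-u_{j^*}$ by a step $t\le d_{j^*}(f_i)$ lands on the segment toward $s_{j^*}$, where $d_{j^*}$ drops exactly by $t$.

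Finally, the sum equals $-\nabla U$, so the directional derivative of $U$ along it is $-\|\nabla U\|^2\le0$, which is first-order descent. I would add one sentence explaining why the components need not decrease $U$ separately. Moving toward $s_{j^*}$ alone can also shrink $\mu$, and the second component alone increases $d_{\min}$ whenever $\langle q,u_{j^*}\rangle>0$.

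The only step needing care is justifying that $j^*$ is locally constant, so that the nonsmooth minimum can be replaced by a fixed smooth $d_{j^*}$. This follows from uniqueness of the minimizer together with continuity of the distances, as above. The rest is routine calculus.
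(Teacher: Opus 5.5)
Your computation is correct and is the derivation the paper leaves implicit, since Proposition~\ref{prop:grad} is stated without a separate proof. Local constancy of $j^*$ from the strict gap, smoothness of each $d_k$ away from $s_k$, and the quotient rule with $\nabla\mu=q$ give \eqref{eq:grad} and the decomposition, and your first-order sign checks for the two components and for their sum are right.

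The one thing to fix is your closing explanation of why the components should not be read as separately decreasing $U$. A unique nearest prototype forces $U(f_i)<1$. So the derivative of $U$ in the direction $-u_{j^*}$ is $-\mu^{-1}\bigl(1-U(f_i)\langle q,u_{j^*}\rangle\bigr)\le-\mu^{-1}\bigl(1-U(f_i)\bigr)<0$. The first component alone therefore always decreases $U$ to first order, and a possible drop in $\mu$ never overturns this.

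The caution in the statement is justified instead by the second component. The derivative of $U$ in the direction $q$ is $\mu^{-1}\bigl(\langle u_{j^*},q\rangle-U(f_i)\|q\|^2\bigr)$. This is positive exactly when $\langle u_{j^*},q\rangle>U(f_i)\|q\|^2$. A positive $\langle q,u_{j^*}\rangle$ is not enough, because $\mu$ grows too. For example, take $C=2$ and let $f_i$ lie on the line through $s_1,s_2$ beyond $s_{j^*}$. There $q=u_{j^*}$, and this derivative equals $\mu^{-1}\bigl(1-U(f_i)\bigr)>0$.
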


\begin{theorem}[Conditional empirical separation for training known and auxiliary background samples]\label{thm:combined}
Under A1--A2, A4, assume every class $j\in[C]$ appears among $\{y_i\}_{i=1}^n$.
Let $D_{\max}:=\max_{i\neq j}|s_i-s_j|$.
If the realized embeddings satisfy $L_o=0$, then:
\begin{enumerate}[label=(\roman*),leftmargin=1.8em,itemsep=1pt]
\item Each training sample satisfies $U(f_i)\le 2\|f_i-s_{y_i}\|/\Delta$.
\item Each auxiliary background sample satisfies $U(f_k^{\mathrm{bg}})\ge\sqrt{m}/(D_{\max}+\sqrt{m})$.
In particular, if $\sqrt{m}\ge D_{\max}$, then $U(f_k^{\mathrm{bg}})\ge 1/2$.
\item A separating threshold $\theta^*\in[0,1]$ satisfying
\[
\max_i U(f_i)<\theta^*<\min_k U(f_k^{\mathrm{bg}})
\]
exists whenever the following sufficient condition holds:
\begin{equation}\label{eq:sep-condition}
\max_i\|f_i-s_{y_i}\|<\frac{\Delta\sqrt{m}}{2(D_{\max}+\sqrt{m})}.
\end{equation}
\end{enumerate}
Under A1$''$ (simplex), $\Delta=D_{\max}=D$ and these reduce to the simplex-specific bounds.
\end{theorem}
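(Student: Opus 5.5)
The plan is to treat the three parts in order, reusing the deterministic bounds already established. For part (i), fix a training index $i$. By definition $d_{\min}(f_i)\le\|f_i-s_{y_i}\|$. From the proof of Theorem~\ref{thm:generic-ball}, every non-nearest distance satisfies $d_k(f_i)\ge\Delta/2$, so $\mu(f_i)\ge\Delta/2$. Dividing gives $U(f_i)\le 2\|f_i-s_{y_i}\|/\Delta$. This is exactly the pointwise step inside Proposition~\ref{prop:intra}, applied without averaging, and it uses neither $L_o=0$ nor the class-coverage hypothesis.

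Part (ii) is Proposition~\ref{prop:outlier} verbatim. Since every hinge term vanishes and every class label appears, we get $\|f_k^{\mathrm{bg}}-s_j\|\ge\sqrt m$ for all $j$, hence $d_{\min}(f_k^{\mathrm{bg}})\ge\sqrt m$. The triangle inequality gives $\mu\le d_{\min}+D_{\max}$. Because $x\mapsto x/(x+D_{\max})$ is increasing, $U(f_k^{\mathrm{bg}})\ge d_{\min}/(d_{\min}+D_{\max})\ge\sqrt m/(\sqrt m+D_{\max})$. The addendum follows since $\sqrt m\ge D_{\max}$ makes this ratio at least $1/2$.

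For part (iii), write $M:=\max_i\|f_i-s_{y_i}\|$. Part (i) gives $\max_iU(f_i)\le 2M/\Delta$, and part (ii) gives $\min_kU(f_k^{\mathrm{bg}})\ge\sqrt m/(D_{\max}+\sqrt m)$. Condition~\eqref{eq:sep-condition} is precisely $2M/\Delta<\sqrt m/(D_{\max}+\sqrt m)$, so the chain
\[
\max_iU(f_i)\le\frac{2M}{\Delta}<\frac{\sqrt m}{D_{\max}+\sqrt m}\le\min_kU(f_k^{\mathrm{bg}})
\]
is strict in the middle. Any $\theta^*$ strictly between the two extremes, for instance their midpoint, works. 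It lies in $[0,1]$ because $U$ takes values in $[0,1]$: $d_{\min}\le d_k$ for all $k$ gives $U\le1$. Finally, the simplex specialization is immediate from A1$''$, which gives $\Delta=D_{\max}=D$.

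The only step needing care is in (iii): the strict inequality must be placed so that $\theta^*$ strictly separates the scores and is not merely weakly between them. The chain above handles this because the strictness is carried by~\eqref{eq:sep-condition} itself. I would also check the degenerate case of finitely many samples, where max and min are attained, and the case $K=0$; there I would read the minimum over an empty set as $+\infty$, so that any $\theta^*$ above the training maximum within $[0,1]$ suffices.
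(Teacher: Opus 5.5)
Your proposal is correct and follows the paper's own proof. Part (i) combines $d_{\min}(f_i)\le\|f_i-s_{y_i}\|$ with $\mu\ge\Delta/2$ from Theorem~\ref{thm:generic-ball}, part (ii) is Proposition~\ref{prop:outlier}, and part (iii) chains the two bounds through the strict inequality~\eqref{eq:sep-condition}; your extra checks (monotonicity of $x\mapsto x/(x+D_{\max})$, $\theta^*\in[0,1]$ via $U\le 1$, and the empty-background case) are details the paper leaves implicit.
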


\begin{proof}
(i): $d_{\min}(f_i)\le\|f_i-s_{y_i}\|$ and $\mu(f_i)\ge \Delta/2$ (Theorem~\ref{thm:generic-ball}).
(ii): Proposition~\ref{prop:outlier}.
(iii): The known upper bound $2\max_i\|f_i-s_{y_i}\|/\Delta$ must be strictly less than the background lower bound; \eqref{eq:sep-condition} is a sufficient condition for this strict separation.
\end{proof}

This is an in-sample statement about the realized training embeddings and the realized auxiliary background embeddings.
It does not by itself imply rejection guarantees for future unknown classes, nor does it directly quantify open-space risk under a test-time unknown distribution.
Within this scope, Theorem~\ref{thm:combined} shows how prototype closeness of known features and margin-based separation from the auxiliary background set induce score separation.
Proposition~\ref{prop:intra} provides only an average training-score bound and does not by itself imply the uniform condition in~\eqref{eq:sep-condition}.
The role of $L_u$ is different: Proposition~\ref{prop:grad} and Corollary~\ref{cor:lu} provide a local negative-gradient mechanism describing how minimizing $L_u$ can combine nearest-prototype attraction with an increase in average non-nearest distance.

%% ============================================================
%% 8. PRACTICAL COROLLARIES
%% ============================================================
\section{Practical Corollaries}\label{sec:corollaries}

\begin{corollary}[Threshold selection]\label{cor:threshold}
Under A1$'$--A2 and A5, for $\varepsilon\in(0,1)$, setting
\begin{equation}\label{eq:theta-eps}
\begin{aligned}
\theta_\varepsilon
&=\min\{1,\Theta_\varepsilon\},\\
\Theta_\varepsilon
&:=\max_y\left(\E[U(Z)\mid Y\!=\!y]
+\tau_y\sqrt{2\log(1/\varepsilon)}\right)\\
&=\max_y\left(\E[U(Z)\mid Y\!=\!y]
+L_U\sigma_y\sqrt{2\log(1/\varepsilon)}\right).
\end{aligned}
\end{equation}
This guarantees population known-class acceptance $1-\KRR(\theta_\varepsilon)=\Prob(U(Z)\le\theta_\varepsilon)\ge 1-\varepsilon$.
In the simplex specialization A1$''$, one may take $L_U=4/D$.
\end{corollary}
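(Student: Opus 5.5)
The plan is to reduce Corollary~\ref{cor:threshold} to Lemma~\ref{lem:quantile}, applied class by class, and then to combine the classes by the law of total probability. This is the same route as Theorem~\ref{thm:far-krr}(i), which the corollary restates. The balanced-code hypothesis A1$'$ is not actually used beyond A1. It matters only in that it fixes the setting, and in that under A1$''$ it lets us take $L_U=4/D$.

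\textbf{Case $\Theta_\varepsilon\ge 1$.} Then $\theta_\varepsilon=1$. Since $U(z)=d_{\min}(z)/\mu(z)$ and every rival distance satisfies $d_k(z)\ge d_{\min}(z)$, we have $\mu(z)\ge d_{\min}(z)$ and hence $U\le 1$ everywhere. So $\Prob(U(Z)\le 1)=1\ge 1-\varepsilon$, and this case is trivial.

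\textbf{Case $\Theta_\varepsilon<1$.} Then $\theta_\varepsilon=\Theta_\varepsilon$. Fix a class $y$ and set $t_y:=\Theta_\varepsilon-\E[U(Z)\mid Y=y]$. By definition of the maximum,
\[
t_y\ge\tau_y\sqrt{2\log(1/\varepsilon)}\ge 0 .
\]
The one-sided tail bound \eqref{eq:concentration} then gives
\[
\Prob(U(Z)>\theta_\varepsilon\mid Y=y)\le\Prob\bigl(U(Z)-\E[U(Z)\mid Y=y]\ge t_y\mid Y=y\bigr)\le \exp\!\bigl(-t_y^2/(2\tau_y^2)\bigr)\le\varepsilon .
\]
I would work with the tail bound directly rather than with the quantile in \eqref{eq:quantile}. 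This avoids the subtlety that a bound on $q_{1-\varepsilon}$ only yields $\Prob(U\le q)\ge 1-\varepsilon$ via right-continuity of the CDF, although that route also works.

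\textbf{Degenerate case $\tau_y=0$.} The tail bound is read through the convention in A5: there is zero mass strictly above the mean. Since $\theta_\varepsilon\ge\E[U(Z)\mid Y=y]$, the same conditional bound holds, with value $0$.

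\textbf{Averaging over classes.} Averaging with weights $\Prob(Y=y)$ gives $\KRR(\theta_\varepsilon)\le\varepsilon$, which is the claimed acceptance $1-\KRR(\theta_\varepsilon)\ge 1-\varepsilon$.

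\textbf{Second expression for $\Theta_\varepsilon$.} The equality of the two expressions is definitional, since $\sigma_y:=\tau_y/L_U$ by A5. Under A1$''$, Lemma~\ref{lem:lipschitz}(ii) gives $L_U=4/D$.

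The only step needing care is bookkeeping of the edge cases: the clipping at $1$ and the degenerate case $\tau_y=0$. Neither is a genuine obstacle.
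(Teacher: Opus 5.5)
Your proof is correct and follows essentially the paper's route. The paper proves this as Theorem~\ref{thm:far-krr}(i) in Appendix~\ref{app:far}: it applies Lemma~\ref{lem:quantile} per class, averages via total probability, and clips at $1$ using $U\in[0,1]$. Your only departures are cosmetic: you invoke the tail bound \eqref{eq:concentration} directly instead of its quantile inversion, and you spell out the $\tau_y=0$ convention, which the paper leaves implicit.
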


\noindent\textit{Validation use.}
In practice, estimate $\E[U(Z)\mid Y\!=\!y]$ and the score-tail scales $\tau_y$ (or equivalently the rescaled score-tail parameters $\sigma_y=\tau_y/L_U$) from a validation set.
This plug-in step is operational advice rather than a finite-sample theorem unless a safety margin or a correction such as Remark~\ref{rem:finite-sample} is added.

\begin{corollary}[Sufficient dimension for target FAR]\label{cor:dim}
Under the hypotheses of Theorem~\ref{thm:far-krr}(iii), with (U-Norm), $\delta_{\mathrm{cap}}=0$, $\theta\in(0,1)$, and $\eta\in(0,1)$, it suffices for $\FAR(\theta)\le\eta$ that
\begin{equation}\label{eq:min-d}
d\ge 1+\frac{2\log(C/\eta)}{t(\theta^2)^2}.
\end{equation}
Here $\log$ is natural, and the integer dimension should be rounded up.
For $C=8$, $\theta=0.5$, $\eta=0.01$: $1+2\log(800)/t(0.25)^2\approx 26.50$, hence integer $d\ge 27$.
This is a modest dimension relative to common learned embedding sizes.
\end{corollary}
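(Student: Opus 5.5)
The plan is to derive the statement directly from the FAR upper bound \eqref{eq:far-norm} in Theorem~\ref{thm:far-krr}(iii), specialised to $\delta_{\mathrm{cap}}=0$.

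\textbf{Step 1: reduce to a scalar inequality.} Under (U-Norm), A6$'$, $\delta_{\mathrm{cap}}=0$ and $\theta\in(0,1)$, the theorem gives
\[
\FAR(\theta)\le C\exp\!\left(-\frac{(d-1)t(\theta^2)^2}{2}\right).
\]
It therefore suffices to make the right-hand side at most $\eta$.

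\textbf{Step 2: solve for $d$.} First check that $t(\theta^2)>0$. Since $\theta^2\in(0,1)$, the numerator $1-\theta^2$ and the denominator $1+\theta^2/(C-1)$ are both positive. Taking natural logarithms, $C\exp(-(d-1)t^2/2)\le\eta$ is equivalent to
\[
(d-1)\,t(\theta^2)^2/2\ge\log(C/\eta).
\]
Dividing by $t(\theta^2)^2/2>0$ gives \eqref{eq:min-d}. The right-hand side is increasing in $d$, so any larger integer $d$ also works. This justifies rounding up.

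\textbf{Step 3: numerical example.} Take $C=8$ and $\theta=0.5$, so $\theta^2=0.25$. Then
\[
t(0.25)=\frac{0.75}{1+0.25/7}=\frac{0.75}{29/28}=\frac{21}{29}\approx 0.7241,
\]
so $t^2\approx 0.5244$. Also $\log(8/0.01)=\log 800\approx 6.6846$. Hence
\[
1+\frac{2(6.6846)}{0.5244}\approx 1+25.49\approx 26.50,
\]
and the smallest admissible integer dimension is $d=27$.

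\textbf{Main obstacle.} There is no conceptual difficulty. The only care points are confirming $t(\theta^2)>0$ so the division is legitimate, and computing the constant accurately enough to justify the stated $26.50$.
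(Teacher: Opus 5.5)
Your proposal is correct and follows the paper's argument, which is left implicit there: specialise \eqref{eq:far-norm} from Theorem~\ref{thm:far-krr}(iii) to $\delta_{\mathrm{cap}}=0$, require $C\exp(-(d-1)t(\theta^2)^2/2)\le\eta$, and solve for $d$ using $t(\theta^2)>0$; your numerical check ($t(0.25)=21/29$, giving $\approx 26.50$ and hence $d\ge 27$) is also right. One small wording slip: the quantity that increases with $d$ is $(d-1)t(\theta^2)^2/2$, not the right-hand side of \eqref{eq:min-d}, although rounding up needs no monotonicity argument since $d$ is an integer.
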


\begin{corollary}[Expand factor invariance under normalization]\label{cor:expand}
For fixed $\theta,d,C$, and $\delta_{\mathrm{cap}}$, the normalized-route FAR upper bound~\eqref{eq:far-norm} contains no explicit dependence on the prototype radius $R$.
If $\theta$ is instead selected by the known-recall rule, changing $R$ can affect the selected threshold through learned score means $m_y$, score-tail scales $\tau_y=L_U\sigma_y$, and learning-induced changes in the unknown angular distribution, including $\delta_{\mathrm{cap}}$.
Without normalization (U-RadIso), the displayed bound depends on $R$ through $\tau_\theta(R_U)$; whether increasing $R$ tightens the bound depends on how the unknown-radius distribution of $R_U$ is positioned relative to $R$.
\end{corollary}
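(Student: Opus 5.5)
The claim has two parts. First, the normalized-route upper bound~\eqref{eq:far-norm} does not depend on $R$. Second, the indirect and unnormalized effects of $R$ are as described. I will prove the first by inspecting the bound and then justify the qualitative statements by tracing where $R$ enters each quantity.

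\emph{No explicit $R$-dependence of~\eqref{eq:far-norm}.} The right-hand side is $C\exp(-(d-1)t(\theta^2)^2/2)+C\,\delta_{\mathrm{cap}}$. Here $t(\rho)=(1-\rho)/(1+\rho/(C-1))$ is defined in Theorem~\ref{thm:geometry-U2} using only $\rho$ and $C$. So once $\theta,d,C,\delta_{\mathrm{cap}}$ are fixed, the right-hand side is a fixed number.

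To make this more than a formal remark, I will also give a scaling check from the proof route. Under (U-Norm), $\Phi(Z_U)$ lies on the sphere of radius $R$. By Theorem~\ref{thm:geometry-U2}(iii), acceptance there is the cap event $\langle \Phi(Z_U)/R, s_j/R\rangle\ge t(\theta^2)$. This is scale-free, because the prototypes $s_j=R\bar s_j$ and $\Phi(z)/R$ is a unit vector. In addition, $U$ is homogeneous of degree zero under the joint scaling $(z,s_1,\dots,s_C)\mapsto(cz,cs_1,\dots,cs_C)$, since $U$ is a ratio of distances. Hence $U(\Phi(Z_U))$ depends only on the direction of $Z_U$ and the unit-norm code, so even the exact FAR is $R$-invariant for fixed $\theta$ and a fixed angular law.

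\emph{Indirect dependence through the recall rule.} Corollary~\ref{cor:threshold} sets $\theta_\varepsilon=\min\{1,\max_y(m_y+\tau_y\sqrt{2\log(1/\varepsilon)})\}$ with $m_y:=\E[U(Z)\mid Y=y]$. I will note that $m_y$ and $\tau_y$ are functionals of the known-feature law relative to the prototypes. Since $f$ is trained against prototypes of radius $R$, changing $R$ can change these quantities.

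The Lipschitz constant is different. Scale invariance gives $L_U=4/\Delta\propto 1/R$, so $\sigma_y=\tau_y/L_U$ rescales with $R$ while $\tau_y$ is scale-free for a fixed scaled law. The relevant change therefore comes only from learning. Likewise, the trained unknown angular distribution can change, which moves $\delta_{\mathrm{cap}}$. This part is a descriptive consequence of the definitions rather than a sharp inequality, so I would state it as "can affect" with these mechanisms identified.

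\emph{(U-RadIso).} In~\eqref{eq:far-radiso}, $\tau_\theta(r)=\tfrac{t(\theta^2)}{2}(r/R+R/r)$, which depends only on the ratio $r/R$. It is minimized at $r=R$ and increases as $r/R$ moves away from $1$. The bound is the expectation of a decreasing function of $\min\{1,\tau_\theta(R_U)\}$. So increasing $R$ tightens the bound exactly when it pushes the mass of $R_U/R$ away from $1$, and loosens it when it moves that mass toward $1$.

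The main obstacle is not mathematical but definitional: the phrase ``no explicit dependence'' must be read for fixed $\delta_{\mathrm{cap}}$ and a fixed unknown angular law. I would state that convention explicitly so the invariance claim is not confused with invariance of the trained system.
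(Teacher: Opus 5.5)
Your proposal is correct. The paper gives no separate proof of this corollary. It is read off directly from \eqref{eq:far-norm}, where $t(\theta^2)$ involves only $\theta$ and $C$, and from the explicit $R$ inside $\tau_\theta$ in \eqref{eq:far-radiso}. Your inspection step is therefore the intended argument.

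What you add goes beyond the paper and is worth keeping. The key extra ingredient is degree-zero homogeneity of $U$ under $(z,s_1,\dots,s_C)\mapsto(cz,cs_1,\dots,cs_C)$. It buys three things:
\begin{itemize}
\item It upgrades invariance of the bound to invariance of the exact normalized-route FAR, for a code obtained by scaling a fixed unit code and a fixed angular law of $Z_U/\|Z_U\|$. The same holds for $\delta_{\mathrm{cap}}$.
\item It explains why $\tau_y$ is scale-free for a fixed law of $Z/R$, while $\sigma_y=\tau_y\Delta/4$ grows linearly in $R$. This matches Remark~\ref{rem:sigma-R}, and isolates learning as the only channel for the recall-rule effect.
\item Through $\tau_\theta(r)=\frac{t(\theta^2)}{2}(r/R+R/r)$, it shows that the U-RadIso bound depends only on the law of $R_U/R$.
\end{itemize}

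Two small precision points.
\begin{itemize}
\item Theorem~\ref{thm:geometry-U2}(iii) characterizes the $U_2$-acceptance set on the sphere, and that set only contains the $U$-acceptance event. Your exact-FAR invariance should therefore rest on the homogeneity of $U$, which you do supply, not on the cap description.
\item In the U-RadIso part, "exactly when" overstates the conclusion. Pointwise, $\partial_R(r/R+R/r)=(R^2-r^2)/(rR^2)$. So increasing $R$ weakly tightens the integrand where $R_U<R$ and weakly loosens it where $R_U>R$. Nothing changes where the clipping $\min\{1,\cdot\}$ is active. When the law of $R_U$ straddles $R$, the net effect is genuinely mixed. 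That is precisely the paper's qualitative claim, so you should state it in this weaker form rather than as an iff.
\end{itemize}
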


\begin{corollary}[Local $L_u$ descent mechanism]\label{cor:lu}
Assume the differentiability conditions of Proposition~\ref{prop:grad} hold at a training feature $f_i$.
The negative gradient of the pointwise term $U(f_i)$ combines (a) attraction toward the nearest prototype $s_{j^*(f_i)}$ and (b) a component that, applied alone, increases the average non-nearest distance $\mu(f_i)$.
When $j^*(f_i)=y_i$, the attraction component is aligned with the prototype pull encouraged by $L_{\mathrm{intra}}$.
Thus $L_u$ provides a local first-order mechanism distinct from $L_{\mathrm{intra}}$: it can combine nearest-prototype attraction with increased average non-nearest distance.
This mechanism is consistent with, but does not prove, empirical OSCR improvements from adding $L_u$ without background data \cite{aditya2025ucdsc,dhamija2018reducing}; it does not by itself imply global open-space reduction or test-time AUROC/OSCR gains.
\end{corollary}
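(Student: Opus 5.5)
The statement is Corollary~\ref{cor:lu}. I would derive it directly from the gradient formula~\eqref{eq:grad} of Proposition~\ref{prop:grad}, treating it as an interpretation of the two terms in the negative gradient.

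\textbf{Step 1 (setup).} Work at a point $f_i$ with $f_i\neq s_j$ for all $j$ and a unique nearest index $j^*$. On a neighborhood of $f_i$, the index $j^*$ stays the unique minimizer by continuity of the $d_k$. So there $d_{\min}=d_{j^*}$ and $\mu=\mu_{j^*}$ are smooth, with
\[
\nabla d_{\min}=u_{j^*},\qquad \nabla\mu=\tfrac{1}{C-1}\sum_{k\neq j^*}u_k=q .
\]
Applying the quotient rule to $U=d_{\min}/\mu$ gives $-\nabla U=-\mu^{-1}u_{j^*}+(U/\mu)\,q$, which is exactly the decomposition in Proposition~\ref{prop:grad}.

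\textbf{Step 2 (attraction component, claim (a)).} The first term is $-\mu^{-1}u_{j^*}$. It is a positive multiple of $(s_{j^*}-f_i)/\|s_{j^*}-f_i\|$, so it points toward the nearest prototype. Its directional derivative of $d_{\min}$ is $\langle u_{j^*},-\mu^{-1}u_{j^*}\rangle=-1/\mu<0$, so it decreases $d_{\min}$ to first order. When $j^*(f_i)=y_i$, this direction is parallel to
\[
-\nabla_{f_i}\|f_i-s_{y_i}\|^2=-2(f_i-s_{y_i}),
\]
which is the pull from $L_{\mathrm{intra}}$. This gives the stated alignment.

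\textbf{Step 3 (non-nearest component, claim (b)).} The second term is $(U/\mu)\,q=(U/\mu)\nabla\mu$. Its directional derivative of $\mu$ is $(U/\mu)\|q\|^2\ge 0$. This is strict when $U>0$, which holds since $f_i\neq s_{j^*}$, and when $q\neq 0$. So, applied alone, it increases $\mu$ to first order.

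\textbf{Step 4 (descent and scope).} The sum of the two terms is $-\nabla U$, so a small step gives $U$ a first-order change of $-\|\nabla U\|^2\le 0$. Combined with $\nabla_{f_i}L_u=(1/n)\nabla U(f_i)$, this gives the local descent mechanism. The remaining sentences of the corollary are disclaimers and need no proof.

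\textbf{Main obstacle.} The only delicate point is the degenerate case $q=0$, where the rival unit vectors cancel. Then component (b) vanishes, and the honest conclusion is that $\mu$ is non-decreasing rather than strictly increasing. I would state the strict version only under $q\neq0$ and phrase (b) as ``does not decrease, and increases whenever $q\neq 0$.''
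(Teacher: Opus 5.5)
Your proposal is correct and follows the paper's route: the corollary is read off directly from the decomposition $-\nabla U=-\mu^{-1}u_{j^*}+(U/\mu)\,q$ in Proposition~\ref{prop:grad}, with $q=\nabla\mu$ near $f_i$, and the $L_{\mathrm{intra}}$ pull $-2(f_i-s_{y_i})$ is a positive multiple of $-u_{y_i}$ when $j^*=y_i$. Your caveat that component (b) strictly increases $\mu$ only when $q\neq 0$ is a legitimate sharpening of the paper's wording, which tacitly assumes $\nabla\mu(f_i)\neq 0$; the degenerate case does occur for general distinct prototypes, e.g.\ $s_2=(-1,0)$, $s_3=(1,0)$, $s_1=(0,0.1)$, $f_i=0$.
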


\subsection{Practical Estimation of $\sigma$}\label{sec:sigma-est}

Assumption~\ref{ass:conc} introduces the per-class upper-tail scales $\tau_y$ directly.
The threshold rule (Theorem~\ref{thm:far-krr}(i)) can be written in terms of $\tau_y$ or, equivalently, the rescaled score-tail parameters $\sigma_y=\tau_y/L_U$.
We now give a concrete diagnostic procedure to estimate these quantities from a labeled validation set.

\noindent\textbf{Validation-based $\sigma$ proxy.}
Let $\{(x_i,y_i)\}_{i=1}^N$ be a held-out validation set.
Compute $z_i=f(x_i)$ and $u_i:=U(z_i)$.
For each class $y\in[C]$, let $I_y:=\{i:y_i=y\}$ and assume $|I_y|>0$.
Let $\hat F_y(u):=|I_y|^{-1}\sum_{i\in I_y}\mathbf 1\{u_i\le u\}$ and use the empirical inverse-CDF convention
\[
\hat q_{p,y}:=\inf\{u:\hat F_y(u)\ge p\}.
\]
Define:
\begin{enumerate}[label=(\alph*),leftmargin=1.8em,itemsep=1pt]
\item \textbf{Empirical mean:}\; $\hat m_y:=|I_y|^{-1}\sum_{i\in I_y}u_i$.
\item \textbf{Empirical quantiles:}\; $\hat q_{1-\varepsilon,y}:=$ the $(1-\varepsilon)$-quantile of $\{u_i\}_{i\in I_y}$.
\item \textbf{Upper-tail proxy:}\; For a grid $\mathcal E\subset(0,1)$ (e.g.\ $\mathcal E=\{0.1,0.05,0.01\}$),
\begin{equation}\label{eq:tau-hat}
\hat\tau_y:=\max_{\varepsilon\in\mathcal E}\;\frac{\bigl[\hat q_{1-\varepsilon,y}-\hat m_y\bigr]_+}{\sqrt{2\log(1/\varepsilon)}},
\end{equation}
with $[\cdot]_+$ as defined in Section~\ref{sec:prelim}.
\end{enumerate}
Then back out $\hat\sigma_y=\hat\tau_y/L_U$, and set $\hat\sigma:=\max_{y\in[C]}\hat\sigma_y$. In the simplex specialization A1$''$, this simplifies to $\hat\sigma_y=(D/4)\hat\tau_y$.

\textbf{Justification.}
By Lemma~\ref{lem:quantile}, the population quantities satisfy
\[
\frac{[q_{1-\varepsilon}(U(Z)\mid Y\!=\!y)-\E[U(Z)\mid Y\!=\!y]]_+}{\sqrt{2\log(1/\varepsilon)}}\le\tau_y
\]
for every $\varepsilon\in(0,1)$.
Replacing the population mean and quantiles by their empirical counterparts over a grid $\mathcal E$ yields a practical proxy for $\tau_y$.
Even at the population level this inversion is only a lower-envelope diagnostic for the tail behavior over the chosen grid $\mathcal E$, not a characterization of the full A5 scale.
With empirical inputs it can substantially underestimate $\tau_y$, especially for rare tails or small validation sets; consequently, the resulting threshold $\hat\theta_\varepsilon$ can be optimistic (too small).
In practice, adding a safety margin or applying a finite-sample correction such as Remark~\ref{rem:finite-sample} can reduce this risk, but the proxy should not be read as a guarantee that $\hat\sigma\ge\sigma$.
The clipping $[\cdot]_+$ handles the (rare) case where the empirical quantile falls below the empirical mean due to finite-sample fluctuation.

\textbf{Plugging in.}
With $\hat\sigma$ in hand, the threshold selection proxy (Corollary~\ref{cor:threshold}) becomes fully data-driven:
\begin{equation}\label{eq:theta-practical}
\begin{aligned}
\hat\theta_\varepsilon
&:=\min\!\left\{1,\max_{y\in[C]}\left(\hat m_y+\hat\tau_y\sqrt{2\log(1/\varepsilon)}\right)\right\}\\
&=\min\!\left\{1,\max_{y\in[C]}\left(\hat m_y+L_U\hat\sigma_y\sqrt{2\log(1/\varepsilon)}\right)\right\},
\end{aligned}
\end{equation}
which provides a practical, theorem-motivated alternative to grid-search threshold tuning.

\begin{remark}[Finite-sample correction]\label{rem:finite-sample}
For small validation sets ($|I_y|\lesssim 50$), the empirical quantiles $\hat q_{1-\varepsilon,y}$ are noisy.
A Dvoretzky--Kiefer--Wolfowitz (DKW) correction can be used with the inverse-CDF convention above:
replace $\hat q_{1-\varepsilon,y}$ by $\hat q_{1-\varepsilon+\delta_n,y}$, where
\[
\delta_n=\sqrt{\frac{\log(2/\alpha)}{2|I_y|}}
\]
for classwise confidence level $1-\alpha$.
This corrected quantile level is feasible only when $\delta_n\le\varepsilon$; otherwise the distribution-free fallback from $U\in[0,1]$ is the trivial upper bound $1$.
For simultaneous confidence over all classes, apply a union bound, for example by replacing $\alpha$ with $\alpha/C$.
If forming an upper-confidence finite-grid tail proxy, one should also add a mean margin such as $h_y=\sqrt{\log(2/\alpha)/(2|I_y|)}$ in the numerator, e.g. $[\hat q-\hat m_y+h_y]_+$.
Even with these corrections, the result controls only the chosen finite-grid quantile proxy, not the full A5 sub-Gaussian scale, and therefore does not prove $\hat\sigma\ge\sigma$.
\end{remark}

\begin{remark}[Connection to expand factor]\label{rem:sigma-R}
Under the generic A1/A1$'$ Lipschitz bound $L_U=4/\Delta$,
\[
\hat\sigma_y=\frac{\hat\tau_y}{L_U}=\frac{\Delta}{4}\,\hat\tau_y.
\]
Hence the dependence on the expand factor $R$ enters through how the minimum separation $\Delta$ of the chosen prototype family scales with $R$. If the prototypes are obtained by scaling a fixed angular code, then $\Delta$ is linear in $R$. In the simplex specialization, $\Delta=D=R\sqrt{2C/(C-1)}$, recovering $\hat\sigma_y=(D/4)\hat\tau_y$.
In practice, $\hat\tau_y$ (the directly observable tail scale of $U$-scores) is the more natural quantity.
Under A5 alone, $\sigma_y$ should be viewed as a rescaled score-tail parameter for $U$, not as a generic feature-space concentration parameter.
\end{remark}

The complete estimation pipeline is summarized in Algorithm~\ref{alg:sigma}.

\begin{algorithm}[t]
\caption{Practical estimation of $\hat\sigma$ and $\hat\theta_\varepsilon$}\label{alg:sigma}
\begin{algorithmic}[1]
\Require Trained model $f$, prototypes $\{s_1,\dots,s_C\}$, validation set $\{(x_i,y_i)\}_{i=1}^N$ with each class represented, target known-recall miss rate $\varepsilon$, quantile grid $\mathcal E$
\Ensure Threshold proxy $\hat\theta_\varepsilon$, score-tail proxy $\hat\sigma$
\State Compute $z_i\gets f(x_i)$ and $u_i\gets U(z_i)$ for all $i$
\State Compute $\Delta\gets \min_{i\neq j}\|s_i-s_j\|$, \; $L_U\gets 4/\Delta$ \Comment{if simplex: $L_U=4/D$}
\For{each class $y\in[C]$}
    \State $I_y\gets\{i:y_i=y\}$
    \State $\hat m_y\gets|I_y|^{-1}\sum_{i\in I_y}u_i$
    \For{each $\varepsilon'\in\mathcal E$}
        \State Compute $\hat q_{1-\varepsilon',y}$ from $\{u_i\}_{i\in I_y}$
    \EndFor
    \State $\hat\tau_y\gets\max_{\varepsilon'\in\mathcal E}\;[\hat q_{1-\varepsilon',y}-\hat m_y]_+/\sqrt{2\log(1/\varepsilon')}$
    \State $\hat\sigma_y\gets\hat\tau_y/L_U$
\EndFor
\State $\hat\sigma\gets\max_y\hat\sigma_y$
\State $\hat\theta_\varepsilon\gets\min\!\left\{1,\max_y\left(\hat m_y+\hat\tau_y\sqrt{2\log(1/\varepsilon)}\right)\right\}$
\State \Return $\hat\theta_\varepsilon,\;\hat\sigma$
\end{algorithmic}
\end{algorithm}

\subsection{Practical Estimation of $\delta_{\mathrm{cap}}$}\label{sec:delta-cap-est}

To apply the FAR bound (Theorem~\ref{thm:far-krr}(iii)), one must estimate $\delta_{\mathrm{cap}}$ from data.
Collect a set of unknown embeddings $\{z_i\}_{i=1}^N$ and radially normalize each to radius $R$ via $\Phi(z_i)=Rz_i/\|z_i\|$ (discard $z_i=0$), so $\Phi(z_i)/R\in S^{d-1}$.
Draw $M$ random directions $v_m\sim\mathrm{Unif}(S^{d-1})$, and choose thresholds $t$ on a fine grid $\mathcal T\subset[0,1]$.
For each sampled direction and threshold, compute the one-direction empirical cap tail
\[
\hat p_m(t)=\frac{1}{N}\sum_{i=1}^N\mathbf{1}\{\langle\Phi(z_i)/R,v_m\rangle\ge t\}.
\]
Compute the corresponding spherical-cap probability $\pcap(t;d)=\Prob(\langle W,v\rangle\ge t)$ for $W\sim\mathrm{Unif}(S^{d-1})$.
Then define the sampled-direction diagnostic
\[
\widehat\delta_{\mathrm{cap}}=\max_{m\in[M],\,t\in\mathcal T}|\hat p_m(t)-\pcap(t;d)|.
\]
Optionally, bootstrap over embeddings and directions to quantify finite-sample uncertainty.

\noindent\textit{Caveat.}
Assumption~\ref{ass:unknown} (U-Norm) defines $\delta_{\mathrm{cap}}$ as a worst-case (supremum-over-$v$) deviation.
The finite estimator above scans a random set of directions and a threshold grid, so it is a Monte Carlo lower bound on the true supremum rather than a certified upper bound.
For isotropic unknowns, all directions are distributionally equivalent, but the maximum over sampled directions and thresholds still includes finite-sample fluctuation.
For anisotropic unknowns, the same diagnostic is intended to reveal substantially larger cap deviations; a rigorous upper bound would require an $\varepsilon$-net or covering argument over $S^{d-1}$ combined with a DKW-type bound over thresholds.

\subsection{Numerical Illustrations}\label{sec:numerical}

We report results from synthetic validation experiments implementing Algorithm~\ref{alg:sigma} and the $\delta_{\mathrm{cap}}$ estimator.

\textbf{$\hat\sigma$ estimation.}
For a regular-simplex illustration with $C=4$ classes, $R=50$, $n=30$ samples per class, and $\varepsilon=0.01$, Algorithm~\ref{alg:sigma} yields per-class results in Table~\ref{tab:sigma}.
Here $D=81.6$ and $L_U=4/D$, so the reported $\hat\sigma_y$ values use the simplex specialization.
The deterministic synthetic script reports $\hat\theta_{0.01}=0.092$, well below $1$, confirming that known features cluster tightly around prototypes.
For this deliberately small $n=30$ example, the DKW finite-sample correction at classwise $\alpha=0.05$ widens the threshold to the trivial upper bound $1$, illustrating why the uncorrected estimate should be read as a diagnostic proxy rather than a finite-sample guarantee.

\begin{table}[t]
\centering
\caption{Per-class $\hat\sigma$ estimation for the regular-simplex illustration ($C=4$, $R=50$, $D=81.6$, $L_U=4/D$).
Values of $\hat\sigma_y$ are computed from unrounded $\hat\tau_y$; the displayed $\hat\tau_y$ are rounded to four decimal places.}\label{tab:sigma}
\small
\input{table1_sigma_proxy.tex}
\end{table}

\textbf{$\widehat\delta_{\mathrm{cap}}$ estimation.}
With $N=1{,}500$ isotropic unknowns in $d=16$, $R=50$, and $M=4{,}096$ random directions, the generated diagnostic gives $\widehat\delta_{\mathrm{cap}}=4.36\times 10^{-2}$.
This is consistent with the finite-sample supremum error expected when scanning many directions and thresholds; for anisotropic unknown distributions, the same estimator is intended to reveal substantially larger cap deviations.

\textbf{FAR vs.\ dimension $d$ (synthetic sanity check).}
To illustrate Theorem~\ref{thm:far-krr}(iii), we sample $5{,}000$ unknowns uniformly on $S^{d-1}$ (scaled to $R=50$) for $C=4$, $\theta=0.2$, and $d\in\{3,5,8,16,32,64\}$.
Table~\ref{tab:far-vs-d} and Figure~\ref{fig:far-vs-d} show that empirical FAR drops rapidly; for $d\ge 8$, we observe zero false acceptances among $5{,}000$ sampled unknowns, while the theoretical bound captures the expected exponential trend.

\begin{table}[t]
\centering
\caption{FAR vs.\ dimension $d$. Empirical FAR from $5{,}000$ uniform-sphere unknowns and the Theorem~\ref{thm:far-krr}(iii) bound ($C=4$, $R=50$, $\theta=0.2$, $t(\theta^2)\approx 0.947$). The bound column is clipped at $1$, i.e.\ it reports $\min\{1,C\exp(-(d-1)t(\theta^2)^2/2)\}$. Entries reported as $<\!2\times 10^{-4}$ correspond to zero observed false acceptances among $5{,}000$ samples.}\label{tab:far-vs-d}
\small
\input{table2_far_vs_d.tex}
\end{table}

\begin{figure}[t]
\centering
\includegraphics[width=\columnwidth]{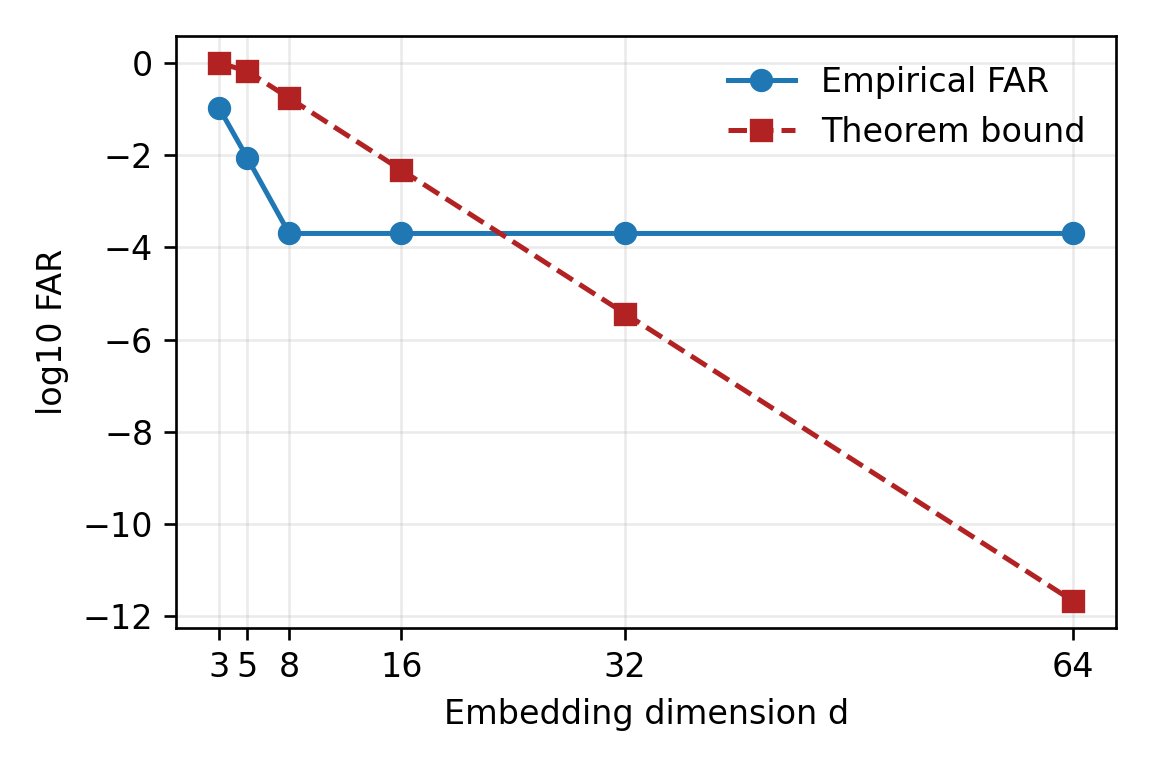}
\caption{$\log_{10}(\mathrm{FAR})$ vs.\ embedding dimension $d$ for uniform-sphere unknowns ($C=4$, $R=50$, $\theta=0.2$). Blue: empirical FAR, with zero-count cases displayed at the detection floor $1/5000=2\times 10^{-4}$; red dashed: Theorem~\ref{thm:far-krr}(iii) upper bound. The linear trend in log-scale is consistent with exponential-in-$d$ FAR decay in this isotropic example.}
\label{fig:far-vs-d}
\end{figure}

\textbf{Prospective operational note: data-driven thresholding via $\hat\sigma$.}
Algorithm~\ref{alg:sigma} provides a theorem-motivated threshold-selection rule for future implementations: given a target known-class miss rate $\varepsilon$, it selects $\hat\theta_\varepsilon$ from validation-set $U$-score quantiles via equation~\eqref{eq:theta-practical}.
This is intended as operational guidance rather than validation.
Without a safety margin or DKW correction, the raw plug-in threshold can be optimistic because $\hat\tau_y$ may underestimate the true tail scale.
The rule controls known-class recall, not AUROC or OSCR directly.
Datasets with larger intra-class spread would be expected to yield larger $\hat\tau_y$, if that spread is actually observed on validation data.

%% ============================================================
%% 9. SCORER-AGNOSTIC VALIDATION
%% ============================================================
%%% Section 9: Experimental Validation of Theoretical Claims
%%% Restructured around three theory-driven validation questions.
%%% Drop-in replacement for the current Section 9.

\section{Experimental Validation}
\label{sec:experiments}

The preceding sections establish analytic properties of balanced prototype geometry: Lipschitz stability of the ratio scores (Theorem~\ref{thm:generic-ball}), acceptance-ball characterisation of $U_2$ (Theorem~\ref{thm:geometry-U2}), the sharp dichotomy between exact simplex equidistance and below-simplex defect (Theorem~\ref{thm:dichotomy}), harmonic constructions that operate below the simplex dimension constraint (Section~5.2), the tangent-ball separation diagnostic $\tau_{\mathrm{sep}}$ (Corollary~\ref{cor:tau-sep}), and FAR bounds under isotropic unknown-feature assumptions (Theorem~\ref{thm:far-krr}).
These are \emph{structural} claims about how the scoring geometry behaves; they do not predict that the ratio scores $U$ and $U_2$ will dominate every post-hoc OOD detector on every benchmark.
The experiments in this section are therefore designed to evaluate the theory-facing claims, not to claim state-of-the-art detection performance.
We organise the evaluation around three questions.

\paragraph{Q1.}  Do trained prototype representations actually exhibit the geometry diagnostics suggested by the theory---centered barycentre, equal norms, and controlled pairwise-distance variation?

\paragraph{Q2.}  Does the harmonic construction (the $C$-gon exposition and its multi-frequency extension) produce usable representations when the embedding dimension falls below $C{-}1$, where simplex prototypes are infeasible?

\paragraph{Q3.}  How do the theory's ratio scores $U$, $U_2$ compare to post-hoc logit-based scorers on \emph{the same} trained embeddings, and does this comparison align with the theory's scope?

\subsection{Protocol}

\paragraph{Datasets.}
We evaluate on five open-set benchmarks spanning natural images and medical imaging:
CIFAR+10 and CIFAR+50 use CAC splits with 4 known CIFAR-10 classes and 10 or 50 unknown CIFAR-100 classes, respectively;
CIFAR-100 uses an 80 known / 20 unknown holdout;
BloodMNIST has 8 cell-type classes split 4 known / 4 unknown;
and PathMNIST has 9 colon pathology tissue classes split 6 known / 3 unknown.
Each dataset uses 5 disjoint train/val/test splits.

\paragraph{Representations.}
Four representation methods share a common ResNet-18 backbone and embedding dimension $d{=}8$:
\textbf{(i)}~\emph{CE}:\ standard cross-entropy with a linear classifier head;
\textbf{(ii)}~\emph{Simplex}:\ equiangular tight frame prototypes (requires $d \ge C{-}1$);
\textbf{(iii)}~\emph{Harmonic}:\ the multi-frequency harmonic construction from Section~5.2 (defined for all $d \ge 2$);
\textbf{(iv)}~\emph{Learned}:\ \texttt{nn.Parameter} centres initialised from harmonic codes and updated end-to-end.
The scorer-agnostic prototype experiments use the implementation loss actually evaluated in code: cross-entropy on negative squared prototype distances, plus a compactness penalty and a true-label squared-ratio regulariser.
The full Section~7 objective with an explicit open/background term is the theoretical UCDSC objective and is not invoked in these no-outlier-exposure prototype runs.

\paragraph{Scorer-agnostic evaluation.}
After training each representation once, we freeze the embeddings and evaluate with 11~scorers: $U$ (unsquared), $U_2$ (squared), min-distance, KNN\textsubscript{50}, MSP, MaxLogit, Energy, ODIN, ViM, ReAct, and OpenMax.
For prototype representations, ratio and distance scores use the trained prototype centres.
For CE representations, these diagnostics use train-set class-mean centres.
Logit-based scorers use the native CE head for CE representations and a frozen common linear head (trained on known-class embeddings only) for prototype representations.
This isolates representation quality from scorer choice.

\paragraph{Calibration.}
All thresholds are calibrated on validation known/unknown scores only; test data is never used for threshold selection.
Metrics are AUROC and OSCR, computed on held-out test partitions.
All stored detection scores are OOD-oriented (higher means more unknown); OSCR is therefore integrated using ID confidence $-\mathrm{score}$.
For OSCR curves we report the upper monotone envelope of the empirical CCR--FPR staircase before trapezoidal integration, matching the monotone operating-characteristic convention; AUROC and validation-calibrated threshold metrics are unaffected.

%% =====================================================================
\subsection{Q1: Does Balanced Geometry Emerge in Practice?}
\label{sec:q1}

Table~\ref{tab:geometry} reports representative prototype geometry diagnostics averaged over splits at $d{=}8$.
Three properties predicted by the theory can be directly measured.

\paragraph{Centring.}
The barycentre norm $\|\bar{\mathbf{s}}\|$ should be near zero for balanced codes (A1$'$ and Theorem~\ref{thm:geometry-U2} assume centred geometry).
Simplex and harmonic representations achieve $\|\bar{\mathbf{s}}\| = 0.000$ by construction.
Learned prototypes stay close ($0.01$--$0.11$), while CE class-mean centres are heavily off-centre ($1.8$--$9.4$), confirming that standard cross-entropy training does not produce centred representations.

\paragraph{Equal norms.}
The coefficient of variation of prototype radii ($\mathrm{CV}_r$) measures deviation from equal-norm placement.
Simplex and harmonic codes achieve $\mathrm{CV}_r = 0.000$; learned prototypes reach $\mathrm{CV}_r \le 0.065$; CE centres show $\mathrm{CV}_r = 0.10$--$0.26$.
The theory's ball geometry (Theorem~\ref{thm:geometry-U2}) relies on equal-norm centres together with zero barycentre, and this is satisfied by construction for fixed codes and approximately maintained under end-to-end learning.

\paragraph{Equidistant spacing.}
The coefficient of variation of pairwise distances ($\mathrm{CV}_d$) captures deviation from one-distance/equiangular spacing.
Simplex prototypes achieve $\mathrm{CV}_d = 0.000$ (perfect ETF).
Harmonic codes reach $\mathrm{CV}_d = 0.04$--$0.06$; the regular $C$-gon exposition is balanced and equal-norm, but for $C>3$ its chord distances vary by angular separation, and the multi-frequency implementation improves but does not enforce one-distance equidistance.
CE centres are much more variable ($\mathrm{CV}_d = 0.09$--$0.23$).
CIFAR-100 is an informative stress test: with 80 classes packed into $d{=}8$, \emph{all} methods show elevated $\mathrm{CV}_d$ ($0.20$--$0.23$), confirming that the crowded regime degrades one-distance regularity regardless of the training objective.

\paragraph{Separation parameter.}
The diagnostic threshold $\tau_{\mathrm{sep}}$ from Corollary~\ref{cor:tau-sep} quantifies the tangent-ball margin available before the $U_2$ acceptance balls first touch.
Prototype methods consistently achieve higher $\tau_{\mathrm{sep}}$ ($0.23$--$0.35$) than CE ($0.01$--$0.24$).
CIFAR-100 produces $\tau_{\mathrm{sep}} \approx 0$ for all methods, correctly reflecting that $d \ll C{-}1$ leaves no geometric margin.

\begin{table}[t]
\centering
\caption{Representative prototype geometry diagnostics ($d{=}8$, averaged over splits).
$\|\bar{\mathbf{s}}\|$: barycentre norm.
$\mathrm{CV}_r$: radius coefficient of variation.
$\mathrm{CV}_d$: pairwise distance coefficient of variation.
$\tau_{\mathrm{sep}}$: tangent-ball separation diagnostic (Corollary~\ref{cor:tau-sep}).
}
\label{tab:geometry}
\scriptsize
\setlength{\tabcolsep}{2.5pt}
\begin{tabular}{@{}llrrrr@{}}
\toprule
Dataset & Rep. & $\|\bar{\mathbf{s}}\|$ & $\mathrm{CV}_r$ & $\mathrm{CV}_d$ & $\tau_{\mathrm{sep}}$ \\
\midrule
\multirow{4}{*}{CIFAR+10}
 & CE        & 1.839 & 0.105 & 0.093 & 0.241 \\
 & Simplex   & 0.000 & 0.000 & 0.000 & 0.354 \\
 & Harmonic  & 0.000 & 0.000 & 0.044 & 0.322 \\
 & Learned   & 0.021 & 0.017 & 0.019 & 0.328 \\
\midrule
\multirow{4}{*}{BloodMNIST}
 & CE        & 3.971 & 0.187 & 0.134 & 0.200 \\
 & Simplex   & 0.000 & 0.000 & 0.000 & 0.354 \\
 & Harmonic  & 0.000 & 0.000 & 0.044 & 0.322 \\
 & Learned   & 0.051 & 0.055 & 0.057 & 0.293 \\
\midrule
\multirow{3}{*}{CIFAR-100}
 & CE        & 5.729 & 0.153 & 0.227 & 0.011 \\
 & Harmonic  & 0.000 & 0.000 & 0.234 & 0.006 \\
 & Learned   & 0.041 & 0.130 & 0.200 & 0.020 \\
\bottomrule
\end{tabular}
\end{table}

\paragraph{Summary for Q1.}
The centred-barycentre and equal-norm properties assumed by the exact $U_2$ ball theory are realised by construction for simplex and harmonic codes, and approximately maintained by learned prototypes.
Small $\mathrm{CV}_d$ further diagnoses one-distance/simplex regularity, but it is not required by Theorem~\ref{thm:geometry-U2}.
CE representations are substantially worse on these geometry diagnostics.
The CIFAR-100 stress test confirms that exact one-distance simplex geometry cannot be maintained when $d \ll C{-}1$, even though balanced equal-norm harmonic codes remain feasible.

%% =====================================================================
\subsection{Q2: Does the $C$-gon Construction Work Below the Simplex Barrier?}
\label{sec:q2}

The harmonic construction (Section~5.2) uses the regular $C$-gon as its simplest 2D exposition and extends it via a multi-frequency code in higher dimensions.
It produces balanced equal-norm codes for any $d \ge 2$ regardless of the number of classes $C$.
This is its primary theoretical advantage over exact simplex ETF geometry, which requires $d \ge C{-}1$.

\paragraph{Feasibility.}
On CIFAR-100 ($C{=}80$, $d{=}8$), simplex prototypes are infeasible because $d < C{-}1 = 79$.
Harmonic prototypes remain defined, but in this extreme bottleneck they underperform CE in closed-set accuracy (harmonic: 35.4\% vs.\ CE: 61.0\% at $d{=}8$).
Learned prototypes initialized from harmonic codes recover CE-level closed-set accuracy (61.1\%), showing that feasibility below the simplex barrier is not the same thing as guaranteed discriminative performance.

\paragraph{Detection quality.}
Table~\ref{tab:q2_below_simplex} shows the best scorer for each representation on CIFAR-100.
Learned prototypes with KNN\textsubscript{50} reach 70.43 AUROC, within 1.72 points of CE+ODIN (72.15), despite the simplex ETF being unavailable.
Harmonic prototypes are weaker (64.96 best), reflecting that the multi-frequency harmonic code still packs 80 classes into only 8 embedding dimensions and therefore does not provide sufficient pairwise separation---a limitation captured by the small $\tau_{\mathrm{sep}} = 0.006$.

\begin{table}[t]
\centering
\caption{CIFAR-100 below-simplex stress test ($d{=}8$, $C{=}80$, 5 splits). Simplex ETF is infeasible; harmonic and learned prototypes remain defined.}
\label{tab:q2_below_simplex}
\small
\begin{tabular}{@{}llcc@{}}
\toprule
Representation & Best Scorer & AUROC & ACC \\
\midrule
CE          & ODIN        & $72.15 \pm 1.50$ & $60.97 \pm 1.13$ \\
Learned     & KNN\textsubscript{50}    & $70.43 \pm 0.85$ & $61.09 \pm 1.11$ \\
Harmonic    & MaxLogit    & $64.96 \pm 4.68$ & $35.39 \pm 7.42$ \\
\bottomrule
\end{tabular}
\end{table}

\paragraph{Summary for Q2.}
The $C$-gon construction successfully produces representations below the simplex barrier, confirming its theoretical purpose.
However, in the extreme regime of 80 classes at $d{=}8$, harmonic codes suffer high pairwise distance variability and reduced accuracy, consistent with the near-zero $\tau_{\mathrm{sep}}$ diagnostic from Corollary~\ref{cor:tau-sep}.
Learned prototypes (initialised from harmonic codes) close most of the gap to CE by adapting the centre locations during training.

%% =====================================================================
\subsection{Q3: Ratio Scores vs.\ Post-Hoc Scorers---Scope of the Theory}
\label{sec:q3}

The ratio scores $U$ and $U_2$ are central to the theory because they produce analytically tractable acceptance regions (Theorem~\ref{thm:geometry-U2}) and enable the FAR bounds of Theorem~\ref{thm:far-krr}.
However, the theory does not claim these are optimal OOD detectors in practice.
This subsection isolates the scoring question by comparing all 11 scorers on \emph{the same} frozen embeddings.

\paragraph{Dataset-level summary.}
Table~\ref{tab:main_results} reports the best scorer per representation across all datasets.
CE+ODIN has the highest AUROC on CIFAR+10 ($87.11$), CIFAR+50 ($86.09$), and CIFAR-100 ($72.15$).
CE with class-mean ratio scoring leads on PathMNIST ($85.27$).
BloodMNIST is the one dataset in this slice where prototype-trained embeddings narrowly lead: simplex+KNN\textsubscript{50} achieves $87.43 \pm 10.85$ vs.\ CE+MSP $87.05 \pm 9.00$, although the high split variance makes the difference statistically insignificant.

\begin{table}[t]
\centering
\caption{Best representation--scorer pairing per dataset ($d{=}8$). $\Delta$: gap between best prototype result and best CE result.}
\label{tab:main_results}
\scriptsize
\begin{tabularx}{\linewidth}{@{}lXXc@{}}
\toprule
Dataset & Best CE & Best Prototype & $\Delta$ \\
\midrule
CIFAR+10    & CE+ODIN: $87.11{\scriptstyle\pm1.22}$        & Learned+KNN\textsubscript{50}: $82.59{\scriptstyle\pm1.17}$  & $-4.52$ \\
CIFAR+50    & CE+ODIN: $86.09{\scriptstyle\pm0.80}$        & Simplex+MaxLogit: $82.51{\scriptstyle\pm0.79}$  & $-3.58$ \\
CIFAR-100   & CE+ODIN: $72.15{\scriptstyle\pm1.50}$        & Learned+KNN\textsubscript{50}: $70.43{\scriptstyle\pm0.85}$  & $-1.72$ \\
BloodMNIST  & CE+MSP: $87.05{\scriptstyle\pm9.00}$         & Simplex+KNN\textsubscript{50}: $87.43{\scriptstyle\pm10.85}$  & $+0.38$ \\
PathMNIST   & CE+$U$: $85.27{\scriptstyle\pm6.46}$ & Harmonic+KNN\textsubscript{50}: $82.44{\scriptstyle\pm4.49}$  & $-2.83$ \\
\bottomrule
\end{tabularx}
\end{table}

\paragraph{Ratio score failure analysis.}
Table~\ref{tab:ratio_gap} quantifies the gap between the ratio scores and the best post-hoc scorer on the \emph{same} learned-prototype embeddings.
On CIFAR+10, KNN\textsubscript{50} achieves 82.59 AUROC where $U$ achieves only 63.89---a gap of 18.7 points.
Similar gaps appear on CIFAR+50 ($-21.3$) and CIFAR-100 ($-18.7$).
The gap narrows on medical benchmarks: BloodMNIST ($-10.7$), and on PathMNIST learned-prototype ratio scores are essentially tied with the best scorer on the same embedding.

This discrepancy is attributable to the scoring rule rather than to the representation alone.
The same embeddings that yield 63.89 under $U$ achieve 82.59 under KNN\textsubscript{50}, indicating that the representation contains neighbourhood structure that the ratio score does not fully exploit.
This is consistent with the theory's scope: $U$ and $U_2$ are designed for analytic tractability (ball-shaped acceptance regions, closed-form FAR bounds), not for optimal empirical detection.

\begin{table}[t]
\centering
\caption{Ratio score gap on learned-prototype embeddings ($d{=}8$). The same representation can yield substantially different AUROC depending on the scorer.}
\label{tab:ratio_gap}
\scriptsize
\begin{tabularx}{\linewidth}{@{}lXcc@{}}
\toprule
Dataset & Best Scorer (AUROC) & Ratio $U$ (AUROC) & Gap \\
\midrule
CIFAR+10    & KNN\textsubscript{50}: $82.59$ & $63.89$ & $-18.7$ \\
CIFAR+50    & MSP: $81.70$                   & $60.39$ & $-21.3$ \\
CIFAR-100   & KNN\textsubscript{50}: $70.43$ & $51.77$ & $-18.7$ \\
BloodMNIST  & MSP: $82.30$                   & $71.57$ & $-10.7$ \\
PathMNIST   & $U$: $81.97$ & $81.97$ & $0.0$  \\
\bottomrule
\end{tabularx}
\end{table}

\paragraph{Why can CE+ODIN outperform prototype scorers?}
ODIN combines temperature scaling with input-gradient perturbation, which sharpens softmax confidence for in-distribution samples without retraining.
This score accesses the full Jacobian of the logit function---information that the distance-based ratio score discards.
The theory's Lipschitz bound (Theorem~2) guarantees that small input perturbations produce bounded score changes, but it does not predict whether gradient-based score refinement will improve detection.
The CE advantage is therefore outside the scope of the ratio-geometry theorems and should not be read as evidence against the geometric analysis.

\paragraph{When do prototype embeddings help post-hoc scorers?}
On CIFAR+10, prototype embeddings improve KNN\textsubscript{50} relative to CE embeddings: learned+KNN\textsubscript{50} achieves $82.59$ vs.\ CE+KNN\textsubscript{50} at $81.27$ ($+1.32$).
Similarly, on BloodMNIST, simplex+KNN\textsubscript{50} ($87.43$) exceeds CE+KNN\textsubscript{50} ($85.31$, $+2.12$).
On PathMNIST, harmonic+KNN\textsubscript{50} ($82.44$) also slightly exceeds CE+KNN\textsubscript{50} ($81.93$), although CE with class-mean ratio scoring has the highest AUROC overall ($85.27$).
These comparisons indicate that prototype training can improve neighbourhood structure for non-parametric scoring, even when the dedicated ratio scores underperform.

\paragraph{Summary for Q3.}
The ratio scores $U$ and $U_2$ are often not competitive with modern post-hoc scorers, losing 10--21 AUROC points on several natural-image and BloodMNIST embeddings.
PathMNIST is a notable exception: learned-prototype \(U\), \(U_2\), min-distance, and KNN\textsubscript{50} are all clustered near $82$ AUROC.
These findings do not conflict with the theory: the ratio scores serve their stated purpose of producing analytically tractable acceptance geometry with provable FAR bounds.
For practical open-set evaluation, we therefore report ratio scores alongside standard post-hoc scorers such as KNN or MSP on the same representation.

%% =====================================================================
\subsection{Connection to Theoretical Results}

We summarize how the main theoretical statements are reflected in the experiments.

\paragraph{Theorem~2 (Lipschitz bounds).}
The stability guarantee ensures bounded score variation under small input changes.
The low standard deviations across splits ($\pm 1$--$2$ AUROC for CIFAR variants) are consistent with Lipschitz-stable scoring, though we do not claim a direct causal test.

\paragraph{Theorem~8 (Ball geometry of $U_2$).}
The theorem predicts acceptance regions that are balls centred on prototype locations.
The near-zero barycentre norm and $\mathrm{CV}_r$ of prototype representations (Table~\ref{tab:geometry}) support A1$'$---zero barycentre and equal norm---which is the assumption needed for the ball characterisation.
The $\mathrm{CV}_d$ column is a separate simplex-defect diagnostic rather than a premise of Theorem~\ref{thm:geometry-U2}.

\paragraph{Theorem~\ref{thm:dichotomy} (Sharp dichotomy).}
The CIFAR-100 experiment tests the below-simplex barrier directly: at $C{=}80$ and $d{=}8$, exact regular-simplex one-distance geometry is impossible because $d<C{-}1$.
The elevated pairwise-distance variation and near-zero $\tau_{\mathrm{sep}}$ in Table~\ref{tab:geometry} are consistent with the theorem's message: the simplex barrier affects exact equidistance, not the existence of balanced equal-norm harmonic codes or compact $U_2$ acceptance geometry.

\paragraph{Section~5.2 (harmonic construction).}
The CIFAR-100 experiments (Table~\ref{tab:q2_below_simplex}) directly test the harmonic construction's viability below the simplex barrier, confirming that it produces usable representations at $d{=}8 \ll C{-}1{=}79$.

\paragraph{Corollary~\ref{cor:tau-sep} and Theorem~\ref{thm:far-krr} (separation and FAR bounds).}
The $\tau_{\mathrm{sep}}$ values in Table~\ref{tab:geometry} quantify the geometric tangent-ball separation margin.
Prototype methods achieve $\tau_{\mathrm{sep}} = 0.23$--$0.35$ on datasets where the dimension budget is sufficient (CIFAR+10, BloodMNIST), and $\tau_{\mathrm{sep}} \approx 0$ on CIFAR-100 where it is not, matching the intended diagnostic role of Corollary~\ref{cor:tau-sep}.

\paragraph{Corollary~\ref{cor:threshold} (Threshold selection).}
The val-calibrated protocol implements the corollary's recommendation: thresholds are chosen on held-out validation data, and no test-set information leaks into threshold selection.

%% =====================================================================
\subsection{Limitations}

The FAR bounds (Theorem~\ref{thm:far-krr}) apply to the analytic ratio geometry and do not extend to arbitrary post-hoc scores.
The ratio scores $U$ and $U_2$ are interpretable and theoretically informative but are often empirically outperformed by logit-based or nearest-neighbour scorers that exploit information the ratio scores discard.
Post-hoc scorer performance is dataset-, backbone-, dimension-, and calibration-dependent; we do not claim that prototype embeddings universally dominate CE embeddings under all scorers.
All paper-facing real-data experiments use ResNet-18 at $d{=}8$; larger backbones, higher embedding dimensions, and a full scorer-agnostic dimension sweep may shift the balance between representation methods.

%% ============================================================
%% 10. CONCLUSION
%% ============================================================
\section{Conclusion}\label{sec:conclusion}

This paper gives a theoretical framework for simplex-ratio open-set recognition that applies beyond the regular-simplex regime and covers arbitrary embedding dimensions. By introducing a hierarchy of prototype assumptions---from arbitrary distinct prototypes (A1) through balanced equal-norm codes (A1$'$) to the regular simplex (A1$''$)---we show that the core rejection geometry does not require the simplex assumption $d\ge C-1$.

For arbitrary distinct prototypes, we establish that $U$ is globally Lipschitz and that acceptance regions are compact sets bracketed by explicit prototype-centred ball unions. For balanced equal-norm codes (available in every $d\ge 2$), we prove that $U_2$-sublevel sets are exact unions of Euclidean balls, and a sharp dichotomy shows that perfect simplex symmetry can occur only when $d\ge C-1$. Below this threshold, the simplex-defect parameter $\lambda_j$ gives explicit local comparison bounds for the resulting degradation. We prove FAR upper bounds that decay exponentially with embedding dimension under natural isotropy assumptions, analyse the interplay between the three UCDSC loss components, and provide practical data-driven threshold procedures.

The experimental section tests three theory-facing questions rather than benchmark dominance: (Q1)~prototype-trained representations exhibit the core balanced-code diagnostics---near-zero barycentre norm and equal norms---and, when the dimension budget permits, smaller pairwise-distance variation than CE; (Q2)~harmonic constructions produce usable representations below the simplex barrier, with learned prototypes closing most of the gap to CE even at $d \ll C{-}1$; and (Q3)~the ratio scores $U$ and $U_2$ are analytically useful but scorer-dependent in practice, often trailing KNN/logit-based scorers on the same embeddings while remaining competitive on learned PathMNIST embeddings.

\textbf{Limitations and future work.}
Several assumptions remain limiting. Assumption A5 (sub-Gaussian score tail) is an idealisation that may not hold for poorly trained models or multi-modal class distributions; extending the analysis to weaker tail bounds is a natural next step. The RMS--AM ratio bound of $2/\sqrt{3}$ (Lemma~\ref{lem:rms-am}) holds under A1$''$ and may be improvable for specific values of $C$; for general balanced equal-norm codes we use the coarser global factor $\sqrt{2}$ together with the local comparison of Theorem~\ref{thm:local-comparison}. The harmonic construction provides a concrete low-dimensional family, but optimising the prototype configuration (e.g., via Grassmannian codes or group orbits) for specific $(C,d)$ pairs is an open direction. On the empirical side, post-hoc scorers remain essential, and closing the gap between theoretically motivated ratio scores and empirically effective detectors like ODIN is an important open problem. Tightening the connection between theoretical FAR bounds, learned representation geometry, and empirical rejection rates remains an open challenge that could guide future refinements of prototype-based OSR methods.
\appendix
\section{Full Proof of Lemma~\ref{lem:lipschitz}}\label{app:lipschitz}

Recall $d_j(z):=\|z-s_j\|$, $d_{\min}(z):=\min_j d_j(z)$, and
$\mu(z):=\frac{1}{C-1}\sum_{k\neq j^*(z)} d_k(z)$, where $j^*(z)\in\arg\min_j d_j(z)$.

\textbf{(i) Pointwise bound $d_k(z)\ge D/2$ for $k\neq j^*(z)$, and hence $\mu(z)\ge D/2$.}
Fix $z$ and let $j^*=j^*(z)$. Under A1$''$ (simplex), the common edge length is
$D=\|s_{j^*}-s_k\|$ for all $k\neq j^*$. By the triangle inequality,
\[
D=\|s_{j^*}-s_k\|\le \|z-s_{j^*}\|+\|z-s_k\|=d_{\min}(z)+d_k(z).
\]
Since $d_k(z)\ge d_{\min}(z)$, we have $D\le 2d_k(z)$, hence $d_k(z)\ge D/2$ for all $k\neq j^*$.
Averaging over $k\neq j^*$ yields $\mu(z)\ge D/2$.

\textbf{(ii) Global Lipschitzness of $U(z)=d_{\min}(z)/\mu(z)$.}
First, $d_{\min}$ is $1$-Lipschitz because it is the pointwise minimum of $1$-Lipschitz maps:
\[
|d_{\min}(z)-d_{\min}(z')|\le \|z-z'\|.
\]
Next, define for each $j\in[C]$ the function
\[
\mu_j(z):=\frac{1}{C-1}\sum_{k\neq j} d_k(z).
\]
Each $\mu_j$ is $1$-Lipschitz as an average of $C-1$ functions each $1$-Lipschitz, with coefficients $1/(C-1)$ summing to $1$.
Moreover, because removing the smallest distance maximizes the remaining average,
\[
\mu(z)=\mu_{j^*(z)}(z)=\max_{j\in[C]} \mu_j(z).
\]
The pointwise maximum of $1$-Lipschitz functions is $1$-Lipschitz, so
$|\mu(z)-\mu(z')|\le \|z-z'\|$.

Finally, note $d_{\min}(z)\le \mu(z)$ and $d_{\min}(z')\le \mu(z')$ (each non-nearest distance is $\ge d_{\min}$).
Using the identity (for $0\le a\le b$, $0\le a'\le b'$):
\[
\begin{aligned}
\left|\frac{a}{b}-\frac{a'}{b'}\right|
&=\frac{|ab'-a'b|}{bb'}\\
&\le \frac{|a-a'|b'+a'|b-b'|}{bb'}\\
&\le \frac{|a-a'|+|b-b'|}{\min\{b,b'\}},
\end{aligned}
\]
where the last step uses $a'\le b'$.
Applying (i) so that $\min\{\mu(z),\mu(z')\}\ge D/2$, we obtain
\[
\begin{aligned}
|U(z)-U(z')|
&\le \frac{|d_{\min}(z)-d_{\min}(z')|+|\mu(z)-\mu(z')|}{D/2}\\
&\le \frac{2}{D}\Bigl(\|z-z'\|+\|z-z'\|\Bigr)\\
&= \frac{4}{D}\|z-z'\|.
\end{aligned}
\]
Thus $U$ is globally Lipschitz with constant $L_U=4/D$.

\section{Full Proof of Theorem~\ref{thm:geometry-U2}}\label{app:geometry}

\noindent\textbf{Full derivation of Lemma~\ref{lem:rms-am} ($U$--$U_2$ equivalence).}
Fix $z\in\R^d$ and write $j^*=j^*(z)$. Let $a_k:=d_k(z)$ for $k\neq j^*$ denote the $C-1$ non-nearest distances.
Define
\[
\mathrm{AM}:=\frac{1}{C-1}\sum_{k\neq j^*} a_k,\qquad
\mathrm{RMS}:=\sqrt{\frac{1}{C-1}\sum_{k\neq j^*} a_k^2}.
\]
If $d_{\min}(z)=0$, then $U(z)=\sqrt{U_2(z)}=0$ and both bounds are immediate.
Assume below that $d_{\min}(z)>0$.
By construction, $U(z)=d_{\min}(z)/\mathrm{AM}$ and $\sqrt{U_2(z)}=d_{\min}(z)/\mathrm{RMS}$, hence
$U(z)/\sqrt{U_2(z)}=\mathrm{RMS}/\mathrm{AM}$.

\emph{Lower bound.}
Always $\mathrm{AM}\le \mathrm{RMS}$ (by Cauchy--Schwarz), so $\sqrt{U_2(z)}\le U(z)$.

\emph{Upper bound via bounded spread.}
Let $m:=\min_{k\neq j^*} a_k$ and $M:=\max_{k\neq j^*} a_k$.
We claim $M/m\le 3$. Indeed, for any $k\neq j^*$:
\[
a_k=d_k(z)\le d_{j^*}(z)+\|s_k-s_{j^*}\|=d_{\min}(z)+D\le m + D,
\]
since $m\ge d_{\min}(z)$. Also, by Lemma~\ref{lem:lipschitz}(i), $m\ge D/2$, hence $D\le 2m$ and
$a_k\le m+2m=3m$. Taking the maximum over $k$ gives $M\le 3m$.

Now scale by $m$ so that $x_k:=a_k/m\in[1,3]$.
Then
\[
\frac{\mathrm{RMS}}{\mathrm{AM}}
=
\frac{\sqrt{\frac{1}{C-1}\sum x_k^2}}{\frac{1}{C-1}\sum x_k}.
\]
For a fixed mean, the second moment over $[1,3]$ is maximized when all mass lies at the interval endpoints.
Hence the worst case for $\sqrt{\E[x^2]}/\E[x]$ is attained by a two-point distribution on $\{1,3\}$.
Writing $p$ for the fraction of entries equal to $3$, this yields
\[
\begin{aligned}
\frac{\mathrm{RMS}}{\mathrm{AM}}
&\le
\max_{p\in[0,1]}\frac{\sqrt{(1-p)\cdot 1^2+p\cdot 3^2}}{(1-p)\cdot 1+p\cdot 3}\\
&=
\max_{p\in[0,1]}\frac{\sqrt{1+8p}}{1+2p}
=\frac{2}{\sqrt{3}},
\end{aligned}
\]
achieved at $p=1/4$.
Therefore $U(z)\le \frac{2}{\sqrt{3}}\sqrt{U_2(z)}$.
The set inclusions now follow in two steps:
\[
U_2\le \frac{3\theta^2}{4}
\;\Rightarrow\;
U\le\frac{2}{\sqrt{3}}\sqrt{U_2}
\le\frac{2}{\sqrt{3}}\cdot\frac{\theta\sqrt{3}}{2}
=\theta,
\]
and
\[
U\le\theta
\;\Rightarrow\;
\sqrt{U_2}\le U\le\theta
\;\Rightarrow\;
U_2\le\theta^2.
\]
\qed

\medskip

\textbf{Part (i).}
$\|z-s_j\|^2=\|z\|^2+R^2-2\langle z,s_j\rangle$.
Hence $d_{\min}^{(2)}=\|z\|^2+R^2-2\alpha(z)$.
Since $\sum_j s_j=0$: $\sum_j\|z-s_j\|^2=C\|z\|^2+CR^2$.
Therefore:
\begin{align*}
\nu(z)&=\frac{1}{C-1}(C\|z\|^2+CR^2-\|z\|^2-R^2+2\alpha(z))\\
&=\|z\|^2+R^2+\frac{2}{C-1}\alpha(z).
\end{align*}
Note $\alpha(z)\ge 0$ (since the $C$ values $\langle z,s_j\rangle$ sum to zero), so $\nu(z)>0$.

\textbf{Part (ii).}
If $\rho=0$, then $U_2(z)=0$ iff the nearest squared distance is zero, so the sublevel set is exactly $\{s_1,\dots,s_C\}$, matching $\mathbb B_j(0)=\{s_j\}$.
Now assume $\rho\in(0,1)$.
$U_2(z)\le\rho$ rearranges to $\alpha(z)\ge\frac{t(\rho)}{2}(\|z\|^2+R^2)$, i.e.\
$\exists j:\langle z,s_j\rangle\ge\frac{t}{2}(\|z\|^2+R^2)$.
Rearranging: $\|z\|^2-\frac{2}{t}\langle z,s_j\rangle+R^2\le 0$.
Completing the square:
$\|z-s_j/t\|^2=\|z\|^2-\frac{2}{t}\langle z,s_j\rangle+R^2/t^2$,
so the condition becomes $\|z-s_j/t\|^2\le R^2(1/t^2-1)=:r_j^2$.
This is exactly $z\in\mathbb B_j(\rho)$.
Since $t\in(0,1)$, each ball has finite radius; the union is compact.

\textbf{Part (iii).}
On $\|z\|=R$: $\frac{t}{2}(R^2+R^2)=tR^2$.

\textbf{Balanced-code comparison used in Part (iv).}
Assume A1$'$ and keep the notation
\[
\mathrm{AM}:=\frac{1}{C-1}\sum_{k\neq j^*} a_k,\qquad
\mathrm{RMS}:=\sqrt{\frac{1}{C-1}\sum_{k\neq j^*} a_k^2}.
\]
Let $r:=\|z\|$. Since $\|s_k\|=R$ for every $k$, triangle inequality gives $a_k\le r+R$, hence
\[
\mathrm{RMS}^2=\frac{1}{C-1}\sum_{k\neq j^*} a_k^2
\le (r+R)\frac{1}{C-1}\sum_{k\neq j^*} a_k
=(r+R)\mathrm{AM}.
\]
By Part (i),
\[
\mathrm{RMS}^2=\nu(z)=r^2+R^2+\frac{2}{C-1}\alpha(z)\ge r^2+R^2,
\]
because $\alpha(z)\ge 0$. Therefore
\[
\frac{U(z)}{\sqrt{U_2(z)}}=\frac{\mathrm{RMS}}{\mathrm{AM}}
\le \frac{r+R}{\mathrm{RMS}}
\le \frac{r+R}{\sqrt{r^2+R^2}}
\le \sqrt{2}.
\]
Thus $U(z)\le \sqrt{2}\sqrt{U_2(z)}$ under A1$'$.

\textbf{Part (iv).}
By Proposition~\ref{prop:global-comparison}, $\{U\le\theta\}\subseteq\{U_2\le\theta^2\}$.
For the inner inclusion under A1$'$,
\[
U_2\le \theta^2/2
\;\Rightarrow\;
U\le \sqrt{2}\sqrt{U_2}
\le \sqrt{2}\cdot \frac{\theta}{\sqrt{2}}
=\theta.
\]
Hence $\{U_2\le\theta^2/2\}\subseteq\{U\le\theta\}$.
Under the additional simplex assumption A1$''$, Lemma~\ref{lem:rms-am} improves this to $\{U_2\le 3\theta^2/4\}\subseteq\{U\le\theta\}$.
Apply part (ii) to the relevant sublevel sets.

\section{Full Proof of Theorem~\ref{thm:far-krr}}\label{app:far}

\textbf{Part (i): KRR control.}
Let $m_y:=\E[U(Z)\mid Y=y]$ and $t_{\varepsilon,y}:=\tau_y\sqrt{2\log(1/\varepsilon)}$.
By Lemma~\ref{lem:quantile}, any threshold $\bar\theta_\varepsilon\ge\max_y(m_y+t_{\varepsilon,y})$ satisfies
$\Prob(U(Z)>\bar\theta_\varepsilon\mid Y=y)\le\varepsilon$ for each $y$.
If $\bar\theta_\varepsilon\le 1$, take $\theta_\varepsilon:=\bar\theta_\varepsilon$ and average:
\[
\KRR(\theta_\varepsilon)\le\sum_y\Prob(Y=y)\cdot\varepsilon=\varepsilon.
\]
If $\bar\theta_\varepsilon>1$, take $\theta_\varepsilon:=1$; since $U(Z)\in[0,1]$, this gives $\KRR(\theta_\varepsilon)=0\le\varepsilon$.
The common-scale version follows from the conservative bound $t_{\varepsilon,y}\le\tau\sqrt{2\log(1/\varepsilon)}=L_U\sigma\sqrt{2\log(1/\varepsilon)}$.

\textbf{Part (ii): FAR via geometry.}
$\sqrt{U_2}\le U$ (Proposition~\ref{prop:global-comparison}) $\Rightarrow$ $\{U\le\theta\}\subseteq\{U_2\le\theta^2\}$.
Apply Theorem~\ref{thm:geometry-U2}(ii).

\textbf{Part (iii): FAR under (U-Norm) + A6$'$ for $\theta\in[0,1)$.}
\emph{Step 1: Reduce to spherical caps.}
By the convention $\Phi(0):=0$ in Assumption~\ref{ass:unknown}, and since $\theta<1$, the point $0$ is rejected because $U(0)=1$.
Thus the acceptance event $\{U\le\theta\}$ forces $Z_U\neq 0$, where $\|\Phi(Z_U)\|=R$.
By Theorem~\ref{thm:geometry-U2}(iii), on the sphere $\|z\|=R$:
\[
U_2(z)\le\theta^2\iff\exists\, j\in[C]:\langle z,s_j\rangle\ge t(\theta^2)R^2.
\]
Let $\widetilde{Z}=\Phi(Z_U)/R\in S^{d-1}$ and $v_j=s_j/R$.
The condition becomes $\exists\, j:\langle\widetilde{Z},v_j\rangle\ge t(\theta^2)$.

\emph{Step 2: Union bound.}
Since acceptance forces $Z_U\neq 0$ (Step 1), $\FAR(\theta)\le\Prob(U_2(\Phi(Z_U))\le\theta^2)$ by Proposition~\ref{prop:global-comparison}.
On $\|z\|=R$, this is $\Prob(\bigcup_{j=1}^C\{\langle\widetilde{Z},v_j\rangle\ge t\}\mid Z_U\neq 0)\Prob(Z_U\neq 0)$.
Since $\Prob(Z_U\neq 0)\le 1$:
\begin{align*}
\FAR(\theta)
&\le\Prob\!\left(\bigcup_{j=1}^C\{\langle\widetilde{Z},v_j\rangle\ge t\}\;\middle|\;Z_U\neq 0\right)\\
&\le\sum_{j=1}^C\Prob(\langle\widetilde{Z},v_j\rangle\ge t\mid Z_U\neq 0). \tag{union bound}
\end{align*}

\emph{Step 3: Apply (U-Norm) and A6$'$.}
By (U-Norm), for each $j$:
\begin{align*}
\Prob(\langle\widetilde{Z},v_j\rangle\ge t\mid Z_U\neq 0)
&\le \pcap(t;d)+\delta_{\mathrm{cap}}.
\end{align*}
This is precisely where $\delta_{\mathrm{cap}}$ enters: it measures how far the true angular distribution of normalized unknowns deviates from the ideal uniform sphere.
By A6$'$, $\pcap(t;d)\le\exp(-(d-1)t^2/2)$.
Summing over $j$:
\[
\FAR(\theta)\le C\bigl(\exp(-(d-1)t(\theta^2)^2/2)+\delta_{\mathrm{cap}}\bigr).
\]

\emph{Step 4: Lower bounds under exact isotropy.}
Assume $\delta_{\mathrm{cap}}=0$ and $\Prob(Z_U\neq 0)=1$.
By Theorem~\ref{thm:geometry-U2}(iv), under A1$'$ we have $\{U_2\le\theta^2/2\}\subseteq\{U\le\theta\}$.
Hence
\[
\Prob(U_2(\Phi(Z_U))\le\theta^2/2)\le \FAR(\theta).
\]
On the sphere $\|z\|=R$, the event $\{U_2(z)\le\theta^2/2\}$ is the union of caps
\[
\bigcup_{j=1}^C\{\langle \widetilde Z,v_j\rangle\ge t(\theta^2/2)\}.
\]
The probability of this union is at least the probability of any one cap, so exact isotropy gives
\[
\pcap(t(\theta^2/2);d)\le \FAR(\theta).
\]

If in addition A1$''$ holds, then Lemma~\ref{lem:rms-am} implies the smaller inner inclusion $\{U_2\le(\theta/3)^2\}\subseteq\{U\le\theta\}$.
The corresponding cap threshold satisfies $t((\theta/3)^2)\ge t(1/9)=8(C-1)/(9C-8)\ge 4/5>1/\sqrt{2}$.
Thus each cap half-angle is strictly less than $\pi/4$.
Simplex cap centers satisfy $\langle v_i,v_j\rangle=-1/(C-1)\le 0$, so their pairwise angular separation is at least $\pi/2$.
Hence these simplex caps are pairwise disjoint, and exact isotropy yields the disjoint-cap simplex lower bound
\[
C\,\pcap(t((\theta/3)^2);d)\le \FAR(\theta).
\]

\textbf{Part (iv): FAR under (U-RadIso) + A6$'$ for $\theta\in[0,1)$.}
Under (U-RadIso), $\Phi=\mathrm{id}$ (Assumption~\ref{ass:unknown}), so $\FAR(\theta)=\Prob(U(Z_U)\le\theta)$.
By Proposition~\ref{prop:global-comparison}, $\{U\le\theta\}\subseteq\{U_2\le\theta^2\}$, so $\FAR(\theta)\le\Prob(U_2(Z_U)\le\theta^2)$.
Write $Z_U=R_UV_U$ with $R_U=\|Z_U\|$ and $V_U\mid R_U$ uniform on $S^{d-1}$.
For $R_U=r>0$:
$\langle Z_U,s_j\rangle\ge\frac{t(\theta^2)}{2}(r^2+R^2)\iff\langle V_U,v_j\rangle\ge\tau_\theta(r)$
where $\tau_\theta(r)=\frac{t(\theta^2)}{2}\cdot\frac{r^2+R^2}{rR}$.
For $R_U=0$: $Z_U=0$, and since $\theta<1$ we have $U(0)=1>\theta$, so the point is rejected.
Condition on $R_U=r$, apply the union bound over $j$, use isotropy of $V_U\mid R_U$, and apply A6$'$:
\[
\FAR(\theta)\le C\,\E\!\left[\exp\!\left(-\frac{d-1}{2}\min\{1,\tau_\theta(R_U)\}^2\right)\right].
\]
The clipping $\min\{1,\cdot\}$ ensures $\pcap$ receives an argument in $[0,1]$; it is necessary when $R_U$ is far from $R$, making $\tau_\theta(R_U)>1$.

\section{Proof of Theorem~\ref{thm:combined}}\label{app:combined}

Recall that Theorem~\ref{thm:combined} is stated under A1--A2 and uses
\[
\Delta:=\min_{i\neq j}\|s_i-s_j\|,\qquad D_{\max}:=\max_{i\neq j}\|s_i-s_j\|.
\]

\textbf{Part (i).}
$d_{\min}(f_i)\le\|f_i-s_{y_i}\|$ (the nearest prototype is at most as far as the true one).
By the proof of Theorem~\ref{thm:generic-ball}, $\mu(f_i)\ge \Delta/2$.
Hence
\[
U(f_i)\le \frac{2\|f_i-s_{y_i}\|}{\Delta}.
\]

\textbf{Part (ii).}
$L_o=0$ implies $\|f_k^{\mathrm{bg}}-s_j\|\ge\sqrt{m}$ for all $j,k$ (Proposition~\ref{prop:outlier}).
Hence $d_{\min}(f_k^{\mathrm{bg}})\ge\sqrt{m}$.
The triangle inequality gives $\mu(f_k^{\mathrm{bg}})\le d_{\min}(f_k^{\mathrm{bg}})+D_{\max}$, and therefore
\[
U(f_k^{\mathrm{bg}})
\ge
\frac{\sqrt{m}}{D_{\max}+\sqrt{m}}.
\]
If $\sqrt{m}\ge D_{\max}$, then $\sqrt{m}/(D_{\max}+\sqrt{m})\ge D_{\max}/(2D_{\max})=1/2$.

\textbf{Part (iii).}
Part (i) gives the known-feature upper bound $2\max_i\|f_i-s_{y_i}\|/\Delta$.
Part (ii) gives the background lower bound $\sqrt{m}/(D_{\max}+\sqrt{m})$.
Any threshold strictly between these two quantities separates the two sets, and the condition that such a threshold exists is exactly~\eqref{eq:sep-condition}.
In the simplex specialization A1$''$, one has $\Delta=D_{\max}=D$, recovering the simpler formulas stated in the main text.

%% ============================================================
%% REFERENCES (using thebibliography for self-containment)
%% ============================================================

\end{document}

%% file: table1_sigma_proxy.tex
\begin{tabular}{ccccc}
\toprule
Class & $n$ & $\hat m_y$ & $\hat\tau_y$ & $\hat\sigma_y$ \\
\midrule
0 & 30 & 0.038 & 0.0101 & 0.207 \\
1 & 30 & 0.040 & 0.0112 & 0.228 \\
2 & 30 & 0.042 & 0.0165 & 0.338 \\
3 & 30 & 0.039 & 0.0098 & 0.200 \\
\midrule
\multicolumn{3}{l}{$\hat\sigma=\max_y\hat\sigma_y$} & \multicolumn{2}{c}{$ 0.338 $} \\
\multicolumn{3}{l}{$\hat\theta_{0.01}$} & \multicolumn{2}{c}{$ 0.092 $} \\
\bottomrule
\end{tabular}

%% file: table2_far_vs_d.tex
\begin{tabular}{rccc}
\toprule
$d$ & FAR (empirical) & FAR (bound) & $\log_{10}$(bound) \\
\midrule
3 & 0.1048 & 1 & $0.0$ \\
5 & 0.0088 & 0.6645 & $-0.2$ \\
8 & $<\!2.0\!\times\!10^{-4}$ & 0.1729 & $-0.8$ \\
16 & $<\!2.0\!\times\!10^{-4}$ & 0.0048 & $-2.3$ \\
32 & $<\!2.0\!\times\!10^{-4}$ & $3.6\!\times\!10^{-6}$ & $-5.4$ \\
64 & $<\!2.0\!\times\!10^{-4}$ & $2.1\!\times\!10^{-12}$ & $-11.7$ \\
\bottomrule
\end{tabular}